 \documentclass[10pt,onecolumn]{article}
 \oddsidemargin=-0.0in \evensidemargin=0.in \topmargin=.0in
 \headsep=0.in \textwidth=6.5in \textheight=8.75in
 
 \author{
 Arpan Mukherjee  \qquad  Ali Tajer
 \thanks{The authors are with the Department of Electrical, Computer, and Systems Engineering, Rensselaer Polytechnic Institute, Troy, NY 12180.}
 }

\usepackage{ISG_style}
\usepackage{times}
\usepackage[T1]{fontenc}
\usepackage{url}
\usepackage{ifthen}
\usepackage{cite}
\usepackage[cmex10]{amsmath}
\usepackage{amsthm, amssymb, amsfonts,dsfont,mathrsfs,bigints}
\usepackage{graphicx}
\usepackage{algorithm,algorithmic}
\usepackage{appendix}
\usepackage{color}
\usepackage[colorlinks=true,allcolors=blue]{hyperref}
\usepackage{caption}
\usepackage{subcaption}
\captionsetup{justification=centering}

\hyphenation{BAI}

\newtheorem{theorem}{Theorem}

\newtheorem{definition}{Definition}
\newtheorem{lemma}{Lemma}

\newtheorem{remark}{Remark}

\newcommand*\diff{\mathop{}\!\mathrm{d}}

\makeatletter
\makeatother
\DeclareMathOperator*{\argsup}{arg\,sup}
\DeclareMathOperator*{\arginf}{arg\,inf}
\DeclareMathOperator*{\argmax}{arg\,max}
\DeclareMathOperator*{\argmin}{arg\,min}
\DeclareMathAlphabet\mathbfcal{OMS}{cmsy}{b}{n}

\title{\bf \Large Efficient Best Arm Identification in Stochastic Bandits:\\ Beyond  $\beta-$optimality}

\date{}

\begin{document}
\allowdisplaybreaks
\maketitle

\begin{abstract}
This paper investigates a hitherto unaddressed aspect of best arm identification (BAI) in stochastic multi-armed bandits in the fixed-confidence setting. Two key metrics for assessing bandit algorithms are computational efficiency and performance optimality (e.g., in sample complexity). In stochastic BAI literature, there have been advances in designing algorithms to achieve optimal performance, but they are generally computationally expensive to implement (e.g., optimization-based methods). There also exist approaches with high computational efficiency, but they have provable gaps to the optimal performance (e.g., the $\beta$-optimal approaches in top-two methods). This paper introduces a framework and an algorithm for BAI  that achieves optimal performance with a computationally efficient set of decision rules. The central process that facilitates this is a routine for sequentially estimating the optimal allocations up to sufficient fidelity. Specifically, these estimates are accurate enough for identifying the best arm (hence, achieving optimality) but not overly accurate to an unnecessary extent that creates excessive computational complexity (hence, maintaining efficiency). Furthermore, the existing relevant literature focuses on the family of exponential distributions. This paper considers a more general setting of any arbitrary family of distributions parameterized by their mean values (under mild regularity conditions). The optimality is established analytically, and numerical evaluations are provided to assess the analytical guarantees and compare the performance with those of the existing ones. 
\end{abstract}

\section{Introduction}

We consider the problem of best arm identification (BAI) in stochastic multi-armed bandits in the fixed-confidence setting. The bandit instances are assumed to be generated by the single-parameter exponential family (SPEF). In BAI, the objective is to identify the \emph{best} arm (i.e., the arm with the largest mean value) within a pre-specified confidence level with the fewest samples.
Performance optimality and computational efficiency are the two central metrics for assessing and comparing different bandit algorithms.  This paper focuses on an open aspect of BAI in the fixed-confidence parametric setting, which pertains to achieving optimal performance with a computationally efficient algorithm.  In reviewing the existing literature, we will specify these two aspects of the existing algorithms. These will furnish the context to highlight that the current literature on stochastic BAI lacks an algorithm that simultaneously achieves optimal performance and maintains low computational complexity. Subsequently, based on this discussion, we will describe our contributions.

\paragraph{Fixed-confidence versus Fixed-budget.} BAI was first studied as a pure exploration bandit problem in~\cite{Bubeck}. Subsequently, it has been investigated in two broad settings: the \emph{fixed-confidence} setting and the \emph{fixed-budget} setting. The goal in the fixed-confidence setting is to identify the best arm within a specified guarantee on the decision confidence while using as few samples as possible to arrive at a decision. Representative studies in the fixed-confidence setting include~\cite{Gabillon,Kalyanakrishnan2012,pmlr-v49-garivier16a, LinGapE,Jamieson2014,mukherjee2022}. On the other hand, in the fixed-budget setting, the sampling budget is pre-specified. The goal is to minimize the probability of error in the terminal decision. Some representative studies in this setting include~\cite{Bubeck,pmlr-v33-hoffman14,FB_JKS}. Our focus is on the fixed-confidence setting, the literature on which is discussed next.

\paragraph{Bayesian versus Non-Bayesian.} BAI in the fixed-confidence setting can be categorized into Bayesian and non-Bayesian models. Bayesian settings assume a prior distribution on the space of parameters and make arm selection decisions based on the posterior distribution computed from the prior and the observed rewards. In contrast, the non-Bayesian settings do not use posterior sampling for arm selection. Top-two sampling was first introduced in~\cite{russo2016} for the Bayesian setting. The principle of top-two sampling involves dynamically, over time, identifying a {\em leader} and a {\em challenger} as the top arm candidates. Subsequently, the sampling strategy randomizes between these two arms. The top-two Thompson sampling (TTTS) algorithm, proposed and analyzed in~\cite{russo2016,pmlr-v108-shang20a}, involves sampling the posterior for defining the leader and the challenger. Despite the simplicity of TTTS, it faces the computational challenge of repeatedly sampling from the posterior in defining a challenger. To mitigate this,~\cite{pmlr-v108-shang20a} proposed a computationally efficient alternative called the top-two transportation cost (T3C) algorithm. The empirical performance of T3C was further improved using a penalized transportation cost, promoting exploration, in~\cite{jourdan2022}. However, T3C and its improvement only achieve $\beta-$optimality. The nature of a $\beta$-optimality guarantee is as follows: if a $\beta$ fraction of the sampling resources are reserved for the top arm, then these algorithms can determine how to optimality allocate the remaining $(1-\beta)$ fraction among the rest of the arms. Hence, these algorithms are said to be only $\beta-$optimal, where $\beta\in(0,1)$.

In the non-Bayesian setting, \cite{pmlr-v49-garivier16a} has proposed the track-and-stop (TaS) algorithm for BAI, with optimal performance in the asymptote of diminishing probability of error. More investigations on TaS-based algorithms include~\cite{pmlr-v117-agrawal20a} and~\cite{jedra2020optimal}. The TaS sampling strategies, in general, hinge on tracking the optimal allocation of sampling resources over time. Maintaining such allocation in a bandit setting with $K$ arms necessitates solving $K$ equations at each time using the bisection method. Hence, these approaches are generally computationally expensive. It was shown in~\cite{jedra2020optimal} that the tracking procedure could only be performed intermittently at exponentially spaced intervals. This reduces the computational complexity for TaS.
Nevertheless, the approach of~\cite{jedra2020optimal} applies to only linear bandits with Gaussian noise. For stochastic bandits, an asymptotically optimal and computationally efficient alternative was proposed in~\cite{menard2019}, which is based on the lazy sub-gradient ascent algorithm for arm selection. The results of this study on BAI are limited to only Gaussian settings.

To address the computational challenge, the gamification approach was proposed in~\cite{degenne2019,pmlr-v119-degenne20a}. In this approach, BAI is viewed as an unknown two-player game comprising a $\bw$ player and a $\blambda$ player. While the $\bw$ player samples a distribution from the probability simplex consisting of possible allocations, the $\blambda$ player chooses a corresponding bandit instance from the class of instances having a different best arm, with the objective of converging to a saddle point. More recently, a Frank-Wolfe-based algorithm was proposed in~\cite{FW}, which solves BAI using a single iteration of the Frank-Wolfe algorithm. The implementation of this method involves a two-player zero-sum game in each iteration, which requires solving a linear program. To further reduce the computational complexity, the top-two approach has also been used to design algorithms for the non-Bayesian setting. Representative top-two non-Bayesian algorithms include top-two sequential probability ratio test (TT-SPRT)~\cite{mukherjee_SPRT_conf,mukherjee2022,jourdan2022}, top-two expected improvement (TTEI)~\cite{TTEI}, and the empirical best leader with an improved transportation cost (EB-TCI)~\cite{jourdan2022}. While efficient, these algorithms achieve optimal performance only when an instance-dependent parameter $\beta$ is known a priori. Specifically, at their core, these algorithms identify an optimal allocation of the sampling resources among the arms. These algorithms enjoy $\beta$-optimality guarantees. We will discuss and demonstrate empirically that, in some settings, the choice of $\beta$ can critically affect performance (sample complexity).

\paragraph{Parametric versus Non-parametric.} Algorithms for BAI can also be categorized based on whether the bandit instance follows \emph{parametric} or \emph{non-parametric} families of distributions. In the case of the non-parametric family, the upper confidence bound (UCB) based approaches have been investigated (e.g.,~\cite{Gabillon,LinGapE}) and shown to be optimal up to constant factors for the family of sub-Gaussian bandits. More recently, the study in~\cite{pmlr-v117-agrawal20a} has considered the class of distributions satisfying a specific functional boundedness property.
This study proposes a tracking-based sampling strategy along with a likelihood ratio-based stopping rule, which was shown to be asymptotically optimal for this specified class of distributions. For parametric bandits, investigations have focused on the single parameter exponential family (e.g., \cite{pmlr-v49-garivier16a,russo2016,Kaufmann_JMLR, mukherjee2022,jourdan2022}). Some investigations have considered the case of Gaussian bandits with known variances and unknown means, e.g., \cite{TTEI,pmlr-v108-shang20a}. In both of these settings, the top-two sampling strategy is $\beta-$optimal, while the TaS algorithm is asymptotically optimal, despite being computationally expensive.

\begin{figure}[h]
    \centering
        \includegraphics[width=0.5\textwidth]{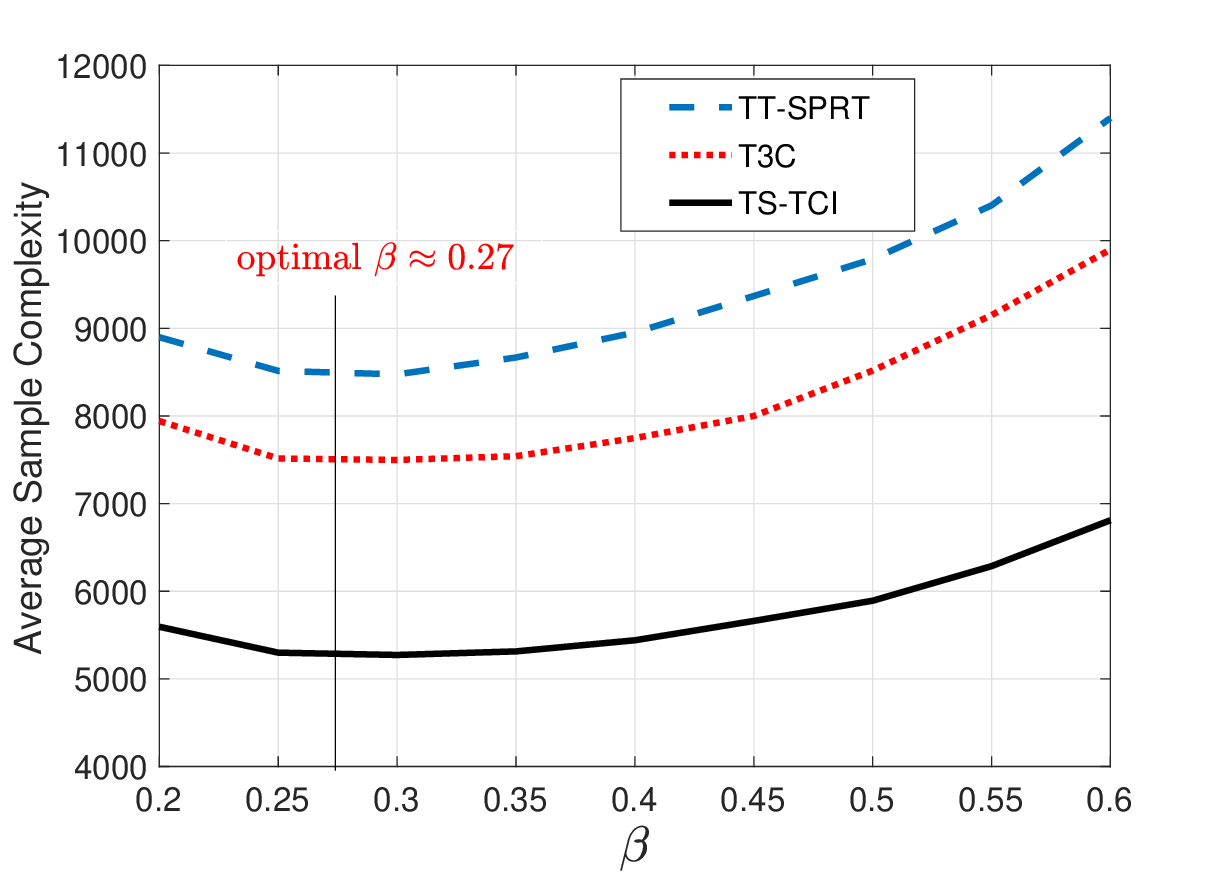} 
        \caption{Sample complexity versus $\beta$.}
        \label{fig:1}
    \end{figure}
    
\paragraph{Contributions.} This paper is focused on fixed-confidence, non-Bayesian, and parametric settings. Hence, the directly relevant literature to this scope includes~\cite{pmlr-v49-garivier16a,TTEI,pmlr-v108-shang20a,mukherjee2022,jourdan2022}. As discussed, these studies either achieve optimality at the expense of high computational complexity (e.g., \cite{pmlr-v49-garivier16a,FW}) or maintain computational efficiency but exhibit optimality gaps (e.g., \cite{russo2016,TTEI,pmlr-v108-shang20a,mukherjee2022,jourdan2022}). We propose an algorithm, referred to as the \emph{transportation cost balancing (TCB)} algorithm, that achieves the optimal sample complexity with computationally efficient sampling and decision rules. Specifically, the TCB algorithm avoids solving an optimization problem to form the arm selection decisions in the next round. Furthermore, compared to the existing class of efficient top-two algorithms~\cite{russo2016,pmlr-v108-shang20a,TTEI,jourdan2022,mukherjee2022}, TCB exhibits  optimality in contrast to $\beta-$optimality. Leaping from $\beta$-optimality to optimality, in some bandit instances, leads to significant improvement in sample complexity. To showcase the gap between a $\beta-$optimal solution and an optimal solution, in Figure~\ref{fig:1} we demonstrate how the sample complexity of the $\beta-$optimal solutions vary with respect to $\beta$, demonstrating (i) the sensitivity of the sample complexity to $\beta$, and (ii) a substantial gap (e.g., an order of magnitude) between the optimal and the $\beta-$optimal guarantees. In the face of not knowing the optimal choice of $\beta$, $\beta=0.5$ has been prescribed as a reasonable choice for the top-two algorithms~\cite{russo2016}. However, as expected, this can be noticeably different from the optimal choice in some instances. For instance, Figure~\ref{fig:1} empirically shows the sub-optimality of choosing $\beta=0.5$ in three existing approaches that ensure $\beta-$optimality. In all these cases, the optimal value of $\beta\approx 0.27$.

As the second contribution, we also generalize the probability models to any arbitrary class of parametric models (that satisfy certain regularity conditions). These models subsume the exponential family, which is the only parametric class for which algorithms and performance guarantees are available in the literature. For this generalization, we propose a novel concentration inequality for the generalized log-likelihood ratio (GLLR)-based test statistic for BAI~\cite{pmlr-v49-garivier16a,Kaufmann_JMLR} that holds for any general parametric bandit instance that satisfies a uniform continuity assumption on the divergence measure of the model, and some mild regularity conditions on the arm distributions.

\paragraph{Methodology: Transport Cost Balancing (TCB).} For fixed-confidence BAI, for any bandit instance $\bnu$, the universal lower bound on the average sample complexity is inversely proportional to a problem complexity measure $\Gamma(\bnu)$ (specified later in~(\ref{eq:problem complexity})). Achieving this lower bound is predicated on sequentially determining an optimal allocation of the sampling resources among the arms. While existing optimal algorithms compute these optimal allocations, it is computationally expensive. To mitigate this computational challenge, the top-two sampling rules assign a sampling proportion $\beta\in(0,1)$ to the best arm and then determine the allocation of the remaining $(1-\beta)$ fraction over the rest. The analysis of the top-two methods shows that this facilitates convergence to the $\beta-$optimal allocation. In these methods, the sample complexity is inversely proportional to the transportation cost $\Gamma_{\beta}(\bnu)$, where $\Gamma(\bnu)\geq\Gamma_{\beta}(\bnu)$. Equality holds at the optimal value $\beta^\star$, which depends on the bandit instance and is unknown a priori. 

The central process in our algorithm is a routine for estimating the optimal allocations, including the optimal value of $\beta$, up to a sufficient fidelity that enables confidently identifying the best arm. Our algorithm guides its decisions by balancing transportation costs over time. These decisions lead to efficiently estimating the optimal sampling proportion up to sufficient fidelity. The fundamental advantage of our arm selection rules is that they can track the sampling proportions {\em without} having to compute the optimal sampling proportions at each round. Instead, the sampling proportion is estimated by sampling from the set of under-sampled arms in each round.

\section{Stochastic BAI Model and Assumption}
\label{sec:model and assumption}
\textbf{Stochastic Model.} Denote the class of probability measures defined on any sample space $\Omega\subseteq \R$ by $\mcQ(\Omega)$. Let $\mcP(\Omega)\subset\mcQ(\Omega)$ denote the class of probability measures that are parameterized by their mean values, i.e.,
\begin{align}
    \mcP(\Omega)\triangleq \left\{ \P\in\mcQ(\Omega) : m(\P)\in\Theta\right \}\ ,
\end{align}
where $m(\P)\triangleq \E_{\P}[X]$, and $\E_\P$ denotes the expectation under measure $\P$. Furthermore, define $\mcM\triangleq  \mcP^{\otimes K}(\Omega)$ as the Cartesian product of $K$ sets of measures in $\mcP(\Omega)$. We denote the likelihood function associated with measure $\P$ by $\pi_\P$. We make the following assumptions on this stochastic model.
\begin{enumerate}
    \item $\Theta\subseteq \R$ is a compact parameter space, which is {\em known} to the learner.  
    \item The likelihood functions $\pi_\P$ are continuous and twice-differentiable in $m(\P)$ for every $\P\in\mcP(\Omega)$. Furthermore, the log-likelihood function $\log\pi_{\P}(\cdot\med m(\P))$ is concave in $m(\P)$ for any $m(\P)\in\Theta$.
    \item All distributions in $\mcP(\Omega)$ have the {same support} $\Omega$.
    \item All the distributions in $\mcP(\Omega)$ have \emph{finite} third moments, i.e., $\E_{\P}[|X-m(\P)|^3]<+\infty$ for every $\P\in\mcP(\Omega)$.
    \item For any $\theta,\theta^\prime\in\Theta$, we denote the Kullback-Leibler (KL) divergence between $\P_{\theta}\in\mcP(\Omega)$ and $\P_{\theta^\prime}\in\mcP(\Omega)$ by $D_{\sf KL}(\P_{\theta}\|\P_{\theta^\prime})$. Keeping one argument fixed, the KL divergence is assumed to be uniformly continuous in the second argument.
    \item For any likelihood function $\pi_\P$, let us define the Fisher Information (FI) measure as
    \begin{align}
        \mcI_{\P}(\theta)\;\triangleq\;-\E_{\P}\left[\frac{\partial^2}{\partial\theta^2}\log\pi_{\P}(\cdot\med\theta)\right]\ .
    \end{align}
    We assume that $\mcI_{\P}(\theta)<+\infty$ for all $\theta\in\Theta$ and $\P\in\mcP(\Omega)$.
    \item There exists $\sigma^2>0$ such that for any $\P\in\mcP(\Omega)$,
    \begin{align}
    \label{eq:A4}
        \frac{\partial^2\log\pi_{\P}(x\med\theta)}{\partial\theta^2}\leq -\sigma^2\ ,\quad \forall x\in\Omega,\;\forall\theta\in\Theta\ .
    \end{align}
    \item For any $\theta,\theta^\prime\in\Theta$, we assume that
    \begin{align}
    \label{eq:quick_condition}
        \E_{\P}\left [ \left\lvert \log\frac{\pi_{\P}(X\med\theta)}{\pi_{\P}(X\med\theta^\prime)}\right\rvert^3\right ]\;<\;+\infty\ .
    \end{align}
\end{enumerate}
Assumption 1 is needed for designing the estimator for the unknown arm means from the observed rewards and is a common assumption in the BAI literature~\cite{degenne2019,FW}. Assumptions 2 and 3 ensure the existence of the maximum likelihood estimates (MLEs) for the parameters of interest (i.e., the mean values) and that the KL divergence measures between any pair of distributions in $\mcM$ are finite. Assumption 4 ensures the almost sure convergence of the sample mean to the ground truth if each arm is sampled sufficiently often. Assumption 5 depicts the uniform continuity of the KL divergence in each argument. We emphasize that this is a mild assumption, and BAI becomes significantly hard without this assumption. Specifically, there exists an impossibility result~\cite{pmlr-v117-agrawal20a}, which states that BAI is impossible for classes of measures that are ``KL right dense'', i.e., if the mean values between two measures in the class can be arbitrarily large, even though the KL divergence between the measures is bounded by an arbitrarily small value. Assumption 6 implies the finiteness of the FI measure for all distributions in $\mcP(\Omega)$. The condition in~(\ref{eq:A4}) specifies a bound on the second-order derivative of the log-likelihood function over the parameter space $\Theta$. For instance, for the exponential family of distributions, (\ref{eq:A4}) is equivalent to the variance being bounded away from zero. Finally, the condition in~(\ref{eq:quick_condition}) implies that the log-likelihood ratio corresponding to parameters $\theta$ and $\theta^\prime$ has a finite third moment.

\paragraph{Bandit Model.} Consider a $K$-armed stochastic bandit. The rewards of arm $i\in[K]\triangleq\{1,\cdots,K\}$ are generated from $\P_i\in\mcP(\Omega)$.We define $\mu(i)\triangleq  m(\P_i)$, and denote the likelihood function associated with $\P_i$ parameterized by the mean value $\mu(i)$ by $\pi_i(\cdot\med\mu(i))$. Accordingly, we define the bandit instance $\bnu\triangleq [\P_1,\dots,\P_K]$. Furthermore, define $\mcM\triangleq  \mcP^{\otimes K}(\Omega)$ as the Cartesian product of $K$ sets of measures in $\mcP(\Omega)$. Finally, let $(\mcM,D_{\sf TV})$ denote the metric space of the distributions in the set $\mcM$ endowed with the total variation distance metric~$D_{\sf TV}$.

\paragraph{Sequential Decisions.} At each round $t\in\N$, the learner chooses an action $A_t\in[K]$, and receives a reward $X_t\sim\P_{A_t}$. We denote the sequence of actions, the corresponding rewards, and the filtration generated by the sequence of actions and rewards by the ordered sets
\begin{align}
    &\mcA_t\triangleq \{A_s : s\in[t]\}\ ,\nonumber\\ &\mcX_t\triangleq \{X_s : s\in[t]\}\ ,\nonumber\\ \text{and}\qquad &\mcF_t\triangleq \{A_1,X_1,\cdots,A_t,X_t\}\ .
\end{align}
Denote the set of rewards obtained by selecting an arm $i\in[K]$ up till time $t$ by
\begin{align}
    \mcX_t^i\triangleq \{X_s : s\in[t],\;A_s = i\}\ .
\end{align}
The objective of the learner is to identify the best arm, which is assumed to be \emph{unique}, and is defined as the arm with the largest mean, i.e.,
\begin{align}
    a^\star\triangleq \argmax_{i\in[K]}\; \mu(i)\ .
\end{align}
For the algorithm design, we use information projection measures defined as follows. For any measure $\P\in\mcP(\Omega)$ and $x\in\R$, we define 
\begin{align}
\label{eq:d_U}
    &d_{\sf U}(\P,x)\;\triangleq\; \inf\limits_{\Q\in\mcP(\Omega) :\;  m(\Q)\;\leq\; x}\; D_{\sf KL} (\P \| \Q)\ ,\\ \text{and}\qquad &d_{\sf L}(\P,x)\;\triangleq\; \inf\limits_{\Q\in\mcP(\Omega):\; m(\Q) \;\geq \; x}\; D_{\sf KL}(\P \| \Q)\ .
\label{eq:d_L}
\end{align}
Specifically, the information measure $d_{\sf U}(\P,x)$ for any distribution $\P\in\mcP(\Omega)$ and $x\in\R$ is the minimum KL divergence between $\P$ and any distribution with mean {\em at most} $x$. Similarly, $d_{\sf L}(\P,x)$ measures the KL divergence between $\P$ and any distribution with mean {\em at least} $x$. In the fixed confidence setting, the goal is to identify the best arm with a pre-specified level of confidence while minimizing the number of samples in making the decision. Let $\tau$ denote an $\mcF$-adapted stopping time, i.e., $\{\tau=t\}\in\mcF_t$ for every $t\in\N$. Corresponding to the stochastic stopping time $\tau$, let $\hat A_{\tau}$ denote the terminal decision of the learner. The $\delta$-PAC objective of the learner is formalized next. 
\begin{definition}[$\delta-$PAC]
A BAI algorithm is $\delta-$PAC 
if the algorithm has a stopping time $\tau$ adapted to $\{\mcF_t:t\in\N\}$, and at the stopping time with the terminal decision $\hat A_\tau\in[K]$ it ensures 
\begin{align}
    \P_{\bnu}\{\tau<+\infty,\;\hat A_\tau = a^\star\} > 1 - \delta\ ,
\end{align}
where $\P_{\bnu}$ denotes the probability measure induced by the interaction of the BAI algorithm with the bandit instance $\bnu$.
\end{definition}

\section{Transportation Cost Balancing Algorithm}

In this section, we specify (i) a stopping rule that decides when to stop collecting samples and form a confident decision about the best arm, (ii) an arm selection rule that guides the order of sampling arms over time, and (iii) an estimation routine that aims to learn the unknown model parameters of interest (e.g., the mean values). {We use different estimators for the arm selection and stopping rules. Specifically, we use the maximum likelihood estimate (MLE) for the stopping rule, whereas we use the sample mean to estimate the arm selection strategy. The central statistic that guides all three decisions in the TCB algorithm is the GLLR. To formalize the GLLRs, we define $K(K-1)$ hypotheses $\{\mcH_{i,j}:i\in[K]\}$ such that for all $i\neq j$
 \begin{align}
     \mcH_{i,j}\; : \; \mu(i)\;\geq\mu(j)\ .
 \end{align}
Next, we formalize the GLLR test statistic for performing these hypothesis tests, which has been adopted in a wide range of investigations on parametric BAI~\cite{pmlr-v49-garivier16a,Kaufmann_JMLR,TTEI,pmlr-v108-shang20a,degenne2019,FW,mukherjee2022}. At any time $t\in\N$ and for any arm $i\in[K]$, and based on the samples available from this arm, we denote the MLE of $\mu(i)$ projected on $\Theta$ by $\mu_t(i)$, i.e.,
\begin{align}
\label{eq:MLE}
    \mu_t(i)\triangleq \argmax\limits_{\mu\in\Theta}\; \sum\limits_{s\in[t]} \log \pi_i(X_s\med \mu)\cdot\mathds{1}_{\{A_s = i\}}\ ,
\end{align}
where $\mathds{1}$ denotes the indicator function. 
Furthermore, for any two parameters $\theta,\theta^\prime\in\Theta$, and for any arm $i\in[K]$, let us define
\begin{align}
    d_i(\theta\|\theta^\prime)\;\triangleq\;D_{\sf KL}(\pi_i(\cdot\med\theta)\|\pi_i(\cdot\med\theta^\prime))\ .
\end{align}
Accordingly, for any pair of arms $(i,j)\in[K]\times[K]$, let us define
\begin{align}
    \Lambda_t(i,j)\;\triangleq\;\min\limits_{\brho\in\R^K : \rho(i)\leq\rho(j)}\left\{ T_t(i)d_i(\mu_t(i)\|\rho(i)) + T_t(j)d_j(\mu_t(j)\|\rho(j))\right\}\mathds{1}_{\{\mu_t(i)\geq\mu_t(j)\}}\ ,
    \label{eq:GLLR}
\end{align}
where we have defined 
\begin{align}
    T_t(i)\triangleq \sum_{s=1}^t\mathds{1}{\{A_s=i\}}\ ,
\end{align}
as the counter for the number of times arm $i\in[K]$ is chosen up to time~$t$.}

\paragraph{GLLR-based Stopping Rule.} We specify a GLLR-thresholding stopping criterion, which compares the GLLR statistic against a time-varying threshold. When the GLLR statistic exceeds the threshold, the algorithm stops collecting more samples and forms a decision about the best arm. Let us denote the maximum likelihood (ML) decision at time $t$ about the top arm by $a_t^{\sf top}$, i.e.,
\begin{align}
    a_t^{\sf top}\;\in\;\argmax_{i\in[K]}\; \bar\mu_t(i)\ ,
\end{align}
where $\bar\mu_t(i)$ denotes the sample mean of arm $i\in[K]$, projected on to the parameter space $\Theta$. Our stopping criterion is based on sufficiently distinguishing between the ML decision $a_t^{\sf top}$ and the most likely contender compared to the best arm, which we refer to as the \emph{challenger}. This challenger arm at time~$t$ is the arm closest to the top arm in a GLLR sense, and it is specified by
\begin{align}
    a_t^{\sf ch}\;\in\; \argmin_{i\in[K]\setminus\{i:\mu_t(i)<\mu_t(a_t^{\sf top})\}}\; \Lambda_t(a_t^{\sf top},i)\ .
\end{align}
{In other words, $a_t^{\sf ch}$ denotes the arm that is the top contender to the best arm $a_t^{\sf top}$, where the comparison is made using the likelihood ratio between the arms.} The stopping rule compares the GLLR between $a_t^{\sf top}$ and $a_t^{\sf ch}$ and stops collecting samples when the GLLR exceeds a threshold. The threshold depends on the level of confidence $\delta$ required on the final decision, and it is denoted by $\beta_t(\delta)$. The stopping rule is stated next.
\begin{align}
\label{eq:stop}
    \tau\;\triangleq\; \inf\left\{ t\in\N : \Lambda_t(a_t^{\sf top},a_t^{\sf ch}) > \beta_t(\delta)\right\}\ .
\end{align}
The threshold $\beta_t(\delta)$ is specified in Theorem~\ref{theorem: delta-PAC} to ensure the $\delta$-PAC guarantee on the decision. Next, we delineate the arm selection rules. For this purpose, we first formalize the \emph{problem complexity} measure, which quantifies the hardness of identification in a BAI instance. Specifically, the problem complexity captures the {\em minimum} distance between the given bandit instance and any other bandit instance with a different best arm. Clearly, the smaller the problem complexity, the larger the number of samples required for identification. We also state an equivalent representation of the problem complexity, which motivates our arm selection strategies.

\paragraph{Problem complexity.} Consider a bandit instance $\bnu = [\P_1,\cdots,\P_K]$ with the top arm $a^\star$. Given $a^\star$, we define an alternative set of bandit instances $\bar\bnu = [\bar\P_1,\cdots,\bar\P_K]$ in which the top arm is not $a^\star$. Specifically,
\begin{align}
    {\sf alt}(a^\star)\triangleq \{\bar\bnu\in\mcM : m(\bar\P_{a^\star})\leq \max\limits_{i\neq a^\star}m(\bar\P_i)\}\ .
\end{align}
Subsequently, given $a^\star$ and ${\sf alt}(a^\star)$, under the weight vector $\bw = [w_1,\cdots,w_K]\in\Delta^K$, where $\Delta^K$ represents the $K$-dimensional probability simplex, we define the problem complexity associated with $\bnu$ as the smallest weighted KL divergence from $\bnu$ to the set ${\sf alt}(a^\star)$. Specifically,
\begin{align}
\label{eq:problem complexity w}
    \Gamma(\bnu,\bw)\;\triangleq\; \inf\limits_{\bar\bnu\in{\sf alt}(a^\star)}\; \sum\limits_{i\in[K]} w_i\;D_{\sf KL}(\P_i\|\bar\P_i)\ .
\end{align}
{Given any weight vector $\bw\in\Delta^K$, $\Gamma(\bnu,\bw)$ captures the hardness of distinguishing $\bnu$ from the closest alternate bandit instance, where the divergence between the arms is weighted by $\bw$.}
Finally, we define the problem complexity associated with the bandit instance $\bnu$ as
\begin{align}
\label{eq:problem complexity}
    \Gamma(\bnu)\;\triangleq\; \sup\limits_{\bw\in\Delta^K}\; \Gamma(\bnu,\bw)\ .
\end{align}
{It can be readily verified that $\Gamma(\bnu)$ captures the {\em maximum} hardness in distinguishing $\bnu$ from the closest bandit instance.} Accordingly, we define the maximizer weight vector as
\begin{align}
\label{eq: optimal_proportions}
    \bw(\bnu)\;\triangleq\; \argsup\limits_{\bw\in\Delta^K}\; \Gamma(\bnu,\bw)\ .
\end{align}
{The weight vector $\bw(\bnu)$ characterizes the {\em optimal} allocation in which to sample arms, such that the sample complexity for BAI for the bandit instance $\bnu$ is minimized}. Next, we provide an equivalent representation of the problem complexity measure, which facilitates analyzing its key properties.
\begin{lemma}
\label{lemma:simplified problem complexity}
    The problem complexity $\Gamma(\bnu,\bw)$ defined in~(\ref{eq:problem complexity w}) can be equivalently expressed as
    \begin{align}
        \Gamma(\bnu,\bw)\;=\; \min\limits_{i\neq a^\star}\inf\limits_{x\in[\mu(i),\mu(a^\star)]}\quad \Big \{w_{a^\star} d_{\sf U}(\P_{a^\star},x) + w_i d_{\sf L}(\P_i,x)\Big \}\ .
        \label{eq:lemma 2}
    \end{align}
\end{lemma}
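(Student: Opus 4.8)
The plan is to reduce the $K$-dimensional variational problem in~(\ref{eq:problem complexity w}) to a one-dimensional optimization over a scalar threshold, exploiting the product structure of $\mcM$ and the additive separability of the weighted KL divergence. First I would decompose the alternative set: the defining constraint $m(\bar\P_{a^\star})\leq\max_{i\neq a^\star}m(\bar\P_i)$ holds if and only if $m(\bar\P_{a^\star})\leq m(\bar\P_i)$ for at least one $i\neq a^\star$. Hence ${\sf alt}(a^\star)=\bigcup_{i\neq a^\star}{\sf alt}_i$, where ${\sf alt}_i\triangleq\{\bar\bnu\in\mcM : m(\bar\P_{a^\star})\leq m(\bar\P_i)\}$, so the infimum over ${\sf alt}(a^\star)$ becomes a minimum over $i$ of the infima over the sets ${\sf alt}_i$. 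For a fixed confounding arm $i$, the constraint in ${\sf alt}_i$ involves only $\bar\P_{a^\star}$ and $\bar\P_i$; since each summand $w_k D_{\sf KL}(\P_k\|\bar\P_k)$ is minimized at value zero by $\bar\P_k=\P_k$, every arm $k\notin\{a^\star,i\}$ can be set to $\P_k$ without affecting feasibility. The problem thus collapses to minimizing $w_{a^\star}D_{\sf KL}(\P_{a^\star}\|\bar\P_{a^\star})+w_iD_{\sf KL}(\P_i\|\bar\P_i)$ subject to $m(\bar\P_{a^\star})\leq m(\bar\P_i)$.

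The key step is to introduce an intermediate scalar $x$ that splits the coupling constraint. The inequality $m(\bar\P_{a^\star})\leq m(\bar\P_i)$ holds if and only if there exists $x\in\R$ with $m(\bar\P_{a^\star})\leq x\leq m(\bar\P_i)$. Inserting this $x$ and minimizing over it on the outside renders the two inner minimizations independent, each becoming precisely one of the information projections defined in~(\ref{eq:d_U})--(\ref{eq:d_L}): $\inf_{m(\bar\P_{a^\star})\leq x}D_{\sf KL}(\P_{a^\star}\|\bar\P_{a^\star})=d_{\sf U}(\P_{a^\star},x)$ and $\inf_{m(\bar\P_i)\geq x}D_{\sf KL}(\P_i\|\bar\P_i)=d_{\sf L}(\P_i,x)$. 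This yields the expression $\inf_{x\in\R}\{w_{a^\star}d_{\sf U}(\P_{a^\star},x)+w_id_{\sf L}(\P_i,x)\}$ for each $i$.

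Finally I would restrict the range of $x$ to $[\mu(i),\mu(a^\star)]$. From the nested feasible sets in~(\ref{eq:d_U})--(\ref{eq:d_L}), $d_{\sf U}(\P_{a^\star},\cdot)$ is nonincreasing and vanishes for $x\geq\mu(a^\star)$, while $d_{\sf L}(\P_i,\cdot)$ is nondecreasing and vanishes for $x\leq\mu(i)$. Since $\mu(i)<\mu(a^\star)$ because $a^\star$ is the unique best arm, for $x<\mu(i)$ the objective equals $w_{a^\star}d_{\sf U}(\P_{a^\star},x)$, which is at least its value at $x=\mu(i)$, and symmetrically for $x>\mu(a^\star)$ the objective equals $w_id_{\sf L}(\P_i,x)\geq w_id_{\sf L}(\P_i,\mu(a^\star))$. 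Hence the unconstrained infimum over $x$ equals the infimum over $x\in[\mu(i),\mu(a^\star)]$, and combining with the outer minimum over $i\neq a^\star$ delivers~(\ref{eq:lemma 2}).

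The main obstacle is the decoupling of the coupled constraint via the intermediate threshold $x$, which requires verifying both directions of the equivalence: that every feasible pair $(\bar\P_{a^\star},\bar\P_i)$ admits such an $x$ (take $x=m(\bar\P_{a^\star})$), and that minimizing the two projections separately over a common $x$ never yields an infeasible pair. One should also confirm that the projections $d_{\sf U},d_{\sf L}$ are well defined and that the interchange of the infima is legitimate; the compactness of $\Theta$ in Assumption~1 together with the continuity in Assumption~3 guarantees that these projections are attained within $\mcP(\Omega)$, so the argument is rigorous rather than merely formal.
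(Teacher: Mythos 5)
Your proposal is correct and follows essentially the same route as the paper's proof: decomposing ${\sf alt}(a^\star)$ as a union over confounding arms $i\neq a^\star$, collapsing to the two-arm problem, decoupling the constraint $m(\bar\P_{a^\star})\leq m(\bar\P_i)$ via an intermediate scalar $x$ to obtain $\inf_{x\in\R}\{w_{a^\star}d_{\sf U}(\P_{a^\star},x)+w_id_{\sf L}(\P_i,x)\}$, and then restricting $x$ to $[\mu(i),\mu(a^\star)]$ by monotonicity of $d_{\sf U}$ and $d_{\sf L}$. Your explicit verification of both directions of the decoupling equivalence is in fact slightly more careful than the paper's exposition, but the argument is the same.
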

\begin{proof}
\label{lemma:problem complexity 0}
    The proof follows a similar line of arguments as~\cite[Lemma 3]{pmlr-v49-garivier16a}. For completeness, we provide the proof in Appendix~\ref{proof: simplified problem complexity}.
\end{proof}
\noindent{The expression for $\Gamma(\bnu,\bw)$ in~(\ref{eq:lemma 2}) involves an inner minimization, which is a weighted combination of divergence measures from $\bnu$ to an alternate bandit instance. Note that the inner minimization only depends on the divergence measures for the best arm $a^\star$ and any other arm $i\neq a^\star$. Furthermore, the outer minimization acts over all the other arms $i\neq a^\star$, establishing that $\Gamma(\bnu,\bw)$ is the weighted divergence measure between the bandit instance $\bnu$ and the {\em closest} alternate bandit instance}. 

\paragraph{Transportation cost balancing (TCB).} 
Designing the arm selection rule consists of two key components. The first component ensures that none of the arms remain under-explored. Specifically, the objective in this phase is to ensure that we have a reasonable estimate of each arm's mean value, such that our estimates converge to the true mean values if the arm selection rule is allowed to collect samples without stopping. While estimating the mean values is not the goal in BAI, our arm selection rule performs estimation as an intermediate step to form a confident decision about the best arm. The second component of the arm selection rule is to track the optimal proportion $\bw(\bnu)$ of arm selections defined in~(\ref{eq: optimal_proportions}), which ensures that we minimize the average sample complexity. For this, TaS~\cite{pmlr-v49-garivier16a} proposes to compute the optimal sampling proportions at the current mean estimates and track the estimated sampling proportions. However, this is computationally expensive and requires solving $K$ equations using the bisection method in each round. To circumvent this, we propose a simple sampling mechanism focusing on sampling from the set of under-sampled arms at each instant, in which case, the sampling strategy can converge to the optimal sampling proportions $\bw(\bnu)$ asymptotically. Next, we describe the sampling rule that combines these two components.

\paragraph{Under-explored Arms.} At any time $t$, the sampling rule defines a set of \emph{under-explored} arms as
\begin{align}
    \mcU_t \; \triangleq \; \left\{i\in[K] : T_t(i) \leq \left\lceil \sqrt{t/K}\right\rceil\right\}\ .
\end{align}
If the set of under-explored arms is non-empty, indicating that some of the arms are under-explored, the arm selection strategy selects the arm that is sampled the least. {Otherwise, when there are no under-explored arms to sample, the goal is to devise a sampling strategy that {\em estimates} the optimal $\beta$, i.e., the optimal allocation of the best arm. For this purpose, our sampling strategy aims to ensure the almost sure convergence of the sampling proportions to the optimal allocation $\bw(\bnu)$. We show in Lemma~\ref{lemma:con_alloc_3} (Appendix~\ref{proof: convergence in allocation}) that this objective is achieved by any sampling rule that eventually {\em always} samples from the set of under-sampled arms, i.e., the set of arms which are sampled less number of times compared to the optimal allocation. Consequently, we devise a sampling strategy that (eventually) always samples from the set of under-sampled arms.} At time $t$, if $\mcU_t = \emptyset$, this arm selection rule leverages the MLEs of the arm means to compute an empirical estimate of $\Gamma(\bnu,\bw)$ defined in Lemma~\ref{lemma:simplified problem complexity}. Based on this, the goal is to select the next arm in a way that maximizes the estimate. We show in Appendix~\ref{proof: convergence in allocation} that this is equivalent to sampling from the set of under-sampled arms.

\paragraph{Transport cost-based Estimation of Allocation.}  We provide a few measures that are used to delineate the arm selection rule. Let us denote the distribution of arm $i\in[K]$ parameterized by the sample mean $\bar\mu_t(i)$ projected on $\Theta$ by $\P_{t,i}$. For any arm $i\in[K]$, define the interval $I_{t,i}\triangleq [\bar\mu_t(i),\bar\mu_{t}(a_t^{\sf top})]$, which specifies the interval for minimization in the empirical problem complexity defined in~\eqref{eq:Gamma_t} next. Based on these, we define the \emph{minimum transportation cost}~\cite{pmlr-v108-shang20a}, as the minimum weighted combination of divergence measures $d_{\sf U}$ and $d_{\sf L}$ defined in~(\ref{eq:d_U}) and~(\ref{eq:d_L}) of the top arm $a_t^{\sf top}$ and any other arm $i\neq a_t^{\sf top}$ as follows.
\begin{align}
\label{eq:Gamma_t}
    \Gamma_t(\bw)\;\triangleq\; \min_{i\in[K]\setminus\{a_t^{\sf top}\}}\min\limits_{x\in I_{t,i}}\; \;\Bigg\{ w_{a_t^{\sf top}} d_{\sf U}(\P_{t,a_t^{\sf top}},x) + w_i d_{\sf L}(\P_{t,i},x)\Bigg \}\ .
\end{align}
$\Gamma_t(\bw)$ in~(\ref{eq:Gamma_t}) specifies the minimum cost of transporting the currently estimated bandit instance $\bnu_t\triangleq [\P_{t,1},\cdots,\P_{t,K}]$ to an alternate bandit instance for which the best arm is not $a_t^{\sf top}$. Even though we optimize for the information projection measures $d_{\sf U}$ and $d_{\sf L}$ in $\mcP(\Omega)$, it is worth noting that we may have additional knowledge about the class of bandit instances over which we want to optimize. For example, we may restrict $\mcP(\Omega)$ to be the set of distributions in the single parameter exponential family with the cumulant generating function $b : \Theta\mapsto\R$, in which case, we have an explicit closed-form expression for $\Gamma_t(\bw)$~\cite{pmlr-v108-shang20a,mukherjee2022,jourdan2022}. However, our analysis holds for a general class of bandit instances satisfying Assumptions 1-8, and a closed-form expression for $\Gamma_t(\bw)$ can be derived based on the bandit instance in consideration.
{The TCB arm selection rule is based on a {\em look-ahead} distribution, which navigates the arm selection routine to sample arms in a way that increases the empirical problem complexity. To this end, let us define the look-ahead distribution over the arms $\bw^\prime_t\in\Delta^K$ such that
\begin{align}
\label{eq: look ahead}
        &w^\prime_t(i)\triangleq  \left\{
	\begin{array}{ll}
	\frac{T_t(i)}{t} + r_t, & \mbox{if} \;\; i = a_t^{\sf top}\\
	\frac{T_t(i)}{t} - \frac{r_t}{K-1} , & \mbox{otherwise}
	\end{array}\right. \ ,
\end{align}
where $\{r_t : t\in\N\}$ is a sequence of positive real numbers satisfying $\limsup_{t\uparrow\infty} r_t = 0$, such that it preserves the property that $\bw^\prime_t\in\Delta^K$.} 
\begin{remark}
    Note that any choice of the sequence $\{r_t : t\in\N\}$ satisfying $\limsup_{t\uparrow\infty} r_t = 0$ is sufficient for the performance guarantees presented in Section~\ref{sec:performance guarantees}. However, it is possible that different choices of $\{r_t : t\in\N\}$ promote convergence in empirical allocations to the optimal one at different rates. However, in this investigation, we are not interested in the rate of convergence in the allocation estimates. Rather, we require the estimates to be sufficiently close to the optimal values at stopping, which ensures $\delta-$PAC BAI. Furthermore, since we investigate the {\em asymptotic} sample complexity in Theorems~\ref{theorem: SC upper bound}, it suffices for the sequence $\{r_t : t\in\N\}$ to converge to $0$ asymptotically in $t$. 
\end{remark}

Based on the equivalent form of the problem complexity in Lemma~\ref{lemma:simplified problem complexity}, the goal of the sampling strategy is to maximize the lowest information measure $\Gamma_t(\bw)$. This ensures that we sample the under-sampled arms in each round and move closer toward the optimal sampling proportion. 
To this end, let us define the arm with the lowest information measure~as
\begin{align}
\label{eq:a_t^K}
    a_t^{\min}\;\in\;\argmin_{i\in[K]\setminus\{i : \mu_t(i)<\mu_t(a_t^{\sf top})\}}\; \min\limits_{x\in I_{t,i}}\; \;\left\{ \frac{T_t(a_t^{\sf top})}{t} d_{\sf U}(\P_{t,a_t^{\sf top}},x) + \frac{T_t(i)}{t} d_{\sf L}(\P_{t,i},x)\right \}\ .
\end{align}
In~(\ref{eq:a_t^K}), $a_t^{\min}$ denotes the arm that minimizes the estimate of the problem complexity at time $t$. We note that the arms $a_t^{\min}$ and $a_t^{\sf ch}$ are generally distinct. Even though they occasionally might refer to the same arm,  it is not the case. Specifically, given a class of measures, the transportation cost evaluated at the current sampling proportions and the scaled GLLR $\frac{1}{t}\Lambda_t(a_t^{\sf top},a_t^{\sf ch})$ defined in~\eqref{eq:GLLR} may have a similar form. However, the transportation cost is evaluated with measures parameterized by the current sample mean, while the GLLR is evaluated with measures parameterized by the constrained MLE. If these estimates are significantly different, we may have different candidates as the arms $a_t^{\min}$ and $a_t^{\sf ch}$. On the other hand, for special classes, such as the exponential family, the sample mean and the MLE are the same, in which case we may have $a_t^{\min}=a_t^{\sf ch}$. Note that to increase the lowest information measure, we should select either the current best arm $a_t^{\sf top}$ or the arm with the lowest information measure $a_t^{\min}$. Based on the above definitions, the arm selection for TCB is carried out as follows.
{
\begin{align}
\label{eq: sampling rule}
        &A_{t+1}\triangleq  \left\{
	\begin{array}{ll}
	\argmin\limits_{i\in\mcU_t} T_t(i), & \mbox{if} \;\; \mcU_t\neq\emptyset\\
	a_t^{\sf top} , & \mbox{if}\;\;\Gamma_{t}(\bw^\prime_t) > \Gamma_{t}\left(\frac{1}{t}\bT_t\right)\; \text{and}\;\mcU_t=\emptyset \\
	a_t^{\min} , & \mbox{if}\;\;\Gamma_{t}(\bw^\prime_t) < \Gamma_{t}\left(\frac{1}{t}\bT_t\right)\;\text{and}\;\mcU_t=\emptyset \\
	\end{array}\right. \ ,
\end{align}
where $\bT_t\triangleq [T_t(1),\cdots,T_t(K)]$.} The complete procedure is presented in Algorithm~\ref{algorithm:TT_SPRT}. 

		\begin{algorithm}[h]
        \small
		\caption{Transportation cost balancing (TCB)}
		\label{algorithm:TT_SPRT}
		
 		\begin{algorithmic}[1]
            \STATE \textbf{Initialize:} $t=0$, $\mcU_t = [K]$, $\mu_t(i) = 0\;\forall\;i\in[K]$, $T_t(i) = 0\;\forall\;i\in[K]$, $\Lambda(a_t^{\sf top},a_t^{\sf ch})=0$, $\beta_t(\delta) = 0$\\
			\WHILE{$\Lambda(a_t^{\sf top},a_t^{\sf ch}) \leq \beta_t(\delta)$}
			    \STATE $t\leftarrow t + 1$\\
			    \STATE Select an arm $a_{t}$ specified by~(\ref{eq: sampling rule}) and obtain reward $X_t$\\
			    \STATE Update $\mu_t(a_t)$ and $T_t(a_t)$ using~(\ref{eq:MLE})\\
			    \STATE $a_t^{\sf top} \leftarrow \argmax\limits_{i\in[K]}\bar\mu_t(i)$\\
			    \STATE Compute $a_t^{\min}$ using~(\ref{eq:a_t^K})\\
			    \STATE For every $i\in[K]$, compute $w_t^\prime(i)$ using~(\ref{eq: look ahead})\\
			    \STATE Compute $\Gamma_t(\bw_t^\prime)$ and $\Gamma_t(\frac{1}{t}\bT)$ using~(\ref{eq:Gamma_t})\\
			    \STATE Compute $\Lambda_t(a_t^{\sf top},i)$ for every $i\in[K]\setminus\{a_t^{\sf top}\}$\\
			    \STATE $a_t^{\sf ch} \leftarrow \argmin\limits_{i\in[K]\setminus\{a_t^{\sf top}\}} \Lambda_t(a_t^{\sf top},i)$\\
			    \STATE Update $\beta_t(\delta)$ using~(\ref{eq:stop})
			\ENDWHILE
			\STATE \textbf{Output:} Top arm $a_t^{\sf top}$
 		\end{algorithmic}
	\end{algorithm}
\paragraph{Improved Transportation Cost Balancing (ITCB).} A recent study by~\cite{jourdan2022} shows that an additional exploration penalty based on the number of times that each arm is chosen improves the empirical performance of top-two algorithms, albeit achieving the same asymptotic optimality guarantee. Specifically, \cite{jourdan2022} proposes an additive penalty $\log(T_t(i))$ to the GLLRs for each arm $i\in[K]\setminus\{a_t^{\sf top}\}$ to promote further exploration of under-explored arms. Motivated by this observation, we also devise a modified sampling rule that achieves the same optimality guarantee as TCB, with improved empirical performance. To formalize the modified sampling rule, we begin by defining the lowest \emph{penalized} information measure as:
{
\begin{align}
    \Phi_t(\bw)\; \triangleq \; \min_{i\in[K]\setminus\{a_t^{\sf top}\}}\;\left\{ \min\limits_{x\in I_{t,i}}\; \Big\{ w_{a_t^{\sf top}} d_{\sf U}(\P_{t,a_t^{\sf top}},x) + w_{i} d_{\sf L}(\P_{t,i},x)\Big \} + \frac{\log( tw_i)}{t}\right \} \ ,
\end{align}}
for any $\bw\in\Delta^K$. Furthermore, let us define the arm having the lowest penalized information measure as
\begin{align}
    b_t^{\min}\; \triangleq  \;\argmin_{i\in[K]\setminus\{a_t^{\sf top}\}}\left \{\; \min\limits_{x\in I_{t,i}} \Bigg\{ \frac{T_t(a_t^{\sf top})}{t} d_{\sf U}(\P_{t,a_t^{\sf top}},x) + \frac{T_t(i)}{t} d_{\sf L}(\P_{t,i},x)\Bigg \} + \frac{\log(T_t(i))}{t}\right \}\ ,
\end{align}
Based on this definition, the modified sampling rule in ITCB is specified next.
{
\begin{align}
\label{eq: sampling rule 2}
        &A_{t+1}\triangleq \left\{
	\begin{array}{ll}
	\argmin\limits_{i\in\mcU_t} T_t(i), & \mbox{if} \;\; \mcU_t\neq\emptyset\\
	b_t^{\min} , & \mbox{if}\;\;\Phi_t(\bw_t^\prime) > \Phi_t \left(\frac{1}{t}\bT_t\right)\;\text{and}\;\mcU_t=\emptyset \\
	a_t^{\sf top} , & \mbox{if}\;\;\Phi_t(\bw_t^\prime) < \Phi_t \left(\frac{1}{t}\bT_t\right)\;\text{and}\;\mcU_t=\emptyset \\
	\end{array}\right. \ .
\end{align}}
{Note that main difference in the selection rules~(\ref{eq: sampling rule}) and~(\ref{eq: sampling rule 2}) lies in the cost function $\Phi_t$ which has an additional penalty compared to $\Gamma_t$, that promotes additional exploration. The ITCB sampling rule follows the same principle as TCB, with the difference of a penalized cost function.}
\section{Performance Guarantees}
\label{sec:performance guarantees}

This section provides the main results on the performance of TCB and ITCB algorithms. There are two key results that we are interested in proving. First, we want to show that the TCB and ITCB algorithms satisfy the $\delta-$PAC guarantee on the decision confidence. Next, we show that the average sample complexities of these algorithms match the known information-theoretic lower bound asymptotically. To this end, we will prove a few properties of TCB and ITCB, which will collectively establish asymptotic optimality in terms of the average sample complexity. We start by stating the guarantee on the probability of error and characterizing $\beta_t(\delta)$ such that the algorithms are $\delta-$PAC. To formalize this result, we define a few quantities that characterize the stopping threshold $\beta_t(\delta)$. We denote the FI measure corresponding to arm $i\in[K]$ evaluated at the current MLE $\mu_t(i)$ by $\mcI_i(\mu_t(i))$. For any $i\in[K]$, let us define
\begin{align}
    \bar V_t(i)\;\triangleq\; -\sum\limits_{s\in[t]:A_s = i} \bigg(\frac{\partial^2}{\partial\theta^2} \log\pi_i(X_s\med\theta)\bigg)_{\theta = \mu_t(i)}\ ,
\end{align}
which represents the second-order derivative of the log-likelihood function of arm $i\in[K]$, evaluated at the ML estimate $\mu_t(i)$. Accordingly, for any $\varepsilon\in\R_+$ and $i\in[K]$, we define 
\begin{align}
    W_t(\varepsilon,i)\;\triangleq\; \displaystyle\bigintsss_{\Omega^{\otimes T_t(i)}}\log\left(1 - 2Q\left (\varepsilon\sqrt{\bar V_t(i)}\right )\right)\displaystyle\prod\limits_{s\in[t]:A_s = i}\pi_i(X_s\med\mu_t(i))\diff\mcX_t^i\ ,
\end{align}
where $Q(x)$ denotes the $Q$ function.  Furthermore, let us define
\begin{align}
    W_t(\varepsilon)\;\triangleq\; \max\limits_{i\in[K]}\log\mcI_i(\mu_t(i)) - 2\min\limits_{i\in[K]} W_t(\varepsilon,i)\ .
    \label{eq:W_t}
\end{align}
 Note that as $t\rightarrow\infty$, $\mcI_i(\mu_t(i))$ converges to the FI measure under the true parameter $\mu(i)$, and the second term in~\eqref{eq:W_t} converges to $0$, given that each arm is sampled sufficiently often. To see why this is true, recall that according to Assumption 7, we have $\bar V_t(i)\geq T_t(i)\sigma^2$ for any $i\in[K]$, which can become infinitely large if the arm $i\in[K]$ is sampled sufficiently often. Hence, $W_t(\varepsilon)$ converges to the maximum FI measure for the distributions in the bandit instance $\bnu$. Furthermore, for any $\varepsilon\in\R_+$ let us define
 \begin{align}
     \varepsilon_t\;\triangleq\; \max\limits_{i\in[K]} \Big\{\max\left\{d_i(\mu_t(i)\|\mu_t(i)-\varepsilon),d_i(\mu_t(i)\|\mu_t(i)+\varepsilon)\right\}\Big\}\ .
     \label{eq:varepsilon_t}
 \end{align}
 Note that the quantity $\varepsilon_t$ can be made arbitrarily small by choosing a sufficiently small $\varepsilon$ as a consequence of the uniform continuity of the KL divergence measures in Assumption 5. Subsequently, we state the choice of $\beta_t(\delta)$ that yields a $\delta-$PAC guarantee for any BAI algorithm, irrespective of the sampling rule.
\begin{theorem}[$\delta-$PAC]
\label{theorem: delta-PAC}
    The stopping rule in~(\ref{eq:stop}) with the choice of the threshold
    \begin{align}
        \beta_t(\delta)\;\triangleq\; W_t(\varepsilon) + t\varepsilon_t + 2\log\frac{|\Theta|}{\sqrt{2\pi}} + \log\frac{t(K-1)}{2\delta}\ ,
    \end{align}
    along with any arm selection strategy and the decision rule $\hat A_\tau \triangleq a_{\tau}^{\sf top}$ is $\delta-$PAC for any $\varepsilon\in\R_+$, where $|\Theta|$ denotes the volume of the space of parameters $\Theta$.
\end{theorem}
\begin{proof}
    See Appendix~\ref{proof: delta-PAC}.
\end{proof}
\noindent The stopping threshold specified in Theorem~\ref{theorem: delta-PAC} holds for a general class of bandit instances that satisfy Assumptions 1-8. We observe that the threshold depends on the volume of the parameter space $\Theta$, which is uncommon in BAI. However, despite the additional penalty of the order of $O(\log|\Theta|)$, our stopping threshold applies a very general class of parameterized bandits for which stopping thresholds do not exist for the test statistic under consideration. Even though we may use the non-parametric GLLR statistic proposed in~\cite{pmlr-v117-agrawal20a,jourdan2022}, it is computationally expensive as it requires solving a convex optimization problem in each iteration to compute the test statistic. Naturally, the statistic in~\cite{pmlr-v117-agrawal20a}, designed for non-parametric bandits, does not use the knowledge of the parametric form of the likelihood functions, which it needs to estimate from the rewards. Furthermore, when we specialize to more structured classes of bandit instances, specifically the single parameter exponential family, we can use the tighter thresholds from~\cite{Kaufmann_JMLR}. The key difficulty in the proof of Theorem~\ref{theorem: delta-PAC} for the general case is that the log-likelihood function is generally non-linear in the reward. This is in contrast to the exponential family, which has a linear log-likelihood function in terms of the reward, which facilitates an intelligent mixture martingale construction for designing the stopping threshold~\cite{Kaufmann_JMLR}.

Next, we state the results related to the asymptotic optimality of TCB and ITCB in terms of the average sample complexity. We begin by establishing a few properties of the problem complexity that are useful for characterizing the sample complexity. Specifically, the form of the problem complexity $\Gamma(\bnu)$ in Lemma~\ref{lemma:simplified problem complexity} is instrumental in characterizing the key properties of $\Gamma(\bnu)$ and establishing an upper bound on the average sample complexity of TCB and ITCB. The following lemma characterizes these properties, which include the continuity of the problem complexity in terms of the bandit instance $\bnu\in\mcM$ in the metric space $(\mcM,D_{\sf TV})$ and the characterization of an optimal allocation of samples that maximizes the problem complexity. 
\begin{lemma}[Properties of $\Gamma(\bnu)$] \label{lemma:properties of problem complexity}
The problem complexity $\Gamma(\bnu)$ has the following properties:
    \begin{enumerate}
        \item Functions $d_{\sf U}(\cdot,\cdot)$ and $d_{\sf L}(\cdot,\cdot)$ are strictly convex in their second arguments.
        \item Problem complexity $\Gamma : \mcM\mapsto\R$ and the optimal allocation $\bw : \mcM\mapsto\Delta^K$ are continuous functions on the metric space $(\mcM,D_{\sf TV})$. Furthermore, an optimal sampling proportion is given by the unique allocation that satisfies:
        \begin{align}
            \Gamma_i(\bnu,\bw)\;=\Gamma_j(\bnu,\bw)\ , \quad \forall i,j\neq a^\star\ ,
        \end{align}
        where, for any $i\in[K]\setminus\{a^\star\}$, we have defined
        \begin{align}
        \label{eq: Gamma_i}
    \Gamma_i(\bnu,\bw)&\triangleq \inf\limits_{\bar\P\in\mcM : m(\bar\P_i)\geq m(\bar\P_{a^\star})}\;\Big \{ w_{a^\star}D_{\sf KL}(\P_{a^\star}\|\bar\P_{a^\star}) + w_iD_{\sf KL}(\P_i\|\bar\P_i)\Big \}\ .
\end{align}
    \end{enumerate}
\end{lemma}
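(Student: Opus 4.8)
The plan is to treat the two parts separately and derive the second from the first. Throughout I exploit that $\mcP(\Omega)$ is indexed by its mean, so for each $\theta\in\Theta$ there is a \emph{unique} $\P_\theta\in\mcP(\Omega)$ with $m(\P_\theta)=\theta$; consequently both information projections collapse to one-dimensional problems, namely $d_{\sf U}(\P,x)=\inf_{\theta\le x} g_\P(\theta)$ and $d_{\sf L}(\P,x)=\inf_{\theta\ge x} g_\P(\theta)$ with $g_\P(\theta)\triangleq D_{\sf KL}(\P\|\P_\theta)$.

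For Part~1, I would first show that $g_\P$ is strictly convex. Writing $g_\P(\theta)=\E_\P[\log\pi_\P(X\med m(\P))]-\E_\P[\log\pi_\P(X\med\theta)]$ and differentiating twice under the expectation (the interchange being justified by the regularity Assumptions 3--6), one obtains $g_\P''(\theta)=-\E_\P[\partial_\theta^2\log\pi_\P(X\med\theta)]\ge\sigma^2>0$ directly from the bound~(\ref{eq:A4}). Since $g_\P$ is minimized at $\theta=m(\P)$ with $g_\P(m(\P))=0$ and is strictly decreasing to its left, for $x\le m(\P)$ the constraint $\theta\le x$ is active and $d_{\sf U}(\P,x)=g_\P(x)$, which is strictly convex on this range; the argument for $d_{\sf L}$ is symmetric, with the half-lines reversed. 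This is the only place where~(\ref{eq:A4}) is essential, and it is exactly what upgrades convexity to strict convexity.

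For Part~2, the first step is to reduce continuity on $(\mcM,D_{\sf TV})$ to continuity in the mean vector $\bmu\in\Theta^K$: by Assumptions 1 and 3 the map $\theta\mapsto\P_\theta$ is a homeomorphism onto its image, so $D_{\sf TV}$-convergence of instances is equivalent to convergence of their mean parameters. Next I argue that $\Gamma(\bnu,\bw)$ is jointly continuous in $(\bmu,\bw)$: the integrands $d_{\sf U},d_{\sf L}$ are continuous in all arguments by Part~1, the inner infimum is over the compact interval $[\mu(i),\mu(a^\star)]$ whose endpoints vary continuously, so Berge's maximum theorem gives continuity of the inner problem, and the outer finite minimum over $i\neq a^\star$ preserves it. Applying Berge's theorem once more to $\Gamma(\bnu)=\sup_{\bw\in\Delta^K}\Gamma(\bnu,\bw)$ over the compact simplex yields continuity of $\Gamma$ and upper hemicontinuity of the maximizing correspondence $\bw(\bnu)$.

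The substantive part is the equalization characterization and the uniqueness it entails, which also upgrades the maximizer to a continuous single-valued map. By Lemma~\ref{lemma:simplified problem complexity} and the projection identity, $\Gamma(\bnu,\bw)=\min_{i\neq a^\star}\Gamma_i(\bnu,\bw)$ with $\Gamma_i$ as in~(\ref{eq: Gamma_i}); each $\Gamma_i$ is concave in $\bw$ (an infimum of $\bw$-linear maps), depends only on $(w_{a^\star},w_i)$, and---crucially---is \emph{strictly} increasing in $w_i$, because Part~1 forces the minimizing $x$ to be interior and unique, so the envelope derivative is $d_{\sf L}(\P_i,x_i^\star)>0$. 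I would then show that at any maximizer the values $\Gamma_i(\bnu,\bw)$ must all coincide: if some $\Gamma_i<\Gamma_j$, shifting an infinitesimal mass from $w_j$ to $w_i$ strictly raises the minimizing term while keeping the others above it, contradicting optimality. For uniqueness I parameterize by $\beta\triangleq w_{a^\star}$; for each fixed $\beta$ the common-value equations together with $\sum_{i\neq a^\star}w_i=1-\beta$ determine each $w_i$ as a strictly monotone function of the shared value, and imposing the simplex constraint reduces everything to a single monotone scalar equation in $\beta$ with a unique root---this is the unique-$\beta$ statement attributed to~\cite{russo2016}. Uniqueness then collapses the upper hemicontinuous correspondence to a continuous map, completing the claim. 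The main obstacle will be this last step: establishing strict monotonicity and the resulting unique scalar equation rigorously while handling the coupling of all arms through the shared weight $w_{a^\star}$, and verifying that the perturbation argument remains valid at boundary allocations where some $w_i$ may vanish. The strict convexity from Part~1 is precisely the lever that makes the monotonicity strict, and hence the root unique.
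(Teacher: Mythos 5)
Your proposal hinges, in both parts, on the reduction $d_{\sf U}(\P,x)=\inf_{\theta\le x}g_{\P}(\theta)$ with $g_{\P}(\theta)=D_{\sf KL}(\P\|\P_\theta)$, i.e., on the premise that each mean $\theta\in\Theta$ indexes a \emph{unique} element of $\mcP(\Omega)$. That premise is inconsistent with how the paper defines and uses the model: $\mcP(\Omega)$ is the class of \emph{all} measures on $\Omega$ with mean in $\Theta$, and the infima in~(\ref{eq:d_U})--(\ref{eq:d_L}), as well as the infimum defining $\Gamma_i$ in~(\ref{eq: Gamma_i}), range over this entire class, not over a one-parameter family. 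Under that definition your identity $d_{\sf U}(\P,x)=g_{\P}(x)$ for $x\le m(\P)$ fails (the projection is over a far larger feasible set), and the second-derivative bound obtained from~(\ref{eq:A4}) controls the curvature of the wrong function. The paper's Part 1 never differentiates a likelihood: it takes the two projections $\eta_x,\eta_y$, forms the mixture $\kappa_z=\lambda\eta_x+(1-\lambda)\eta_y$, verifies $m(\kappa_z)\le z$ so that $\kappa_z$ is feasible for the $z$-problem (this step uses exactly the convexity of the class $\mcP(\Omega)$ that your reduction discards), and concludes by strict convexity of $D_{\sf KL}(\P\|\cdot)$ in its second argument.

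The more serious gap is in Part 2. You claim the integrands are ``continuous in all arguments by Part~1,'' but Part 1 (in either your version or the paper's) gives regularity only in the scalar argument $x$. Continuity of $\Gamma_i(\cdot,\bw)$ in the instance $\bnu$ with respect to $D_{\sf TV}$ is the substantive claim, and it is precisely where the paper does its work: lower semicontinuity is inherited from lower semicontinuity of the KL divergence, while upper semicontinuity follows from convexity of $\Gamma_i$ in $\bnu$ together with local boundedness via Lemma~\ref{lemma:usc} from~\cite{hitchhiker}, the bound $\Gamma_i(\bnu,\bw)\le (w_{a^\star}+w_i)\,{\sf JS}_{w_{a^\star}/(w_{a^\star}+w_i)}(\P_{a^\star}\|\P_i)\le 1$ being obtained by evaluating the infimum at the normalized mixture $\kappa_{a^\star,i}$ (Lemma~\ref{lemma:JS}). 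This Jensen--Shannon device is exactly what lets the paper avoid the moment condition of~\cite{pmlr-v117-agrawal20a}; your proposal has no substitute for it. Even granting your one-dimensional reduction, you would still need to prove that $\eta\mapsto D_{\sf KL}(\P_\eta\|\P_\theta)$ is continuous, which requires a dominated-convergence/uniform-integrability argument under the paper's assumptions that you never supply. By contrast, your final block — equalization of the $\Gamma_i$ at the optimum, uniqueness via a monotone scalar equation in $\beta=w_{a^\star}$, and the collapse of the upper hemicontinuous argmax correspondence to a continuous map — is sound and matches the paper's route, which applies Berge's theorem (Lemma~\ref{lemma:Berge}) and defers the uniqueness claim to~\cite[Proposition 7]{russo2016}; indeed your sketch there is more explicit than the paper's citation.
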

\begin{proof}
    See Appendix~\ref{proof: properties of problem complexity}.
\end{proof}
\noindent The continuity of $\Gamma$ has been previously established by~\cite{pmlr-v117-agrawal20a} in the metric space $(\mcM,W_1)$, where $W_1$ denotes the Wasserstein distance metric. However, the continuity was established under the assumption that for any $\P\in\mcQ(\Omega)$, the boundedness assumption $\E_{\P}[f(|X|)]<B$ holds for any convex and differentiable function $f$. The key difference in the proof of Lemma~\ref{lemma:properties of problem complexity} with that of~\cite[Lemma 4]{pmlr-v117-agrawal20a} is that we do not impose the boundedness assumption. Rather, we leverage the properties of the Jensen-Shannon divergence (details in Appendix~\ref{proof: properties of problem complexity}) to establish the continuity property of the problem complexity.

\noindent Leveraging the continuity property of the problem complexity proved in Lemma~\ref{lemma:properties of problem complexity}, next, we establish the convergence in the sampling proportions due to the TCB and ITCB sampling rules to that of the optimal proportions. {This is a key theoretical contribution of the paper, which helps establish the asymptotic optimality of the algorithms, compared to the $\beta-$optimality achieved by top-two algorithms. Specifically, the top-two algorithms {\em enforce} the almost sure convergence of allocation for the best arm to $\beta$, which accordingly ensures the convergence to the $\beta-$optimal allocation. However, such an analysis does not readily extend to that of TCB and ITCB, since we do not enforce the almost sure convergence in allocation of the best arm to $\beta$. This makes the analysis of the TCB and ITCB sampling rules significantly more challenging. For details, we refer to Appendix~\ref{proof: convergence in allocation}. }
\begin{theorem}[Convergence in sampling proportions]
\label{theorem: convergence in allocation}
    If the TCB arm selection rule in~(\ref{eq: sampling rule}) is allowed to continue sampling without stopping, for any $\epsilon>0$, there exists a stochastic time instant $N_{\bw}^\epsilon$ such that 
    \begin{align}
        \left\lvert\frac{T_t(i)}{t} - w_i(\bnu)\right\rvert < \epsilon\ , \quad \forall i\in[K],\;\forall t\geq N_{\bw}^\epsilon\ .
    \end{align}
    Furthermore, $\E_{\bnu}[N_{\bw}^\epsilon]<+\infty$.
\end{theorem}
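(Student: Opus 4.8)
The plan is to prove convergence in two stages: first, that the forced-exploration component of~\eqref{eq: sampling rule} drives the MLEs to the true means, and second, that once the empirical model is accurate, the balancing component of the rule tracks the optimal allocation $\bw(\bnu)$ defined in~\eqref{eq: optimal_proportions}. A preliminary observation makes this decomposition clean. The under-explored set $\mcU_t=\{i:T_t(i)\le\lceil\sqrt{t/K}\,\rceil\}$ triggers at most $K\lceil\sqrt{t/K}\,\rceil=o(t)$ exploration pulls in total, so these pulls contribute a vanishing fraction and cannot alter the limiting proportions; moreover, since the optimal weights $w_i(\bnu)$ are strictly positive for every arm, one has $T_t(i)\approx w_i(\bnu)\,t\gg\sqrt{t/K}$ eventually, so that $\mcU_t=\emptyset$ for all large $t$ and the balancing cases of~\eqref{eq: sampling rule} govern the dynamics in the limit.

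First I would invoke Theorem~\ref{theorem: convergence in mean}: for any $\epsilon_1>0$ there is a stochastic time $N_{\bnu}^{\epsilon_1}$ with $\E_{\bnu}[N_{\bnu}^{\epsilon_1}]<+\infty$ such that $|\mu_s(i)-\mu(i)|<\epsilon_1$ for every arm $i$ and every $s\ge N_{\bnu}^{\epsilon_1}$. By the continuity of the likelihoods in the mean (Assumption~3), this forces $\P_{t,i}\to\P_i$ in total variation, and hence, via the continuity of $d_{\sf U},d_{\sf L}$ together with the uniform (over the compact simplex $\Delta^K$) convergence $\Gamma_t(\bw)\to\Gamma(\bnu,\bw)$, the empirical maximizer $\bw_t^\star\triangleq\argmax_{\bw\in\Delta^K}\Gamma_t(\bw)$ converges to $\bw(\bnu)$. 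This is precisely the continuity of $\Gamma$ and of the optimal allocation map established in Lemma~\ref{lemma:properties of problem complexity}. Consequently, given the target accuracy $\epsilon$, I would fix $\epsilon_1$ small enough that $\|\bw_t^\star-\bw(\bnu)\|_\infty<\epsilon/2$ for all $t\ge N_{\bnu}^{\epsilon_1}$.

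The crux is the second stage: showing that the balancing rule tracks $\bw_t^\star$. Here I would exploit that $\Gamma_t(\bw)$ is concave in $\bw$ (a minimum over $i$ and an infimum over $x$ of functions linear in $\bw$) together with the balancing characterization of the maximizer in Lemma~\ref{lemma:properties of problem complexity}, namely that the optimum equalizes $\Gamma_i(\bnu,\bw)$ across the challengers. Incrementing the coordinate of the most under-sampled arm raises this concave objective the most; restricting~\eqref{eq: sampling rule} to a comparison of only $a_t^{\sf top}$ and $a_t^{\min}$ is legitimate because $a_t^{\min}$ is the current bottleneck challenger while $a_t^{\sf top}$ carries the mass of the best arm, so alternately pushing these two coordinates is exactly what maintains both the inter-challenger balance $\Gamma_i=\Gamma_j$ and the top-versus-rest balance. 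Formalizing this reduces greedy maximization of $\Gamma_t$ to sampling an arm under-sampled relative to $\bw_t^\star$ (the equivalence established in the appendix), after which a standard tracking argument bounds $\max_i|T_t(i)/t-w^\star_{t,i}|$ by a quantity that vanishes deterministically within a further $M=M(\epsilon,\epsilon_1,K)$ steps. Combining with $\|\bw_t^\star-\bw(\bnu)\|_\infty<\epsilon/2$ gives $|T_t(i)/t-w_i(\bnu)|<\epsilon$ for all $t\ge N_{\bw}^\epsilon\triangleq N_{\bnu}^{\epsilon_1}+M$, and since $M$ is deterministic once the MLEs have converged while $\E_{\bnu}[N_{\bnu}^{\epsilon_1}]<+\infty$, the bound $\E_{\bnu}[N_{\bw}^\epsilon]<+\infty$ follows. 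I expect the main obstacle to be exactly this tracking step: rigorously establishing the equivalence between greedily maximizing the concave $\Gamma_t$ and sampling the most under-sampled arm, and controlling the tracking of a target $\bw_t^\star$ that is itself moving—though confined to a shrinking ball around $\bw(\bnu)$—while verifying that the $o(t)$ forced-exploration pulls do not disrupt the convergence.
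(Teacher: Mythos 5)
Your two-stage decomposition is exactly the paper's: Theorem~\ref{theorem: convergence in mean} plus the continuity of $\Gamma$ and of the allocation map (Lemma~\ref{lemma:properties of problem complexity}) puts the empirical optimizer $\bgamma_t$ within $\epsilon/2$ of $\bw(\bnu)$, and then one shows the selection rule only picks arms that are under-sampled relative to $\bgamma_t$, after which an over-/under-sampled-set tracking argument (the paper's Lemmas~\ref{lemma:con_alloc_1} and~\ref{lemma:con_alloc_2}) yields convergence with a stopping time of finite expectation. However, there is a genuine gap at the crux, and you name it yourself: the claim that TCB's restricted two-point comparison between $\Gamma_t(\bw_t(a_t^{\sf top}))$ and $\Gamma_t(\bw_t(a_t^{\min}))$ forces the selected arm to be under-sampled is not proved; it is asserted via a heuristic (``$a_t^{\min}$ is the current bottleneck challenger while $a_t^{\sf top}$ carries the mass of the best arm'') and by citing ``the equivalence established in the appendix.'' That equivalence is Lemma~\ref{lemma:con_alloc_3}, i.e., the heart of the proof of this very theorem, so invoking it is circular. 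The paper proves it by contradiction: assuming $\frac{1}{t+1}T_t(a_{t+1}) > \gamma_{t,a_{t+1}}$, it first shows (using that $\Gamma_t$ is increasing coordinate-wise and that $\bgamma_t$ is the \emph{unique} maximizer of the concave $\Gamma_t$) that the top arm and $a_t^{\min}$ cannot both be over-sampled, and then constructs points $\bz,\bz'$ on segments joining $\bgamma_t$, $\bw_t(a^\star)$ and $\bw_t(a_t^{\min})$, applies sub-gradient and Cauchy--Schwarz bounds with a carefully tuned mixing weight $\lambda_2=O(\epsilon_t^2)$, and contradicts the sampling rule's strict inequality $\Gamma_t(\bw_t(a^\star)) > \Gamma_t(\bw_t(a_t^{\min}))$. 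Your statement that ``incrementing the coordinate of the most under-sampled arm raises the concave objective the most'' is both unproven and not the implication actually needed; the required direction is the converse, namely that whichever arm wins the comparison must be under-sampled, and nothing in your sketch supplies that.

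A second, more minor flaw: your preliminary observation that $\mcU_t=\emptyset$ for all large $t$ is obtained by assuming $T_t(i)\approx w_i(\bnu)\,t$, which is the conclusion of the theorem --- again circular. It is also unnecessary: the forced-exploration pulls are harmless not because they stop occurring, but because any arm selected from $\mcU_t$ has $T_t(a_{t+1}) \leq \lceil\sqrt{t/K}\rceil$, so $\frac{1}{t+1}T_t(a_{t+1}) \to 0$ and the under-sampling condition $\frac{1}{t+1}T_t(a_{t+1}) < \gamma_{t,a_{t+1}}+\zeta$ holds trivially for those rounds once $t$ is large; the tracking lemma only requires this condition, not emptiness of $\mcU_t$.
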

\begin{proof}
    See Appendix~\ref{proof: convergence in allocation}.
\end{proof}
\begin{theorem}[Convergence in sampling proportions]
\label{theorem: convergence in allocation 2}
    If the ITCB arm selection rule in~(\ref{eq: sampling rule 2}) is allowed to continue sampling without stopping, for any $\epsilon>0$, there exists a stochastic time instant $N_{\bw}^\epsilon$ such that 
    \begin{align}
        \left\lvert\frac{T_t(i)}{t} - w_i(\bnu)\right\rvert < \epsilon\ , \quad \forall i\in[K],\;\forall t\geq N_{\bw}^\epsilon\ .
    \end{align}
    Furthermore, $\E_{\bnu}[N_{\bw}^\epsilon]<+\infty$.
\end{theorem}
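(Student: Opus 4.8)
The plan is to exploit the structural identity between the ITCB rule in~(\ref{eq: sampling rule 2}) and the TCB rule in~(\ref{eq: sampling rule}), reducing the claim to Theorem~\ref{theorem: convergence in allocation} by showing that the additional exploration penalty is asymptotically negligible and leaves the limiting allocation unchanged. The two rules differ only in that ITCB compares the penalized cost $\Phi_t$ rather than $\Gamma_t$, where the penalty attached to a candidate arm $i\neq a_t^{\sf top}$ is $\log(T_t(i))/t$ (equivalently $\log((t+1)w_i)/(t+1)$). Since both rules employ the identical explicit-exploration mechanism driven by the under-explored set $\mcU_t$, Theorem~\ref{theorem: convergence in mean} applies under ITCB without change; the MLEs therefore converge to the true means, and the enforcement $T_t(i)\geq\lceil\sqrt{t/K}\rceil$ guarantees every arm is sampled sufficiently often.

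First I would quantify the penalty. Once $\mcU_t=\emptyset$, we have $\sqrt{t/K}\leq T_t(i)\leq t$ for every $i\in[K]$, so that
\begin{align}
    0\;\leq\;\frac{\log T_t(i)}{t}\;\leq\;\frac{\log t}{t}\;\longrightarrow\;0\ ,
\end{align}
uniformly over all arms. Because each per-arm penalty lies in $[0,\log t/t]$, it follows that $\Gamma_t(\bw)\leq\Phi_t(\bw)\leq\Gamma_t(\bw)+\log t/t$, so $\Phi_t(\bw)$ differs from $\Gamma_t(\bw)$ by a quantity that vanishes uniformly in $\bw\in\Delta^K$ as $t\to\infty$.

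Next I would re-run the argument underlying Theorem~\ref{theorem: convergence in allocation}, carrying the penalty along as a vanishing perturbation. That argument, via Lemma~\ref{lemma:con_alloc_3}, identifies greedy maximization of the empirical information measure with sampling from the set of under-sampled arms, which in turn steers the empirical proportions $T_t(i)/t$ toward the unique balancing allocation characterized in Lemma~\ref{lemma:properties of problem complexity}. Because the penalty decays uniformly, for each tolerance there is a time past which the penalized comparison $\Phi_t(\bw_t(b_t^{\min}))\gtrless\Phi_t(\bw_t(a_t^{\sf top}))$ agrees with its unpenalized counterpart, except possibly when the relevant information measures are near-tied. Combining the continuity of $\Gamma$ and $\bw$ on $(\mcM,D_{\sf TV})$ from Lemma~\ref{lemma:properties of problem complexity} with the MLE convergence of Theorem~\ref{theorem: convergence in mean} then yields $|T_t(i)/t-w_i(\bnu)|<\epsilon$ for all $t\geq N_{\bw}^\epsilon$.

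The hard part will be establishing that $N_{\bw}^\epsilon$ still has finite expectation in the presence of the penalty. The delicate point is that the penalty can flip the greedy comparison only when the (unpenalized) information measures of the two candidate arms lie within $\log t/t$ of one another; by the strict convexity of $d_{\sf U}$ and $d_{\sf L}$ (Lemma~\ref{lemma:properties of problem complexity}), such near-ties force the current allocation into a neighborhood of the balancing point whose radius is controlled by $\log t/t$. Since this radius shrinks to zero, the only proportions consistent with persistent penalty-induced flips are those already converging to $\bw(\bnu)$. I would therefore fix a deterministic time after which $\log t/t$ is smaller than the separation the greedy dynamics require outside the $\epsilon$-neighborhood, and combine this with the finite-expectation bound on $N_{\bnu}^\epsilon$ from Theorem~\ref{theorem: convergence in mean} to conclude $\E_{\bnu}[N_{\bw}^\epsilon]<+\infty$.
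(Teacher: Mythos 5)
Your high-level route is the same as the paper's: reduce ITCB to the shared machinery (Lemmas~\ref{lemma:con_alloc_1} and~\ref{lemma:con_alloc_2}, which convert ``eventually always sample an under-sampled arm'' into convergence of $T_t(i)/t$ to $w_i(\bnu)$ with a finite-expectation hitting time), and then argue that the penalty can flip the greedy comparison only in near-ties, which in turn can only occur when the allocation is already near the balancing point. However, your quantitative execution of the near-tie step contains a scale error that breaks the argument as written.

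The two allocations being compared, $\bw_t(b_t^{\min})$ and $\bw_t(a_t^{\sf top})$, differ in exactly two coordinates, each by $1/(t+1)$. Consequently $\left\lvert\Gamma_t(\bw_t(b_t^{\min})) - \Gamma_t(\bw_t(a_t^{\sf top}))\right\rvert = O(1/t)$ \emph{always}, no matter where the empirical allocation sits in $\Delta^K$. Your flip criterion --- ``the unpenalized information measures lie within $\log t/t$ of one another'' --- is therefore satisfied at every round and carries no information about the allocation; it cannot force the allocation into a shrinking neighborhood of the balancing point. For the same reason your closing step fails: the ``separation the greedy dynamics require'' is a difference of the concave function $\Gamma_t$ at points $O(1/t)$ apart, hence itself $O(1/t)$, and $\log t/t$ is eventually \emph{larger} than this, not smaller, so no deterministic time of the kind you posit exists. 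The paper (Lemma~\ref{lemma:con_alloc_4}) avoids this by working with the penalty \emph{difference} between the two compared allocations, which equals $\frac{1}{t+1}\log\left(1+\frac{1}{T_t(b_t^{\min})}\right) = \epsilon_t/(t+1)$ with $\epsilon_t\le 1/T_t(b_t^{\min})\le\sqrt{2K/t}$ by forced exploration, and by multiplying the near-tie condition through by $(t+1)$: via concavity and sub-gradients of $\Gamma_t$, a flip forces $\left\lvert\left\langle\nabla\Gamma_t(\bw),\be_{a_t^{\min}}\right\rangle\right\rvert\le\epsilon_t$, i.e., a \emph{directional derivative} (an $O(1)$ object) must fall below a vanishing threshold. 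The paper then lower-bounds this directional derivative by a constant $\epsilon(\zeta)>0$ over the set $\mcM_{\zeta}(i)$ of allocations whose $i$-th coordinate is $\zeta$-far from $\gamma_{t,i}$, producing the contradiction once $\epsilon_t<\epsilon(\zeta)$. Your appeal to strict convexity of $d_{\sf U}$ and $d_{\sf L}$ gestures at the right phenomenon, but without this $(t+1)$-normalization the comparison is between quantities living at different scales, and the finite-expectation conclusion for $N_{\bw}^\epsilon$ does not follow.
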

\begin{proof}
    See Appendix~\ref{proof: convergence in allocation}.
\end{proof}
\noindent Finally, leveraging Theorem~\ref{theorem: convergence in allocation} (or~\ref{theorem: convergence in allocation 2}), we characterize an upper bound on the average sample complexity of the TCB and ITCB algorithms. 
\begin{theorem}[Achievable sample complexity]
\label{theorem: SC upper bound}
    The TCB and ITCB algorithms, comprising the arm selection rules in~(\ref{eq: sampling rule}) and~(\ref{eq: sampling rule 2}) and the stopping rule in~(\ref{eq:stop}), satisfy the following upper-bound on the average sample complexity.
    \begin{align}
        \limsup\limits_{\delta\rightarrow 0}\; \frac{\E_{\bnu}[\tau]}{\log(1/\delta)}\leq \frac{1+\alpha}{\Gamma(\bnu)}\ ,
        \label{eq:the_SC_UB}
    \end{align}
    for any $\alpha>0$.
\end{theorem}
\begin{proof}
    See Appendix~\ref{proof: SC upper bound}.
\end{proof}
\noindent The upper bound in Theorem~\ref{theorem: SC upper bound} matches the information-theoretic lower bound on the average sample complexity provided by~\cite{pmlr-v49-garivier16a} up to any $\alpha>0$. This establishes the asymptotic optimality of the TCB and ITCB algorithms. Note that in the special case that the bandit instance belongs to the single parameter exponential family, we can use the tighter stopping threshold from~\cite{Kaufmann_JMLR}, in which case we can tighten the sample complexity bound in Theorem~\ref{theorem: SC upper bound}. Specifically, we may achieve a sample complexity bound with $\alpha=0$ in~\eqref{eq:the_SC_UB}. The proof follows similar arguments as~\cite[Theorem 5]{mukherjee2022}.

\section{Numerical Experiments}
In this section, we provide numerical evaluations of the different aspects of processes and decisions involved in the TCB and ITCB algorithms. First, we evaluate the convergence of the TCB and ITCB sampling rules in Gaussian and Bernoulli bandit instances. These bandit instances are summarized in Table~\ref{table:convergence}. We have selected two slippage bandit instances (i.e., instances with identical arm means for the sub-optimal arms) and two instances having distinct arm means. All experiments are averaged over $10^4$ independent trials. We show convergence in the following three senses.

\begin{table}[h]
\centering
\begin{tabular}{@{}|c|c|c|@{}}
 \hline  
Distribution & $\bnu$ & $\bw(\bnu)$
\\ \hline \hline
Gaussian  & $\bnu_1 = [100, 2, 1]$  & $[0.4143,0.2979,0.2878]$                                 \\  \hline                                               
Gaussian  & $\bnu_2 = [1.2, 1, 1, 1, 1]$                & $[1/3, 1/6, 1/6, 1/6, 1/6]$
\\ \hline 
Bernoulli & $\bnu_3= [0.8, 0.45, 0.45, 0.45]$            & $[0.3854, 0.2049, 0.2049, 0.2049]$
\\  \hline
Bernoulli & $\bnu_4 = [0.79, 0.29, 0.289]$ & 
$[0.426, 0.2880, 0.2860]$                                \\  \hline
\end{tabular}
\caption{Bandit instances and their associated sampling weights.}
\label{table:convergence}
\end{table}

\begin{figure}[h]
    \centering
   \begin{subfigure}{0.45\textwidth}
         \centering
         \includegraphics[width=.95\textwidth]{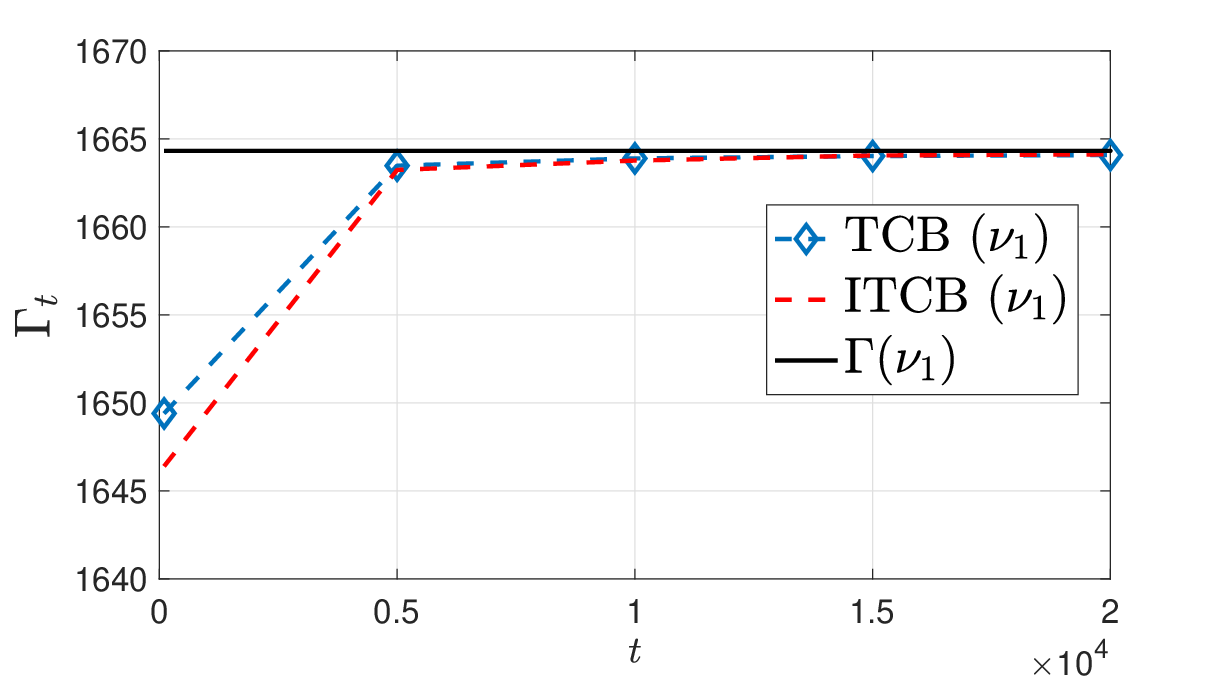}
        \caption{bandit instance $\bnu_1$}
         \label{fig:tc_v1}
     \end{subfigure} 
     \begin{subfigure}{0.45\textwidth}
         \centering
         \includegraphics[width=.95\textwidth]{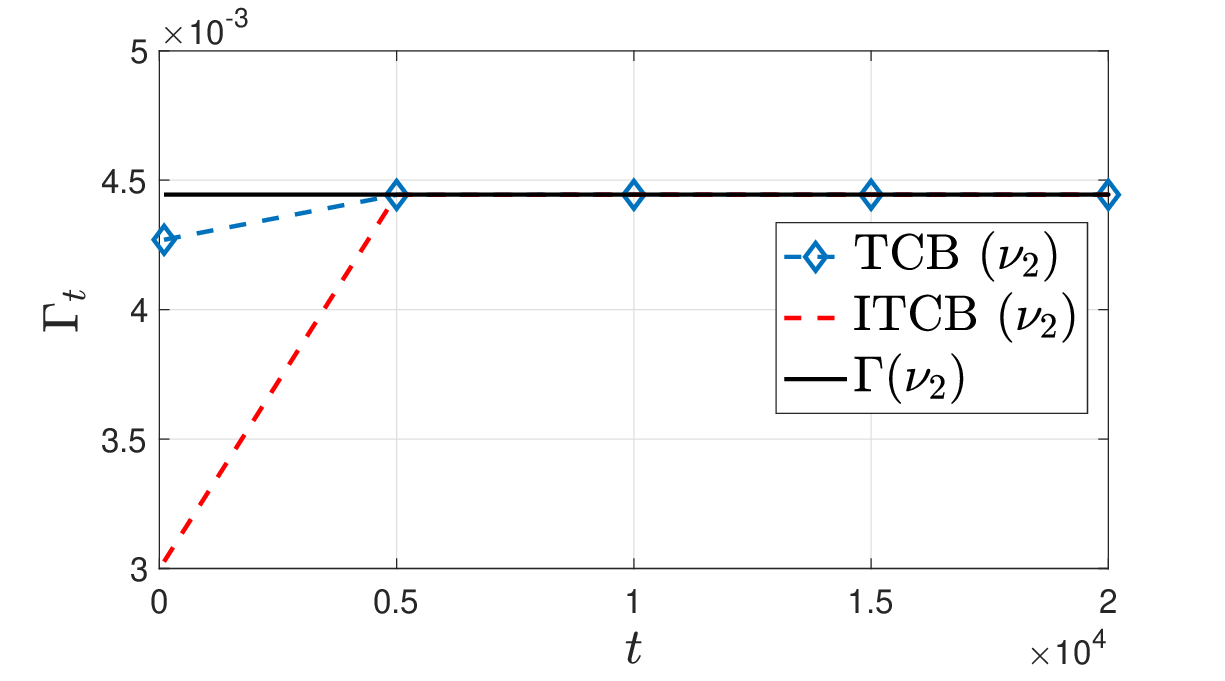}
        \caption{bandit instance $\bnu_2$}
         \label{fig:tc_v2}
     \end{subfigure}
     \begin{subfigure}{0.45\textwidth}
         \centering
         \includegraphics[width=.95\textwidth]{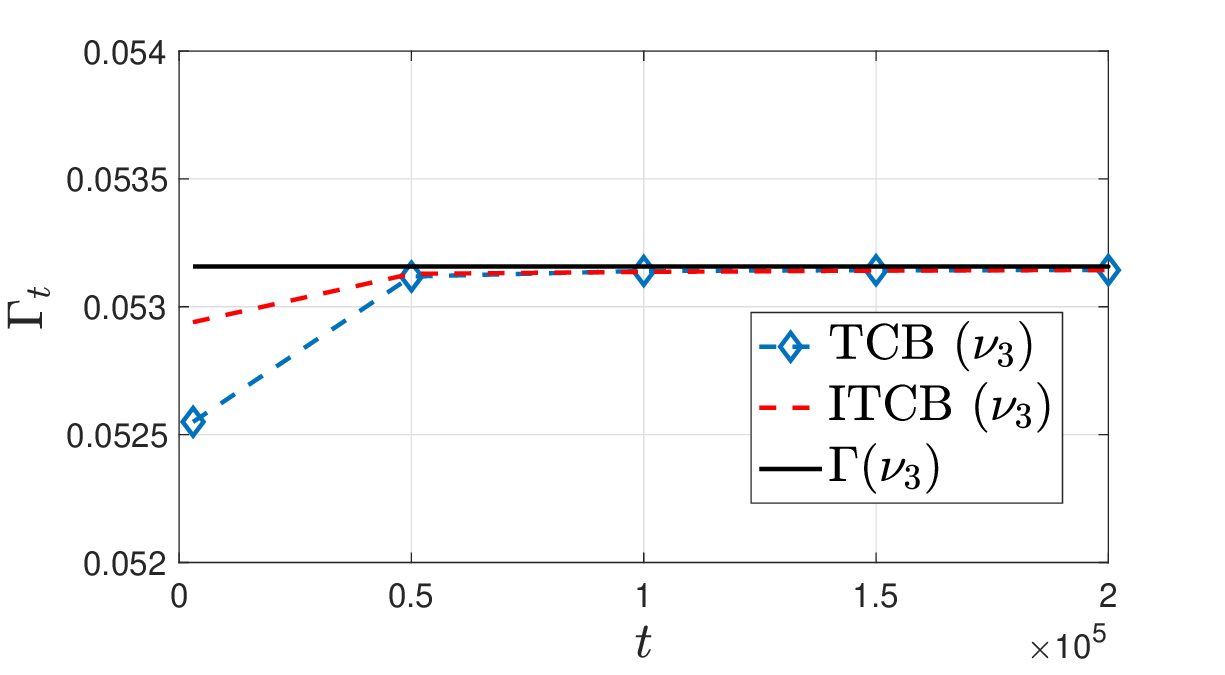}
        \caption{bandit instance $\bnu_3$}
         \label{fig:tc_v3}
     \end{subfigure}
     \begin{subfigure}{0.45\textwidth}
         \centering
         \includegraphics[width=.95\textwidth]{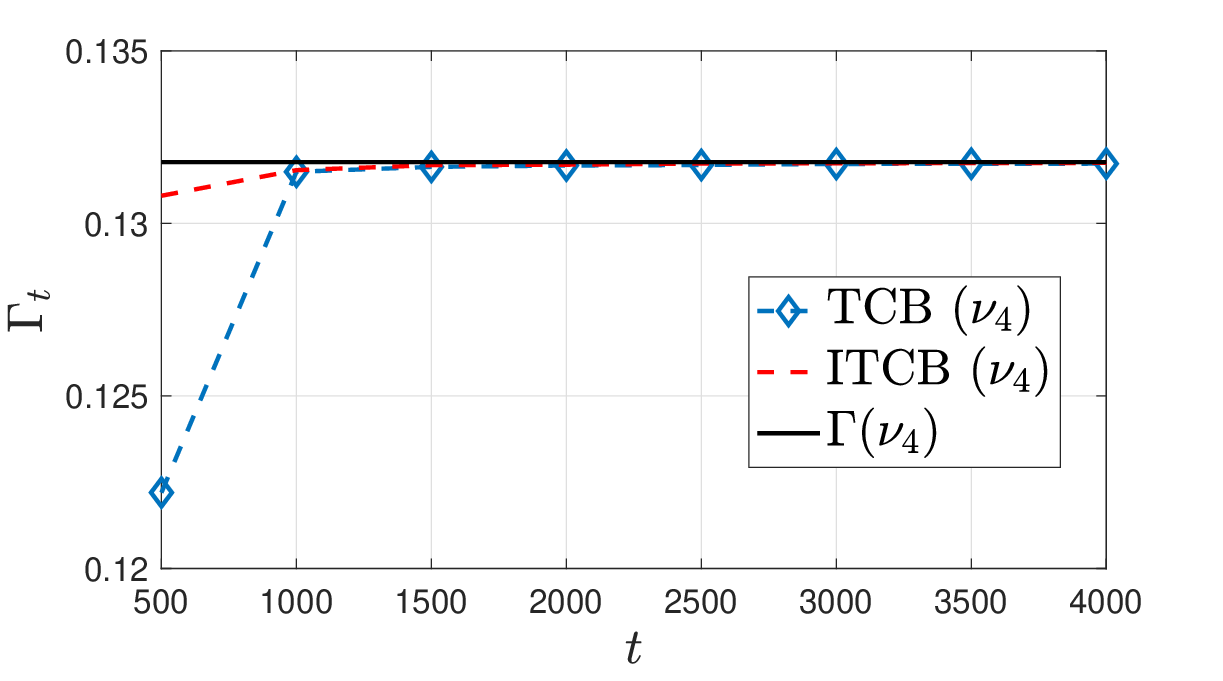}
        \caption{bandit instance $\bnu_4$}
         \label{fig:tc_v4}
     \end{subfigure}
    \caption{Variations of the estimates of the transportation cost $\Gamma_t$ over time.}
    \label{fig:tc}
\end{figure}

\paragraph{Transportation cost estimates.} The key idea of the TCB and ITCB arm selection rules is to estimate the problem complexity $\Gamma(\bnu)$. This is achieved by maximizing the transportation cost $\Gamma_t\left(\frac{1}{t}\bT_t\right)$, which acts as an estimate of the problem complexity. The analysis of the average sample complexity of the TCB and ITCB algorithms provided in Theorem~\ref{theorem: SC upper bound} critically hinges on the convergence in the transportation cost to the problem complexity (see Appendix~\ref{proof: SC upper bound} for details). To empirically evaluate this, in figures~\ref{fig:tc_v1}-\ref{fig:tc_v4} we illustrate the term $\Gamma_t\left(\frac{1}{t}\bT_t\right)$ for the problem instances specified in Table~\ref{table:convergence}. These empirical results show the convergence of the transportation costs to their optimal values (problem complexity). We note that the range of the transportation cost depends on the mean values of the arms, and consequently, the problem complexity of instance $\bnu_1$ (with $\mu(1) = 100$) is significantly larger than the other bandit instances whose mean values are closer to $1$.

\paragraph{Estimates of $\beta$.} An optimal sampling rule design for $\delta-$PAC BAI is a functional estimation problem in which the learner forms estimates of the problem complexity for arm selection decisions in the form of transportation costs. These transportation costs, among other parameters, are also functions of the sampling proportion assigned to the best arm $a^\star$. Hence, as established in Theorem~\ref{theorem: convergence in allocation} and Theorem~\ref{theorem: convergence in allocation 2}, the TCB and ITCB sampling strategies implicitly {\em estimate} $\beta$, i.e., the optimal sampling proportion for the best arm. To empirically establish the variations of the estimates of the allocation over time, in figures~\ref{fig:beta_v1}-\ref{fig:beta_v4}, we illustrate the allocation of sampling resources to the best arm and compare it to the optimal choice of $\beta=w_{a^\star}$ in the four instances specified in Table~\ref{table:convergence}. Similarly to the transportation cost estimates, it is observed that the estimates $\beta_t$ converge to the optimal value.

\begin{figure}[h]
    \centering
   \begin{subfigure}{0.45\textwidth}
         \centering
         \includegraphics[width=.95\textwidth]{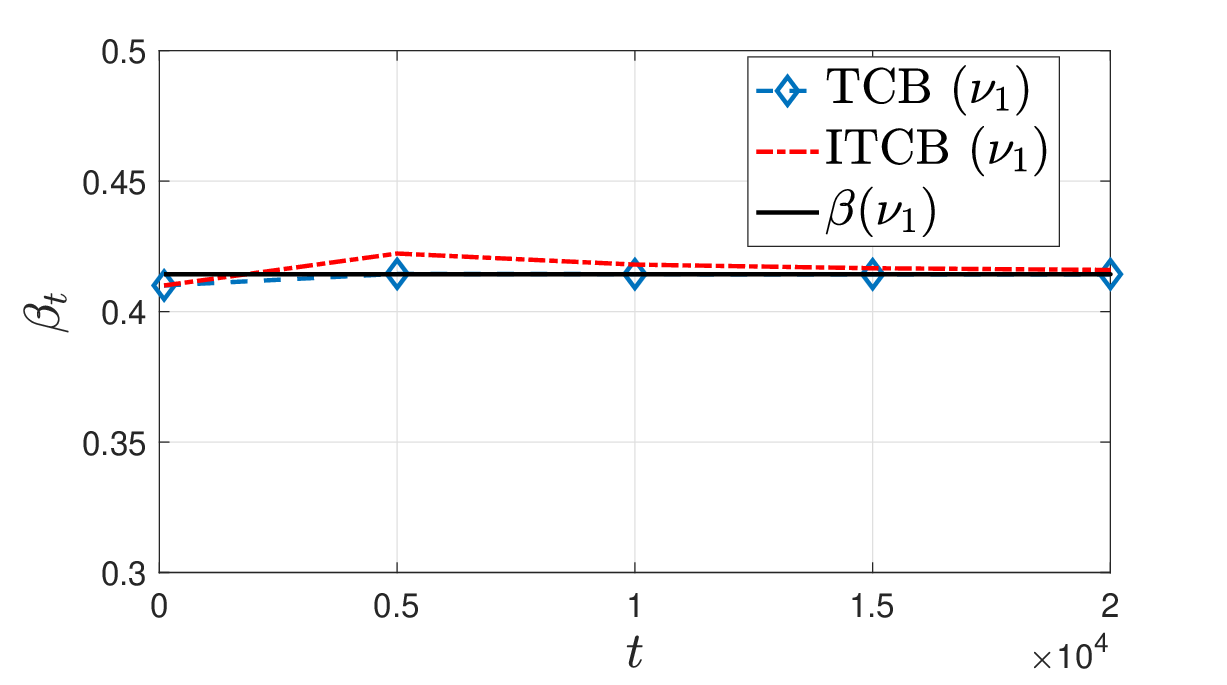}
        \caption{bandit instance $\bnu_1$}
         \label{fig:beta_v1}
     \end{subfigure} 
     \begin{subfigure}{0.45\textwidth}
         \centering
         \includegraphics[width=.95\textwidth]{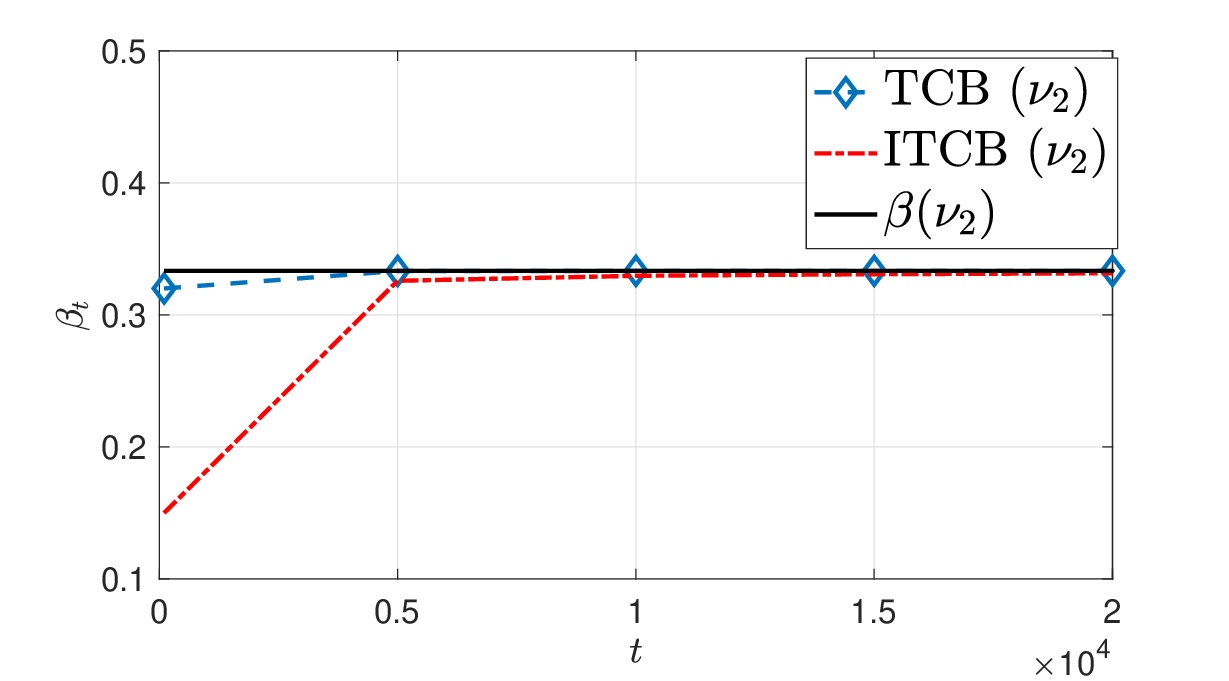}
        \caption{bandit instance $\bnu_2$}
         \label{fig:beta_v2}
     \end{subfigure}
     \begin{subfigure}{0.45\textwidth}
         \centering
         \includegraphics[width=.95\textwidth]{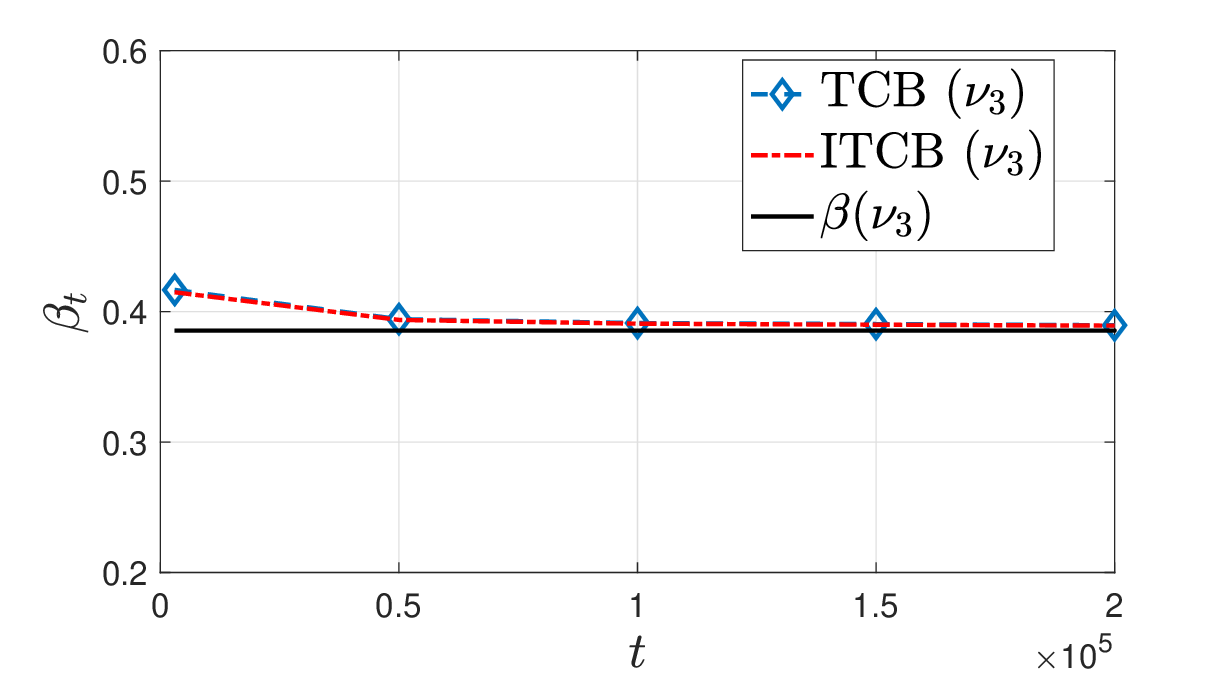}
        \caption{bandit instance $\bnu_3$}
         \label{fig:beta_v3}
     \end{subfigure}
     \begin{subfigure}{0.45\textwidth}
         \centering
         \includegraphics[width=.95\textwidth]{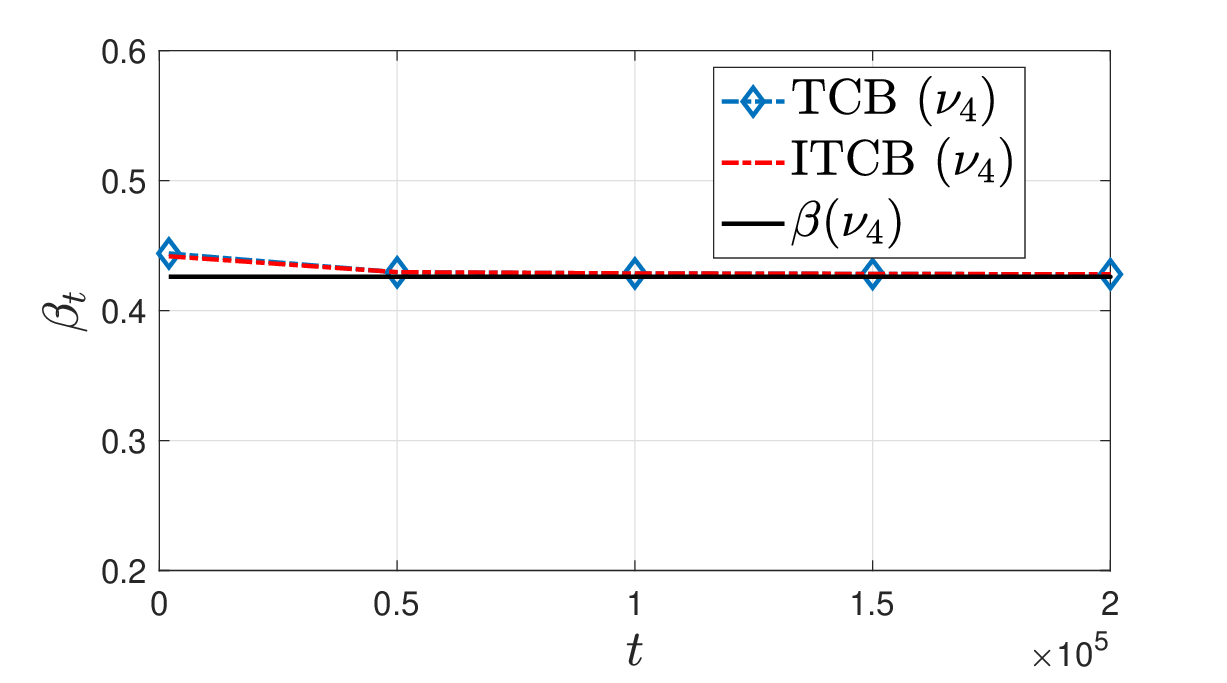}
        \caption{bandit instance $\bnu_4$}
         \label{fig:beta_v4}
     \end{subfigure}
    \caption{Variations of the estimates $\beta_t$ over time.}
    \label{fig:beta}
\end{figure}

\paragraph{Sampling over-sampled arms.} The analysis of the TCB and ITCB algorithms show that their sampling rules eventually sample from the set of {\em under-sampled} arms, i.e., the set of arms that have been sampled fewer times compared to the optimal allocation (for details, see Appendix~\ref{proof: convergence in allocation}). We demonstrate that this is a key property of the TCB and ITCB sampling rules that enables the convergence of the transportation cost to the problem complexity. To show this, we devise an algorithm that preserves the explicit exploration phase of the TCB and ITCB algorithms for convergence in the arm means. However, if the set of under-explored arms is empty, this algorithm always selects the current best arm, i.e., $a_t^{\sf top}$. It can be readily verified that due to the convergence in mean for all the arms, $a_t^{\sf top}$ converges to the best arm $a^\star$.
Furthermore, the best arm $a^\star$ eventually gets over-sampled since we put the entire allocation on $a^\star$. Figure~\ref{fig:bestarm} shows the deviation of the transportation cost from the problem complexity over time. We observe that the transportation cost diverges from the problem complexity, establishing that sampling from the set of over-sampled arms cannot be optimal.

\begin{figure}[h]
    \centering
    \includegraphics[width=0.5\textwidth]{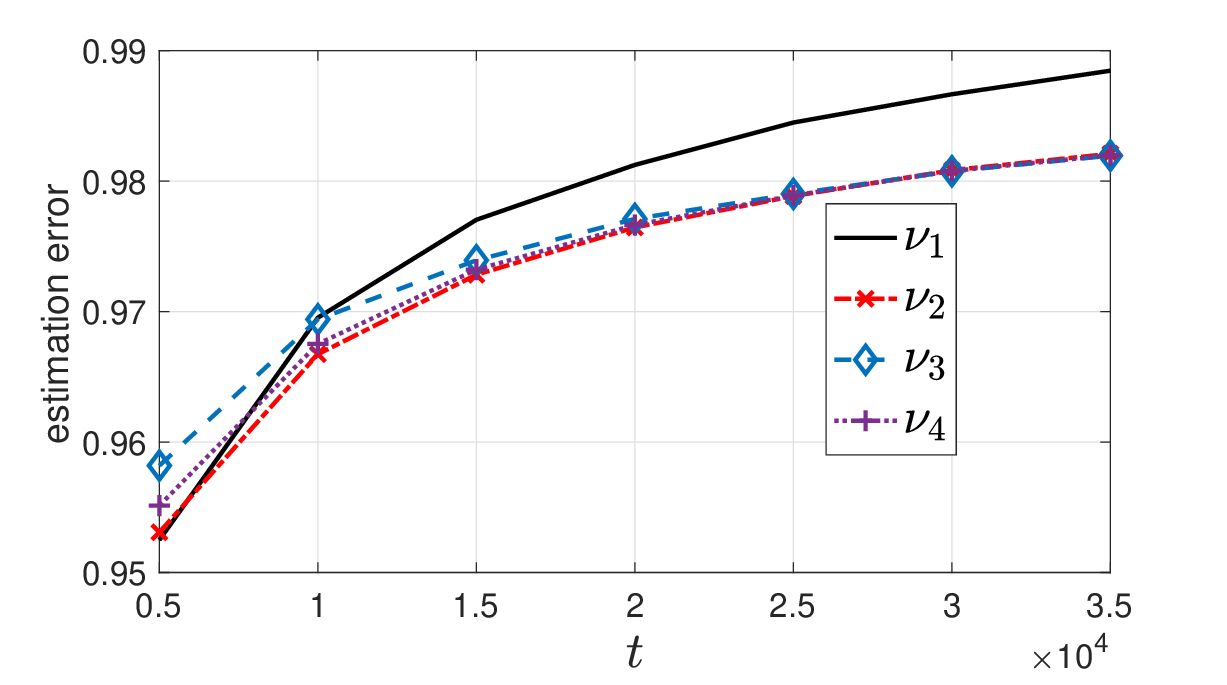}
    \caption{Divergence of transportation cost without controlling under-sampling.}
    \label{fig:bestarm}
\end{figure}

\paragraph{Sensitivity to $\beta$.} Next, we assess the dependency of the state-of-the-art BAI algorithms on $\beta$, and compare the empirical performance of the TCB and ITCB algorithms to these algorithms. These state-of-the-art algorithms for BAI include: T3C~\cite{pmlr-v108-shang20a}, TT-SPRT~\cite{mukherjee2022,mukherjee_SPRT_conf}, TS-TCI~\cite{jourdan2022}, EB-TCI~\cite{jourdan2022}, and FW~\cite{FW}. TT-SPRT and EB-TCI use the empirical best arm as the leader. TT-SPRT uses the arm with the closest GLLR statistic to the empirical best arm as the challenger, and EB-TCI includes an additional exploration penalty to the GLLR statistic to determine the challenger. Distinct from TT-SPRT and EB-TCI, T3C and TS-TCI use Thompson sampling from the posterior distribution to identify the leader. T3C selects the challenger as the arm with the closest GLLR statistic to the leader, while TS-TCI uses a penalized GLLR statistic to promote exploration. FW selects the next arm based on a Frank-Wolfe-based update step to solve~(\ref{eq: optimal_proportions}) based on the current mean estimates.
Furthermore, we also compare the T3C algorithm with the $\beta$-tuning routine from~\cite{russo2016}. Note that we have not compared TTTS with $\beta$-tuning, owing to its large computation time required in identifying the challenger. We perform our experiments based on two common reward distributions, Bernoulli and Gaussian bandits. All the experiments are averaged over $2000$ independent Monte Carlo trials, and we have set $\delta = 10^{-8}$. We have the following two sets of experiments. 
\vspace{0.05in}
\begin{figure}[h]
    \centering
    \begin{minipage}{0.49\textwidth}
        \centering
        \includegraphics[width=0.99\textwidth]{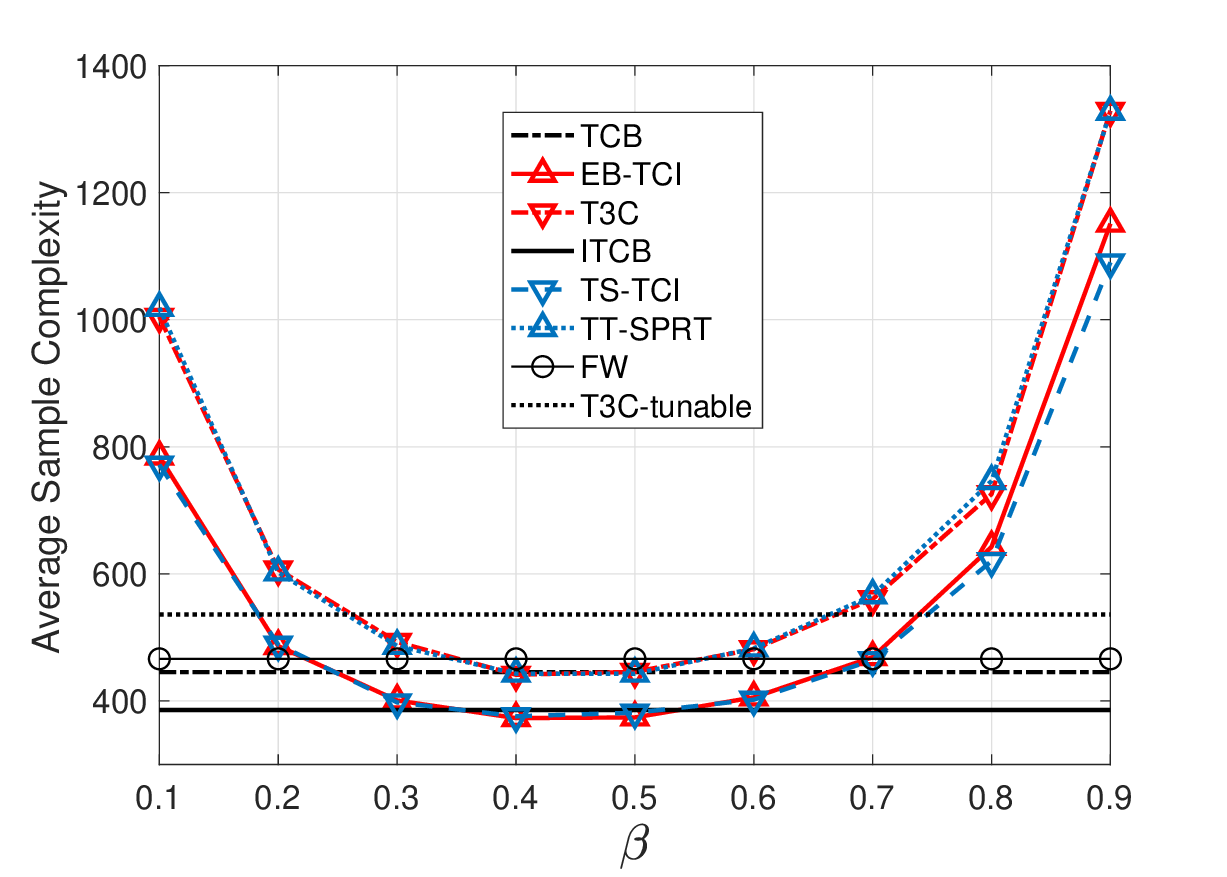} 
        \caption{Sensitivity to $\beta$ in the Bernoulli instance $[0.8,0.5,0.3,0.29,0.06]$ with $\delta=10^{-8}$.}
        \label{fig:3}
    \end{minipage}\hfill
    \begin{minipage}{0.49\textwidth}
        \centering
        \includegraphics[width=0.99\textwidth]{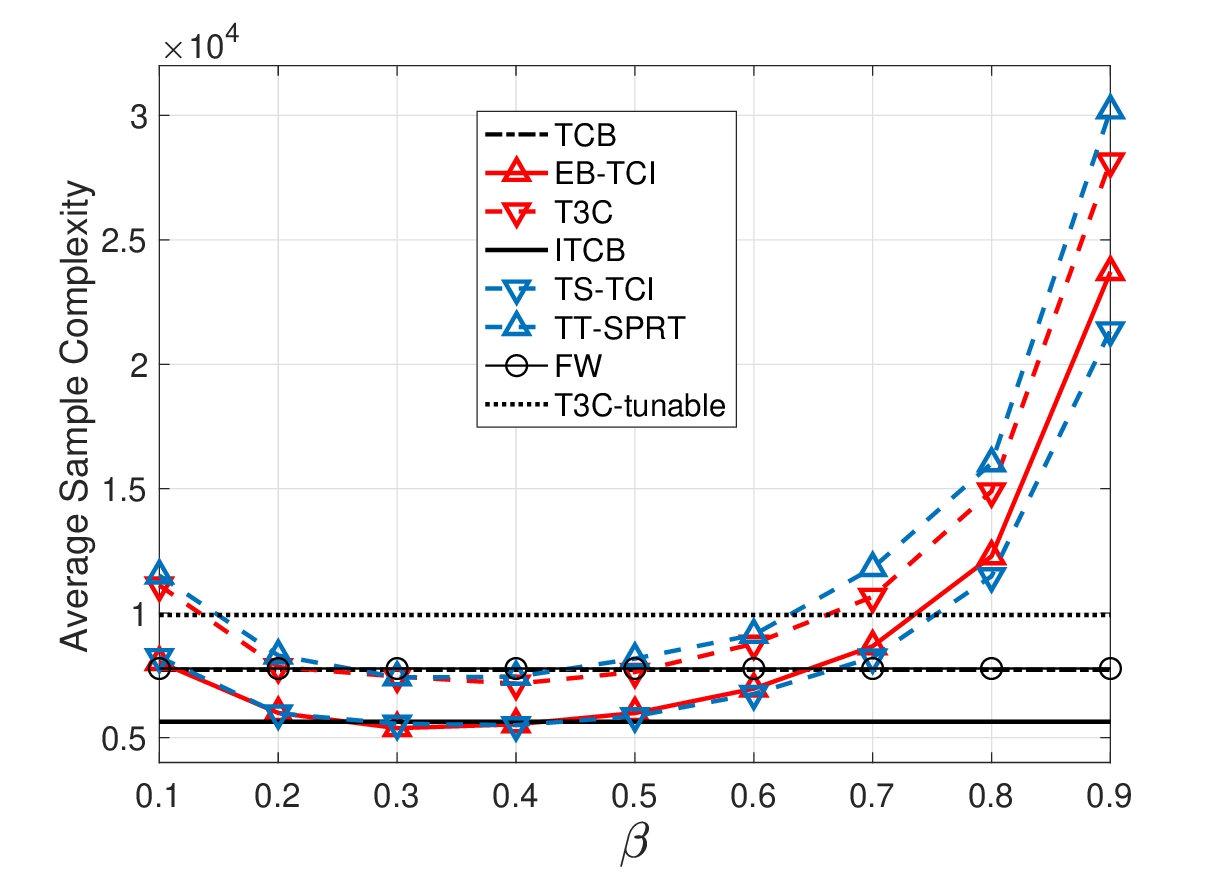} 
        \caption{Sensitivity to $\beta$ in the Gaussian instance $[1.2,1,1,1,1]$ with $\delta=10^{-8}$.}
        \label{fig:4}
    \end{minipage}
    \end{figure}

\begin{enumerate}
    \item  \emph{Bernoulli.} In this experiment, we use a Bernoulli bandit with mean values $[0.8,0.5,0.3,0.29,0.06]$ and set $\delta\triangleq 10^{-8}$. Figure~\ref{fig:3} shows the performance of the TCB and ITCB arm selection rules compared to the state-of-the-art. We observe that the TCB and ITCB arm selection strategies in~(\ref{eq: sampling rule}) and~(\ref{eq: sampling rule 2}) are agnostic to the parameter $\beta$. Furthermore, ITCB outperforms the top-two sampling strategies for various values of the tuning parameter $\beta$, and its performance is comparable to the optimization-based sampling rule FW. Furthermore, the performance of ITCB matches that of TS-TCI and EB-TCI at $\beta\approx 0.4$. 

    \item \emph{Gaussian.} For the next experiment, we take a Gaussian bandit instance with mean values given by $[1.2,1,1,1,1]$. Figure~\ref{fig:4} compares the TCB and ITCB algorithms against state-of-the-art BAI algorithms for various values of $\beta$. We set $\delta\triangleq 10^{-8}$ for this experiment. We observe that the ITCB algorithm outperforms the top-two algorithms in various regimes of $\beta$, and the performances of TS-TCI and EB-TCI match that of ITCB in the range $0.3\leq\beta\leq 0.4$. 
\end{enumerate}
\noindent

\section{Conclusions} 
We have investigated the problem of fixed-confidence best-arm identification (BAI) in stochastic multi-arm bandits (MABs). We have designed the transport cost balancing (TCB) algorithm, in which the key decision is finding the optimal allocation of the sampling resources among different arms. This extends the existing strategies that aim to allocate a $\beta$ fraction of the resources to the best arm and achieve $\beta-$optimality, with $\beta$ remaining a parameter for the sampling routine. In the proposed TCB algorithm, the optimal value of $\beta$ is also {\em implicitly} estimated, rendering the algorithm independent of $\beta$.  As a result, we have established that the proposed TCB algorithm is asymptotically optimal. We have also extended the TCB algorithm by including an additional exploration penalty based on the number of times each arm is chosen. This algorithm, referred to as improved TCB (ITCB), is also shown to achieve asymptotic optimality and improved empirical performance compared to TCB.

\appendices

\newpage 

\section{Proof of Theorem~\ref{theorem: delta-PAC}}
\label{proof: delta-PAC}
The proof of Theorem~\ref{theorem: delta-PAC} is based on upper bounding the GLLR statistic using a mixture martingale construction, which we state in Lemma~\ref{lemma:KL_UB}. Subsequently, we use Ville's supermartingale inequality to upper bound the probability of an incorrect terminal decision. The martingale construction closely resembles Laplace's approximation method for approximating the maximum value of any function~\cite{bandit} and is based on an upper bound on the maximum of a sum of twice-differentiable functions, which is stated in Lemma~\ref{lemma:Taylor}.

\begin{lemma}
\label{lemma:Taylor}
    For any sequence $\{g_s : s\in[t]\}$ of twice-differentiable functions $g : \Theta \mapsto \R$, where $\Theta$ is a compact space, let us define the maximizer
    \begin{align}
        \mu_t\;\triangleq\; \argmax\limits_{\rho\in\Theta}\; \sum\limits_{s\in[t]} g_s(\rho)\ .
    \end{align}
    Furthermore, let us denote $\eta$ as the uniform distribution over $\Theta$. Then, for any $\varepsilon\in\R_+$, we have
    \begin{align}
        \sum\limits_{s\in[t]} g_s(\mu_t)\;&\leq\; \log\E_{\eta}\left[ \exp\left ( \sum\limits_{s\in[t]} g_s(\rho)\right)\right] + \frac{1}{2}\log V_t -\log\left( 1 - 2Q(\varepsilon\sqrt{V_t})\right)\nonumber\\
        &\qquad- \min\limits_{\rho\in[\mu_t-\varepsilon,\mu_t+\varepsilon]} \left\{\sum\limits_{s\in[t]} \left ( g_s(\rho)-g_s(\mu_t)\right )\right\} + \log\frac{|\Theta|}{\sqrt{2\pi}}\ ,
        \label{eq:th1_lemma1}
    \end{align}
    where we have defined 
    \begin{align}
        V_t\;\triangleq\; - \sum\limits_{s\in[t]} g^{\prime\prime}_s(\rho)\bigg\lvert_{\rho=\mu_t}\ .
    \end{align}
\end{lemma}
\begin{proof}
    Using Taylor's expansion, we obtain
    \begin{align}
        \sum\limits_{s\in[t]} g_s(\rho)\;=\; \sum\limits_{s\in[t]} g_s(\mu_t) - \frac{1}{2} (\rho-\mu_t)^2V_t + R(\rho,\mu_t)\ ,
        \label{eq:th1_Taylor1}
    \end{align}
    where we have defined\footnote{Note that we only assume $g$ to be twice-differentiable. In case the higher order derivatives don't exist, we may simply define~$R(\rho,\mu_t)\triangleq \sum_{s\in[t]} (g_s(\rho)-g_s(\mu_t)) + \frac{1}{2}(\rho-\mu_t)^2V_t$.} 
    \begin{align}
        R(\rho,\mu_t)\;\triangleq\; \sum\limits_{s\in[t]}\sum\limits_{j=3}^{\infty} \frac{1}{j!} g_s^{(j)}(\rho-\mu_t)^j\ ,
    \end{align}
    and $g_s^{(j)}(\mu_t)$ denotes the $j^{\rm th}$ derivative of the function $g_s$ evaluated at $\mu_t$. Furthermore, we have 
    \begin{align}
        \displaystyle\bigintsss_{\Theta} \exp\left ( \sum\limits_{s\in[t]} g_s(\rho)\right ) \diff\eta(\rho)\;&\stackrel{\eqref{eq:th1_Taylor1}}{=}\;\frac{1}{|\Theta|} \exp\left ( \sum\limits_{s\in[t]} g_s(\mu_t)\right ) \displaystyle\bigintss_{\Theta} \exp\left ( -\frac{1}{2}(\rho-\mu_t)^2V_t\right )\cdot\exp\left ( R(\rho,\mu_t)\right )\diff \rho\\
        & \geq\;\frac{1}{|\Theta|} \exp\left ( \sum\limits_{s\in[t]} g_s(\mu_t)\right ) \displaystyle\bigintsss_{\mu_t-\varepsilon}^{\mu_t + \varepsilon} \exp\left ( -\frac{1}{2}(\rho-\mu_t)^2V_t\right )\cdot\exp\left ( R(\rho,\mu_t)\right )\diff \rho\ ,
        \label{eq:th1_2}
    \end{align}
    where the right hand side in~\eqref{eq:th1_2} is positive due to the fact that $\mu_t\in\Theta$. Furthermore, let us define
    \begin{align}
        R_t^{\min}\;\triangleq\; \min\limits_{\rho\in[\mu_t - \epsilon, \mu_t + \epsilon]} R(\rho,\mu_t)\ .
        \label{eq:th1_rtmin}
    \end{align}
    Using~\eqref{eq:th1_rtmin},~\eqref{eq:th1_2} can be lower bounded as
    \begin{align}
        \displaystyle\bigintsss_{\Theta} \exp\left( \sum\limits_{s\in[t]} g_s(\rho)\right )\diff\eta(\rho)\;\geq\; \frac{1}{|\Theta|} \exp\left ( \sum\limits_{s\in[t]} g_s(\mu_t) + R_t^{\min}\right )\displaystyle\bigintsss_{\mu_t-\varepsilon}^{\mu_t+\varepsilon} \exp\left( -\frac{1}{2}(\rho-\mu_t)^2V_t\right )\diff\rho\ .
        \label{eq:th1_3}
    \end{align}
    Let us make the following change of variables.
    \begin{align}
        y\;=\;(\rho-\mu_t)\sqrt{V_t}\ , \qquad y_t\;=\;\varepsilon\sqrt{V_t}\ .
        \label{eq:th1_4}
    \end{align}
    We have
    \begin{align}
        \displaystyle\bigintsss_{\Theta} \exp\left ( \sum\limits_{s\in[t]} g_s(\rho)\right ) \diff\eta(\rho)\;&\stackrel{\eqref{eq:th1_3}-\eqref{eq:th1_4}}{\geq}\;\frac{1}{|\Theta|}\exp\left( \sum\limits_{s\in[t]} g_s(\mu_t) + R_t^{\min}\right)\cdot\frac{1}{\sqrt{V_t}}\displaystyle\bigintsss_{-y_t}^{y_t} \exp\left( -\frac{y^2}{2}\right)\diff y\\
        &\quad=\;\frac{1}{|\Theta|} \exp\left ( \sum\limits_{s\in[t]} g_s(\mu_t) + R_t^{\min}\right )\cdot\sqrt{\frac{2\pi}{V_t}}\left(1-2Q(\varepsilon\sqrt{V_t})\right)\ ,
        \label{eq:th1_5}
    \end{align}
    where $Q(x)$ denotes the $Q$ function evaluated at any point $x\in\R$. Taking $\log$ on both sides of~\eqref{eq:th1_5} and rearranging, we have
    \begin{align}
        \sum\limits_{s\in[t]} g_s(\mu_t)\;\leq\; \log\E_{\eta}\left [ \exp\left( \sum\limits_{s\in[t]} g_s(\rho)\right)\right ] + \frac{1}{2}\log V_t -\log\left( 1-2Q(\varepsilon\sqrt{V_t})\right) - R_t^{\min} + \log\frac{|\Theta|}{\sqrt{2\pi}}\ .
        \label{eq:th1_6}
    \end{align}
    Furthermore, note that
    \begin{align}
        R_t^{\min}\;&\stackrel{\eqref{eq:th1_Taylor1}}{=}\;\min\limits_{\rho\in[\mu_t-\varepsilon,\mu_t+\varepsilon]}\;\left\{ \sum\limits_{s\in[t]} g_s(\rho) + \frac{1}{2}(\rho-\mu_t)^2V_t\right\} - \sum\limits_{s\in[t]}g_s(\mu_t)\\
        &\;\;\geq\; \min\limits_{[\mu_t-\varepsilon,\mu_t+\varepsilon]} \left\{ \sum\limits_{s\in[t]} \left( g_s(\rho) - g_s(\mu_t)\right)\right\}\ .
        \label{eq:th1_7}
    \end{align}
    Finally, combining~\eqref{eq:th1_6} and~\eqref{eq:th1_7}, we obtain~\eqref{eq:th1_lemma1}.
\end{proof}
\begin{lemma}
\label{lemma:KL_UB}
    For any arm $i\in[K]$ and for any $\varepsilon\in\R_+$, there exists a non-negative martingale $M_t(i)$ satisfying $\E[M_1(i)] = 1$, such that
    \begin{align}
        T_t(i) d_i(\mu_t(i)\|\mu(i))\;&\leq\;\log M_t(i) + \frac{1}{2}\log\left(T_t(i)\mcI_i(\mu_t(i))\right) - W_t(\varepsilon,i)+ \log\frac{|\Theta|}{\sqrt{2\pi}}\nonumber\\
        &\qquad +T_t(i)\max\left\{d_i(\mu_t(i)\|\mu_t(i)-\varepsilon), d_i(\mu_t(i)\|\mu_t(i)+\varepsilon) \right\} \ .
        \label{eq:lemma_KL_UB}
    \end{align}
\end{lemma}
\begin{proof}
    \begin{align}
        T_t(i)d_i(\mu_t(i)\|\mu(i))\;&=\;\sum\limits_{s\in[t]:A_s=i}\displaystyle\bigintsss_{\Omega}\left (\log\frac{\pi_i(X\med\mu_t(i))}{\pi_i(X\med\mu(i))}\right)\pi_i(X\med\mu_t(i))\diff X\\
        &=\;\sum\limits_{s\in[t]:A_s=i}\displaystyle\bigintsss_{\Omega} \left( \log\pi_i(X\med\mu_t(i))\right)\pi_i(X\med\mu_t(i))\diff X  \nonumber\\ &\qquad\qquad-\sum\limits_{s\in[t]:A_s=i}\displaystyle\bigintsss_{\Omega} \left( \log\pi_i(X\med\mu(i))\right)\pi_i(X\med\mu_t(i))\diff X\ .
        \label{eq:th1_8}
    \end{align}
    Next, we will leverage Lemma~\ref{lemma:Taylor} using $g(\rho) = \log \pi_i(X\med\rho)$. Note that
    \begin{align}
    \label{eq:th1_9a}
        &\displaystyle\bigintsss_{\Omega^{\otimes T_t(i)}} \Big( \log \bar V_t(i)\Big)\displaystyle\prod\limits_{s\in[t]:A_s = i} \pi_i(X_s\med\mu_t(i))\diff\mcX_t^i\nonumber\\
        &\qquad \leq\; \log\displaystyle\bigintsss_{\Omega^{\otimes T_t(i)}} \bar V_t(i)\displaystyle\prod\limits_{s\in[t]:A_s = i}\pi_i(X_s\med\mu_t(i)) \diff\mcX_t^i\\
        &\qquad=\;\log\displaystyle\bigintsss_{\Omega^{\otimes T_t(i)}} \sum\limits_{s\in[t]:A_s = i} \left ( -\frac{\partial^2}{\partial\theta^2}\log \pi_i(X_s\med\theta)\right )_{\theta=\mu_t(i)}\displaystyle\prod\limits_{s\in[t]:A_s = i} \pi_i(X_s\med \mu_t(i))\diff\mcX_t^i\\
        \label{eq:th1_9b}
        & \qquad=\; \log\sum\limits_{s\in[t]:A_s=i}\displaystyle\bigintsss_{\Omega} \left ( -\frac{\partial^2}{\partial\theta^2}\log \pi_i(X\med\theta)\right )_{\theta=\mu_t(i)}\pi_i(X\med\mu_t(i))\diff X\\
        &\qquad =\;\sum\limits_{s\in[t]:A_s=i} \mcI_i(\mu_t(i))\\
        &\qquad =\;T_t(i)\mcI_i(\mu_t(i))\ ,
        \label{eq:th1_9}
    \end{align}
    where~\eqref{eq:th1_9a} is obtained using Jensen's inequality, and~\eqref{eq:th1_9b} is a result of applying the Fubini-Tonelli's theorem.
    Furthermore, we have
    \begin{align}
        \label{eq:th1_9c}
        &-\displaystyle\bigintsss_{\Omega^{\otimes T_t(i)}} \min\limits_{\rho\in[\mu_t(i)-\varepsilon,\mu_t(i)+\varepsilon]} \sum\limits_{s\in[t]:A_s = i} \log\frac{\pi_i(X_s\med\rho)}{\pi_i(X_s\med\mu_t(i))}\displaystyle\prod\limits_{s\in[t]:A_s = i} \pi_i(X_s\med\mu_t(i))\diff\mcX_t^i\nonumber\\
        &\quad\leq - \min\limits_{\rho\in[\mu_t(i)-\varepsilon,\mu_t(i)+\varepsilon]} \displaystyle\bigintsss_{\Omega^{\otimes T_t(i)}}\sum\limits_{s\in[t]:A_s = i} \log\frac{\pi_i(X_s\med\rho)}{\pi_i(X_s\med\mu_t(i))}\displaystyle\prod\limits_{s\in[t]:A_s = i} \pi_i(X_s\med\mu_t(i))\diff\mcX_t^i\\
        &\quad = - \min\limits_{s\in[t]:A_s = i} d_i(\rho\|\mu_t(i))\\
        &\quad = \max\limits_{s\in[t]:A_s = i} d_i(\mu_t(i)\|\rho)\ ,
        \label{eq:th1_9d}
    \end{align}
    where~\eqref{eq:th1_9c} follows from Jensen's inequality along with Assumption $2$. Next, using Lemma~\ref{lemma:Taylor} along with~\eqref{eq:th1_9} and~\eqref{eq:th1_9d}, \eqref{eq:th1_8} can be upper bounded by
    \begin{align}
        T_t(i)d_i(\mu_t(i)\|\mu(i))\;&\leq\;\displaystyle\bigintssss_{\Omega^{\otimes T_t(i)}}\log \E_{\eta}\left [ \exp\left( \sum\limits_{s\in[t]:A_s = i} \log\pi_i(X_s\med\rho)\right)\right]\displaystyle\prod\limits_{s\in[t]:A_s = i} \pi_i(X_s\med\mu_t(i))\diff\mcX_t^i \nonumber\\
        &\qquad -  \displaystyle\bigintssss_{\Omega^{\otimes T_t(i)}} \sum\limits_{s\in[t]:A_s=i}\left( \log \pi_i(X\med\mu(i))\right)\displaystyle\prod\limits_{s\in[t]:A_s = i}\pi_i(X_s\med\mu_t(i))\diff\mcX_t^i\nonumber\\
        &\qquad+\frac{1}{2}\log\left(T_t(i)\mcI_i(\mu_t(i))\right) - W_t(\varepsilon,i) + \log\frac{|\Theta|}{\sqrt{2\pi}}\nonumber\\
        &\qquad + T_t(i)\cdot\max\limits_{\rho\in[\mu_t(i)-\varepsilon,\mu_t(i)+\varepsilon]}d_i(\mu_t(i)\|\rho)\\
        &\leq\; \log\underbrace{\bigintsss_{\Omega^{\otimes T_t(i)}} \E_{\eta}\left[ \exp\left( \sum\limits_{s\in[t]:A_s = i} \log\frac{\pi_i(X_s\med\rho)}{\pi_i(X_s\med\mu(i))}\right)\right]\displaystyle\prod\limits_{s\in[t]:A_s = i} \pi_i(X_s\med\mu_t(i))\diff\mcX_t^i}_{\triangleq M_t(i)}\nonumber\\
        &\qquad+\frac{1}{2}\log\left(T_t(i)\mcI_i(\mu_t(i))\right) - W_t(\varepsilon,i) + \log\frac{|\Theta|}{\sqrt{2\pi}}\nonumber\\
        &\qquad + T_t(i)\cdot\max\limits_{\rho\in[\mu_t(i)-\varepsilon,\mu_t(i)+\varepsilon]}d_i(\mu_t(i)\|\rho)\ ,
        \label{eq:th1_10}
    \end{align}
    where~\eqref{eq:th1_10} is obtained by using the Jensen's inequality, and $M_t(i)$ is defined such that $M_1(i) = 1$, if $A_1\neq i$.
    
    Furthermore, let us define
    \begin{align}
        \rho_t\;\triangleq\;\argmax\limits_{\rho\in[\mu_t(i)-\varepsilon,\mu_t(i)+\varepsilon]} \left\{ T_t(i) d_i(\mu_t(i)\|\rho)\right\}\ .
    \end{align}
    There exists $\gamma\in(0,1)$ such that we have $\rho_t = \gamma(\mu_t(i) - \varepsilon) + (1-\gamma)(\mu_t(i)+\varepsilon)$. Owing to the convexity of KL divergence, we have
    \begin{align}
        d_i(\mu_t(i)\|\rho_t)\;&\leq\; \gamma d_i(\mu_t(i)\|\mu_t(i)-\varepsilon) + (1-\gamma) d_i(\mu_t(i)\|\mu_t(i)+\varepsilon)\\
        &\leq\max\{d_i(\mu_t(i)\|\mu_t(i)-\varepsilon), d_i(\mu_t(i)\|\mu_t(i)+\varepsilon)\}\ . 
        \label{eq:th1_13}
    \end{align}
    Finally, combining~\eqref{eq:th1_10} and~\eqref{eq:th1_13}, we recover~\eqref{eq:lemma_KL_UB}. The only thing left to prove is that $M_t(i)$ is a martingale satisfying $\E[M_1(i)] = 1$. Note that if $A_t\neq i$, we have $\E[M_t(i)\med\mcF_{t-1}] = M_{t-1}(i)$. If $A_t=i$, we have
    \begin{align}
        \E[M_t(i)\med\mcF_{t-1}] \;&=\; \E\left[ \displaystyle\bigintsss_{\Omega^{\otimes T_t(i)}} \E_{\eta} \left[ \displaystyle\prod\limits_{s\in[t]:A_s = i} \frac{\pi_i(X_s\med\rho)}{\pi_i(X_s\med\mu(i))}\right]\displaystyle\prod\limits_{s\in[t]:A_s=i}\pi_i(X_s\med\mu_t(i))\diff\mcX_t^i\;\Bigg\lvert\;\mcF_{t-1}\right]\\
        &=\; M_{t-1}(i)\cdot\E\left[ \displaystyle\bigintsss_{\Omega} \E_{\eta}\left[ \frac{\pi_i(X\med\rho)}{\pi_i(X\med\mu(i))}\right]\pi_i(X\med\mu_t(i))\diff X\right]\\
         &=\; M_{t-1}(i)\cdot\E_{\eta}\left[ \displaystyle\bigintsss_{\Omega} \E\left[ \frac{\pi_i(X\med\rho)}{\pi_i(X\med\mu(i))}\right]\pi_i(X\med\mu_t(i))\diff X\right]\\
         &=\; M_{t-1}(i)\cdot\E_{\eta}\left[ \displaystyle\bigintsss_{\Omega} \underbrace{\left(\displaystyle\bigintsss_{\Omega} \frac{\pi_i(X\med\rho)}{\pi_i(X\med\mu(i))} \pi_i(X\med\mu(i)) \diff X\right)}_{=1}  \pi_i(X\med\mu_t(i))\diff X\right]\\
         &=\;M_{t-1}(i)\cdot \E_{\eta}\left[ \displaystyle\bigintssss \pi_i(X\med\mu_t(i))\diff X\right]\\
         &=\;M_{t-1}(i)\ ,
    \end{align}
    which proves that $M_t(i)$ is a martingale. Furthermore, by definition, if $A_1\neq i$, $\E[M_1(i)] = 1$. Alternatively, if $A_1 = i$, we have
    \begin{align}
        \E[M_1(i)]\;&=\; \E_{\eta}\left[ \displaystyle\bigintsss_{\Omega} \underbrace{\left(\displaystyle\bigintsss_{\Omega} \frac{\pi_i(X\med\rho)}{\pi_i(X\med\mu(i))} \pi_i(X\med\mu(i)) \diff X\right)}_{=1}  \pi_i(X\med\mu_1(i))\diff X\right]\\
        &=\;\E_{\eta}[1]\\
        &=\;1\ .
    \end{align}
    This concludes the proof of Lemma~\ref{lemma:KL_UB}.
\end{proof}
\noindent Next, we delineate the choice of the threshold $\beta_t(\delta)$ that facilitates the $\delta-$PAC guarantee of the proposed stopping rule in~\eqref{eq:stop}. We have
\begin{align}
    \P_{\bnu} \Big ( \tau < +\infty,\; \hat A_{\tau} \neq a^\star\Big )\;&= \; \P_{\bnu}\Big (\exists t\in\N, i\neq a^\star : \bar\mu_t(i)\geq\max\limits_{j\neq i} \bar\mu_t(j),\;\min\limits_{j\neq i} \Lambda_t(i,j)\geq \beta_t(\delta)\Big )\\
    &\leq\;\sum\limits_{i\neq a^\star} \P_{\bnu}\left(\exists t\in\N : \bar\mu_t(i)\geq\max\limits_{j\neq i} \bar\mu_t(j),\;\min\limits_{j\neq i} \Lambda_t(i,j) \geq \beta_t(\delta) \right )\\
    & \leq\;\sum\limits_{i\neq a^\star} \P_{\bnu}\left( \exists t\in\N : \bar\mu_t(i)\geq\max\limits_{j\neq i}\bar\mu_t(j),\; \Lambda_t(i,a^\star) \geq \beta_t(\delta)\right)\\
    &\leq \;\sum\limits_{i\neq a^\star} \P_{\bnu}\Big( \exists t\in\N : T_t(i)d_i(\mu_t(i)\|\tilde\mu_t(a^\star)) + T_t(a^\star)d_{a^\star}(\mu_t(a^\star)\|\tilde\mu_t(a^\star))\geq \beta_t(\delta)\Big)\\
    &\leq\;\sum\limits_{i\neq a^\star} \P_{\bnu}\Big( \exists t\in\N : T_t(i)d_i(\mu_t(i)\|\mu(i)) + T_t(a^\star)d_{a^\star}(\mu_t(a^\star)\|\mu(a^\star))\geq \beta_t(\delta)\Big)\ ,
    \label{eq:th1_14}
\end{align}
where~\eqref{eq:th1_14} uses the definition of $\Lambda_t(i,a^\star)$. Specifically, using the KKT conditions, we obtain that the minimizer in~\eqref{eq:GLLR} satisfies the condition that $\rho(i)=\rho(a^\star)$. Denoting the minimizer by $\tilde\mu_t(a^\star)$, i.e., 
\begin{align}
    \tilde\mu_t(a^\star)\;\triangleq\;\argmin\limits_{x\in [\mu_(i),\mu_t(a^\star)]}\; \left\{T_t(i)d_i(\mu_t(i)\|x) + T_t(a^\star)d_{a^\star}(\mu_t(a^\star)\|x)\right\}\ ,
    \label{eq:th1_15}
\end{align}
we obtain~\eqref{eq:th1_14}. Furthermore $\mu(i)$ and $\mu(a^\star)$ satisfies the constraint in~\eqref{eq:GLLR}, which yields~\eqref{eq:th1_14}. Next, using Lemma~\ref{lemma:KL_UB}, \eqref{eq:th1_14} can be upper bounded as
\begin{align}
    \P_{\bnu}\Big(\tau<+\infty,\;\hat A_{\tau} \neq a^\star \Big)\;&\leq\; \sum\limits_{i\neq a^\star} \P_{\bnu}\bigg ( \exists t\in\N : \log \underbrace{M_t(i) M_t(a^\star)}_{\triangleq M_t} + \frac{1}{2}\log T_t(i) T_t(a^\star) + \frac{1}{2}\log\mcI_i(\mu_t(i))\mcI_{a^\star}(\mu_t(a^\star))\nonumber\\
    &\qquad+ 2\log\frac{|\Theta|}{\sqrt{2\pi}} + T_t(i)\max\{d_i(\mu_t(i)\|\mu_t(i)-\varepsilon), d_i(\mu_t(i)\|\mu_t(i)+\varepsilon) \} \nonumber\\
    &\qquad\qquad + T_t(a^\star)\max\{d_{a^\star}(\mu_t(a^\star)\|\mu_t(a^\star)-\varepsilon),d_{a^\star}(\mu_t(a^\star)\|\mu_t(a^\star)+\varepsilon)\}\nonumber\\
    &\qquad\qquad\qquad-W_t(\varepsilon,i) - W_t(a^\star)\geq \beta_t(\delta)\bigg )\\
    &\leq\; \sum\limits_{i\neq a^\star} \P_{\bnu}\bigg( \exists t\in\N : \log M_t + \log \frac{t}{2} + \max\limits_{i\in[K]}\mcI_i(\mu_t(i)) - 2\cdot\min\limits_{i\in[K]}W_t(\varepsilon,i)\nonumber\\
    &\quad + t\cdot\max\limits_{i\in[K]}\left\{\max\{d_i(\mu_t(i)\|\mu_t(i)-\varepsilon),d_i(\mu_t(i)\|\mu_t(i)+\varepsilon)\}\right\}+ 2\log\frac{|\Theta|}{\sqrt{2\pi}}\geq\beta_t(\delta)\bigg)\ ,
    \label{eq:th1_16}
\end{align}
where~\eqref{eq:th1_16} is obtained by using the AM-GM inequality. Next, recalling the definitions of $W_t(\varepsilon)$ in~\eqref{eq:W_t} and $\varepsilon_t$ in~\eqref{eq:varepsilon_t}, let us set
\begin{align}
    \beta_t(\delta)\;&\triangleq\; W_t(\varepsilon) + t\varepsilon_t + 2\log\frac{|\Theta|}{\sqrt{2\pi}} + \log\frac{t(K-1)}{2\delta}\ .
    \label{eq:th1_17}
\end{align}
Furthermore, note that $M_t$ is a martingale. To verify this, WLOG, let us assume that $A_t = i$. We have
\begin{align}
    \E[M_t\med\mcF_{t-1}]\;&=\; \E[M_t(i)\cdot M_t(a^\star)\med\mcF_{t-1}]\\
    & =\; M_{t-1}(a^\star)\cdot \E[M_t(i)\med\mcF_{t-1}]\\
    & = \; M_{t-1}(a^\star)\cdot M_{t-1}(i)\ .
    \label{eq:th1_18}
\end{align}
Furthermore, $\E[M_1] = E[M_1(a^\star)\cdot M_1(i)] = \E[M_1(a^\star)]\cdot\E[M_1(i)] = 1$. Finally, combining~\eqref{eq:th1_16}, \eqref{eq:th1_17} and~\eqref{eq:th1_18}, and using Ville's inequality, we obtain
\begin{align}
    \P_{\bnu}\Big(\tau<+\infty,\;\hat A_{\tau} \neq a^\star\Big) \;&\leq\; \sum\limits_{i\neq a^\star} \P_{\bnu} \Big(\exists t\in\N : \log M_t\geq\log\frac{K-1}{\delta} \Big)\\
    &\leq\;\sum\limits_{i\neq a^\star} \frac{\delta}{K-1}\E[M_1]\\
    &=\; \delta\ .
\end{align}
This concludes the proof.

\section{Problem Complexity}

\subsection{Proof of Lemma~\ref{lemma:simplified problem complexity}                        }
\label{proof: simplified problem complexity}
Recall that corresponding to a bandit instance $\bnu\in\mcM$ with the best arm $a^\star$, the set of alternate bandit instances is defined as
\begin{align}
    {\sf alt}(a^\star) \triangleq \left\{ \bar\bnu\in\mcM : m(\bar\P_{a^\star})\leq\max\limits_{i\neq a^\star}\; m(\bar\P_i)\right\}\ .
\end{align}
It can be readily verified that the set of alternate bandit instances can be equivalently stated as
\begin{align}
    {\sf alt}(a^\star) = \bigcup\limits_{i\neq a^\star} \Big\{\bar\bnu\in\mcM : m(\bar\P_i) \geq m(\bar\P_{a^\star})\Big\}\ .
    \label{eq:alternate}
\end{align}
Using (\ref{eq:alternate}), the problem complexity can be simplified as follows:
\begin{align}
    \Gamma(\bnu) &= \sup\limits_{\bw\in\Delta^K}\inf\limits_{\bar\bnu\in{\sf alt}(a^\star)}\; \sum\limits_{i\in[K]} w_i D_{\sf KL}(\P_i\|\bar\P_i)\\
    & = \sup\limits_{\bw\in\Delta^K}\;\min\limits_{i\neq a^\star}\;\inf\limits_{\bar\bnu\in\mcM : m(\bar\P_i)\geq m(\bar\P_{a^\star})}\;\Big \{ w_{a^\star}D_{\sf KL}(\P_{a^\star}\|\bar\P_{a^\star}) + w_iD_{\sf KL}(\P_i\|\bar\P_i)\Big \}\ .
\end{align}
Let us define
\begin{align}
    \Gamma_i(\bnu,\bw)\triangleq \inf\limits_{\bar\bnu\in\mcM : m(\bar\P_i)\geq m(\bar\P_{a^\star})}\;\Big \{ w_{a^\star}D_{\sf KL}(\P_{a^\star}\|\bar\P_{a^\star}) + w_iD_{\sf KL}(\P_i\|\bar\P_i)\Big \}\ .
\end{align}
It can be readily verified that
\begin{align}
    \Gamma(\bnu,\bw)\triangleq \min\limits_{i\neq a^\star}\;\Gamma_i(\bnu,\bw)\ .
\end{align}
Note that $\Gamma_i(\bnu,\bw)$ can be further simplified as:
\begin{align}
    \Gamma_i(\bnu,\bw) &= \inf\limits_{x\in\R}\left \{w_{a^\star}\inf\limits_{\bar\P\in\mcP(\Omega): m(\bar\P)\leq x}\; D_{\sf KL}(\P_{a^\star}\|\bar\P) + w_i \inf\limits_{\bar\P\in\mcP(\Omega): m(\bar\P)\geq x}\; D_{\sf KL}(\P_{i}\|\bar\P)\right\} \\
    &= \inf\limits_{x\in\R}\Big\{w_{a^\star}d_{\sf U}(\P_{a^\star},x) + w_id_{\sf L}(\P_i,x)\Big\}\ .
    \label{eq:alternate_pc}
\end{align}
Note that $d_{\sf U}$ is non-increasing in $x$ and $d_{\sf L}$ is non-decreasing in $x$. Thus, when $x<\mu(i)$, we have $d_{\sf U}(\P_{a^\star},x) > d_{\sf U}(\P_{a^\star},\mu(i))$, and $d_{\sf L}(\P_i,x)> d_{\sf L}(\P_i,\mu(i))$. This implies that the optimizer $x_i^\star$ of (\ref{eq:alternate_pc}) should satisfy $x_i^\star\geq \mu(i)$. Using a similar argument, we can show that $x_i^\star\leq \mu(a^\star)$. Hence, (\ref{eq:alternate_pc}) can be rewritten as:
\begin{align}
    \Gamma_i(\bnu,\bw) = \inf\limits_{x\in\left[\mu(i),\mu(a^\star)\right]}\Big\{w_{a^\star}d_{\sf U}(\P_{a^\star},x) + w_id_{\sf L}(\P_i,x)\Big\}\ .
    \label{eq:Gamma_i}
\end{align}
Finally, the problem complexity can be equivalently expressed as:
\begin{align}
    \Gamma(\bnu) = \sup\limits_{\bw\in\Delta^K}\;\min\limits_{i\neq a^\star}\;\inf\limits_{x\in\left[\mu(i),\mu(a^\star)\right]}\; \Big\{w_{a^\star}d_{\sf U}(\P_{a^\star},x) + w_id_{\sf L}(\P_i,x)\Big\}\ .
\end{align}

\subsection{Proof of Lemma~\ref{lemma:properties of problem complexity}}
\label{proof: properties of problem complexity}
\begin{enumerate}
    \item First, we will prove that $d_{\sf U}$ and $d_{\sf L}$ are strictly convex in $x$. For any $x\in\R$ and $y\in\R$, and for any $\lambda\in[0,1]$, let us define
    \begin{align}
        z \triangleq  \lambda x + (1-\lambda)y\ .
        \label{eq:z}
    \end{align}
    Furthermore, define
    \begin{align}
        \eta_x &\triangleq \arginf\limits_{\eta\in\mcP(\Omega): m(\eta)\leq x}\; D_{\sf KL}(\P_{a^\star}\| \eta)\ ,\nonumber\\
        \eta_y &\triangleq \arginf\limits_{\eta\in\mcP(\Omega): m(\eta)\leq y}\; D_{\sf KL}(\P_{a^\star}\| \eta)\ ,\nonumber\\ \text{and}\qquad \eta_z &\triangleq \arginf\limits_{\eta\in\mcP(\Omega): m(\eta)\leq z}\; D_{\sf KL}(\P_{a^\star}\| \eta)\ .
    \end{align}
    Furthermore, define $\kappa_z \triangleq \lambda\eta_x + (1-\lambda)\eta_y$. Note that
    \begin{align}
        m(\kappa_z) &= \lambda m(\eta_x) + (1-\lambda)m(\eta_y)\\
        &\leq \lambda x + (1-\lambda)y\\
        &\stackrel{(\ref{eq:z})}{=}z\ .
        \label{eq:convex1}
    \end{align}
    Now,
    \begin{align}
        d_{\sf U}(\P_{a^\star},z) & = D_{\sf KL}(\P_{a^\star}\| \eta_z)\\
        \label{eq:convex2}
        &\leq D_{\sf KL}(\P_{a^\star}\| \kappa_z) \\
        \label{eq:convex3}
        &< \lambda D_{\sf KL} (\P_{a^\star}\| \eta_x) + (1-\lambda) D_{\sf KL}(\P_{a^\star}\| \eta_y)\\
        & = \lambda d_{\sf U}(\P_{a\star}, x) + (1-\lambda) d_{\sf L}(\P_{a^\star},y)\ ,
    \end{align}
    where (\ref{eq:convex2}) is a result of (\ref{eq:convex1}), and (\ref{eq:convex3}) is a result of the strict convexity of KL divergence in both arguments. Thus, $d_{\sf U}$ is strictly convex in $x$. Using a similar argument, we can prove that $d_{\sf L}$ is also strictly convex in $x$. Thus, $g_i: \mcM\times \R \mapsto \R$, defined as
    \begin{align}
        g_i(\bnu,x)\triangleq w_{a^\star}d_{\sf U}(\P_{a^\star},x) + w_id_{\sf L}(\P_i,x)\ ,
    \end{align}
    is strictly convex in its second argument. Thus, $g_i$ has a unique minimum in $[\mu(i),\mu(a^\star)]$. 
    
    \item For establishing the continuity of $\Gamma : \mcM\mapsto \R$ and $\bw : \mcM\mapsto\Delta^K$, we will leverage the following two lemmas, which provide the sufficient conditions for continuity.
    \begin{lemma}[Berge's maximum theorem~\cite{sundaram1996first}] 
    \label{lemma:Berge}
        Suppose $g$ is a continuous function on $\mcS\times\Theta$ and $\mcD: \Theta\mapsto\mcS$ is a compact-valued continuous correspondence on $\Theta$. Let
        \begin{align}
            g^\star(\theta)\triangleq \max\limits_{x\in\mcD(\theta)}\; g(x,\theta)\quad
            \text{and}\qquad  \mcD^\star(\theta)\triangleq \argmax\limits_{x\in\mcD(\theta)}\; g(x,\theta)\ .
        \end{align}
        Then, $g^\star$ is a continuous function on $\Theta$, and $\mcD^\star$ is a compact-valued upper semicontinuous correspondence on $\Theta$.
    \end{lemma}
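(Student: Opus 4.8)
The plan is to exploit the two halves of the continuity of the correspondence $\mcD$ — its upper and its lower hemicontinuity — separately, since the upper-semicontinuity and the lower-semicontinuity of the value function $g^\star$ rely on different halves. Throughout I would work with the sequential characterizations of (hemi)continuity, which are legitimate here because the spaces of interest in the paper's application (the parameter space $\mcM$ and the constraint space $\R$) are metric.

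First I would check that the problem is well-posed. For each $\theta$, the set $\mcD(\theta)$ is nonempty and compact and $g(\cdot,\theta)$ is continuous, so the maximum is attained by the Weierstrass extreme value theorem; hence $\mcD^\star(\theta)$ is nonempty. Moreover $\mcD^\star(\theta)$ is the intersection of $\mcD(\theta)$ with the preimage of $\{g^\star(\theta)\}$ under the continuous map $x\mapsto g(x,\theta)$, so it is a closed subset of the compact set $\mcD(\theta)$ and therefore itself compact.

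Next I would establish continuity of $g^\star$ in two pieces. For upper semicontinuity, I would take $\theta_n\to\theta$, select maximizers $x_n\in\mcD^\star(\theta_n)$ so that $g^\star(\theta_n)=g(x_n,\theta_n)$, and use upper hemicontinuity of $\mcD$ together with the compactness of its values to extract a subsequence along which $x_n\to x$ with $x\in\mcD(\theta)$; continuity of $g$ then yields $g^\star(\theta_n)=g(x_n,\theta_n)\to g(x,\theta)\leq g^\star(\theta)$, giving $\limsup_n g^\star(\theta_n)\leq g^\star(\theta)$. For lower semicontinuity, I would fix a maximizer $x^\star\in\mcD^\star(\theta)$ and use lower hemicontinuity of $\mcD$ to produce $x_n\in\mcD(\theta_n)$ with $x_n\to x^\star$, so that $g^\star(\theta_n)\geq g(x_n,\theta_n)\to g(x^\star,\theta)=g^\star(\theta)$, giving $\liminf_n g^\star(\theta_n)\geq g^\star(\theta)$. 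Combining the two bounds gives $g^\star(\theta_n)\to g^\star(\theta)$, i.e.\ continuity of $g^\star$.

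Finally I would show that $\mcD^\star$ is upper hemicontinuous by verifying that it has a closed graph, which — because its values are contained in the compact-valued $\mcD$ — is equivalent to upper hemicontinuity. Suppose $\theta_n\to\theta$ and $x_n\in\mcD^\star(\theta_n)$ with $x_n\to x$. Closedness of the graph of $\mcD$ (a consequence of its upper hemicontinuity with compact values) gives $x\in\mcD(\theta)$, and passing to the limit in $g(x_n,\theta_n)=g^\star(\theta_n)$ while using continuity of $g$ and of the already-established $g^\star$ yields $g(x,\theta)=g^\star(\theta)$, so $x\in\mcD^\star(\theta)$. The hard part will be keeping the two hemicontinuity notions cleanly separated and invoking each exactly where it is needed: upper hemicontinuity (with compact values) supplies convergent subsequences of maximizers and the closed-graph property, whereas lower hemicontinuity supplies approximating sequences inside the moving feasible sets. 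A secondary subtlety is that upper hemicontinuity yields only a convergent subsequence, so the $\limsup$/subsequence bookkeeping must be handled carefully rather than asserting convergence of the full sequence of maximizers.
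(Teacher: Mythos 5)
Your proof is correct. Note that the paper itself does not prove this lemma at all---it is imported verbatim from the cited reference (Sundaram's \emph{A First Course in Optimization Theory}) and used as a black box in the proof of Lemma~\ref{lemma:properties of problem complexity}---so there is no in-paper argument to compare against; your write-up is essentially the standard textbook proof of Berge's maximum theorem. The structure is the right one: Weierstrass for well-posedness and compactness of $\mcD^\star(\theta)$, upper hemicontinuity plus compact values to extract convergent subsequences of maximizers (giving upper semicontinuity of $g^\star$), lower hemicontinuity to approximate a fixed maximizer from within the moving feasible sets (giving lower semicontinuity), and finally the closed-graph argument combined with containment in the compact-valued $\mcD$ to conclude upper hemicontinuity of $\mcD^\star$. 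The one caveat you correctly flag yourself is that the sequential characterizations of hemicontinuity and of the closed-graph criterion are only valid in metric (or first-countable) spaces; this is harmless here, since the paper applies the lemma with $\Theta = (\mcM, D_{\sf TV})$ and $\mcS = \Delta^K$, both metric, but a fully general topological statement would require the neighborhood-based definitions instead.
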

    \begin{lemma}[\cite{Lin91divergencemeasures}]
    \label{lemma:JS}
        Let us denote the generalized Jensen-Shannon (JS) divergence between two measures $\P_1$ and $\P_2$ with weight $\alpha\in(0,1)$ by
        \begin{align}
            {\sf JS}_\alpha(\P_1\|\P_2)\triangleq \alpha D_{\sf KL}(\P_1\|\bar\P_\alpha) + (1-\alpha) D_{\sf KL}(\P_2\|\bar\P_\alpha)\ ,
        \end{align}
        where we have defined 
        \begin{align}
            \bar\P_\alpha \triangleq \alpha \P_1 + (1-\alpha)\P_2\ .
        \end{align}
        Then, ${\sf JS}_\alpha$ is upper-bounded as
        \begin{align}
            {\sf JS}_\alpha(\P_1\|\P_2)\leq 1\ .
        \end{align}
    \end{lemma}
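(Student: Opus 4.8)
The plan is to establish the uniform bound by a direct pointwise estimate on the likelihood ratios, which sidesteps any need for integrability of the individual measures. First I would introduce a common dominating measure $\mu$ (one may simply take $\mu = \bar\P_\alpha$) and write $p_1,p_2,\bar p_\alpha$ for the densities of $\P_1,\P_2,\bar\P_\alpha$ with respect to $\mu$. The key observation is that the mixture density dominates each component in the sense that $\bar p_\alpha = \alpha p_1 + (1-\alpha)p_2 \geq \alpha p_1$ and, symmetrically, $\bar p_\alpha \geq (1-\alpha)p_2$, pointwise. Hence $p_1/\bar p_\alpha \leq 1/\alpha$ and $p_2/\bar p_\alpha \leq 1/(1-\alpha)$ wherever the respective densities are positive, so that $\log(p_1/\bar p_\alpha)\leq -\log\alpha$ and $\log(p_2/\bar p_\alpha)\leq -\log(1-\alpha)$.

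Substituting these pointwise bounds into the definition and using $\int p_i\,\diff\mu = 1$ for $i\in\{1,2\}$, the integrals collapse to constants and yield
\[
    {\sf JS}_\alpha(\P_1\|\P_2)\;\leq\; -\alpha\log\alpha - (1-\alpha)\log(1-\alpha)\;\triangleq\; H(\alpha)\ ,
\]
where $H$ is the binary entropy. The argument then finishes by noting that $H$ attains its maximum at $\alpha = 1/2$, giving $H(\alpha)\leq\log 2\leq 1$; in nats this even gives the sharper $\ln 2 < 1$, but the loose uniform bound $1$ is all that is needed for the subsequent continuity argument. As an alternative derivation I would mention the information-theoretic route: introduce a binary label $Z$ equal to $1$ with probability $\alpha$ and $2$ with probability $1-\alpha$, and let $X\mid\{Z=i\}\sim\P_i$, so that the marginal law of $X$ is exactly $\bar\P_\alpha$ and the mutual information $I(X;Z)$ equals ${\sf JS}_\alpha(\P_1\|\P_2)$; the bound is then immediate from $I(X;Z)=H(Z)-H(Z\mid X)\leq H(Z)=H(\alpha)\leq 1$.

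I do not expect a substantive obstacle here, since the inequality is elementary once the pointwise ratio bound is in hand. The only point requiring care is the measure-theoretic bookkeeping: one must verify that each $\P_i$ is absolutely continuous with respect to $\bar\P_\alpha$ — which holds automatically because $\bar\P_\alpha\geq\alpha\P_1$ and $\bar\P_\alpha\geq(1-\alpha)\P_2$ as measures, so the KL divergences and densities are well defined — and one must treat the set where a density vanishes using the standard convention $0\log 0 = 0$. With those conventions fixed, the remaining steps are a routine integration and the elementary maximization of the binary entropy.
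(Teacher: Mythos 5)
Your proof is correct, but it necessarily takes a different route from the paper, because the paper contains no proof of this statement at all: Lemma~\ref{lemma:JS} is imported by citation from Lin (1991) and used as a black box inside the continuity argument for $\Gamma_i(\bnu,\bw)$. Your first derivation --- the pointwise domination $\bar p_\alpha = \alpha p_1 + (1-\alpha)p_2 \geq \alpha p_1$, giving $D_{\sf KL}(\P_1\|\bar\P_\alpha)\leq -\log\alpha$, and symmetrically $D_{\sf KL}(\P_2\|\bar\P_\alpha)\leq -\log(1-\alpha)$ --- is elementary and has the added value of verifying, rather than assuming, that each $\P_i$ is absolutely continuous with respect to the mixture, so that the divergences appearing in the lemma are well defined for \emph{arbitrary} probability measures; this generality is precisely what the paper needs, since the point of invoking boundedness of ${\sf JS}_\alpha$ is to dispense with the moment-type boundedness assumption of~\cite{pmlr-v117-agrawal20a}. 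Your second derivation, identifying ${\sf JS}_\alpha(\P_1\|\P_2)$ with the mutual information $I(X;Z)$ of a binary mixture label $Z$ and bounding it by $H(Z)$, is essentially Lin's original entropy-based argument, so it is the closest in spirit to the cited source. Both routes yield the sharper bound ${\sf JS}_\alpha(\P_1\|\P_2)\leq H(\alpha)\leq \log 2$, of which the stated bound of $1$ is a base-dependent relaxation (with natural logarithms $\log 2 < 1$ strictly; with base-$2$ logarithms the bound $1$ is tight). There is no gap in your argument.
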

    Now, we show that $\Gamma(\bnu)$ and $\bw(\bnu)$ is continuous in $\bnu$. For this, let us define the correspondence $\mcD: \mcM\mapsto \Delta^K$ such that for any $\bnu\in\mcM$, $\mcD(\bnu)\triangleq \Delta^K$. For any $\bnu\in\mcM$, $\mcD(\bnu)$ is a compact set; hence, $\mcD$ is a compact-valued constant correspondence. Finally, we need to show that for each $i\in[K]\setminus\{a^\star\}$, $\Gamma_i(\bnu,\bw)$ is continuous in $\bnu$ and $\bw$, where we have defined $\Gamma_i(\bnu,\bw)$ in~(\ref{eq:Gamma_i}). First, note that $\Gamma_i$ is lower semicontinuous in $\bnu$ due to the lower semicontinuity of KL divergence in both arguments~\cite{posner1975random}. Next, we will leverage~\cite[Theorem 5.43]{hitchhiker}, which provides a sufficient condition for the global continuity of convex functions.
    \begin{lemma}[\cite{hitchhiker}]
    \label{lemma:usc}
        For a convex function $f : \mcX\mapsto \R$ on an open convex subset of a topological vector space, the following statements are equivalent.
        \begin{enumerate}
            \item $f$ is bounded above on a neighborhood of some point in $\mcX$.
            \item $f$ is upper semicontinuous on $\mcX$.
        \end{enumerate}
    \end{lemma}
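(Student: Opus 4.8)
The plan is to prove the equivalence in Lemma~\ref{lemma:usc} by treating the two implications separately, with essentially all of the work residing in $(1)\Rightarrow(2)$. The converse $(2)\Rightarrow(1)$ is immediate: upper semicontinuity is equivalent to every strict sublevel set $\{x\in\mcX:f(x)<c\}$ being open, so taking $c=f(x_0)+1$ at any $x_0\in\mcX$ exhibits an open neighborhood of $x_0$ on which $f$ is bounded above by $f(x_0)+1$.

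For $(1)\Rightarrow(2)$, suppose $f\leq M$ on a neighborhood $U$ of some point $x_0$. I would first establish upper semicontinuity \emph{at $x_0$}. Since every neighborhood in a topological vector space contains a balanced one, I may fix a balanced neighborhood $V$ of the origin with $x_0+V\subseteq U$. Writing $x_0+\lambda v=\lambda(x_0+v)+(1-\lambda)x_0$ for $v\in V$, $\lambda\in(0,1)$, and invoking convexity yields $f(x_0+\lambda v)-f(x_0)\leq \lambda\,(M-f(x_0))$ uniformly over $v\in V$. Hence, given $\epsilon>0$, choosing $\lambda$ small enough makes $x_0+\lambda V$ a neighborhood of $x_0$ on which $f<f(x_0)+\epsilon$, which is exactly upper semicontinuity at $x_0$.

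The main step is to transport local boundedness-above from the single point $x_0$ to an arbitrary target $y\in\mcX$. Exploiting that $\mcX$ is open and convex, I would extend the segment from $x_0$ through $y$ slightly beyond $y$ to a point $z\in\mcX$, so that $y=\alpha x_0+(1-\alpha)z$ for some $\alpha\in(0,1)$. Points near $y$ then admit the representation $y+\alpha v=\alpha(x_0+v)+(1-\alpha)z$ with $v\in V$, and convexity gives $f(y+\alpha v)\leq \alpha M+(1-\alpha)f(z)$. Thus $f$ is bounded above on the neighborhood $y+\alpha V$ of $y$, and the argument of the previous paragraph applies verbatim at $y$, yielding upper semicontinuity there. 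As $y\in\mcX$ is arbitrary, $f$ is upper semicontinuous on all of $\mcX$.

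I expect the transport step to be the crux of the argument: it is the only place where openness of $\mcX$ is genuinely needed (to locate $z$ inside $\mcX$ beyond $y$), and one must verify both that the convex-combination parameter $\alpha$ lies in $(0,1)$ and that $v\mapsto y+\alpha v$ sweeps out a full neighborhood of $y$ contained in $\mcX$. The balanced-neighborhood reduction and the one-point convexity estimate are otherwise routine, and no appeal to finite dimensionality or to a norm is required.
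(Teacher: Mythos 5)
Your proposal is correct, and it is worth noting that the paper itself offers no proof of this lemma at all: it is quoted verbatim as Theorem 5.43 of the cited reference and used as a black box in the proof of Lemma~\ref{lemma:properties of problem complexity}. Your argument is the standard textbook one and is complete as written. The direction $(2)\Rightarrow(1)$ via openness of the strict sublevel set $\{x\in\mcX : f(x)<f(x_0)+1\}$ is immediate, as you say. In $(1)\Rightarrow(2)$, the one-point estimate $f(x_0+\lambda v)-f(x_0)\leq\lambda\bigl(M-f(x_0)\bigr)$ is valid (note $M\geq f(x_0)$ automatically since $x_0\in U$), and the transport step is sound: openness of $\mcX$ lets you pick $z=y+t(y-x_0)\in\mcX$ for small $t>0$, giving $y=\alpha x_0+(1-\alpha)z$ with $\alpha=t/(1+t)\in(0,1)$, the identity $y+\alpha v=\alpha(x_0+v)+(1-\alpha)z$ keeps all points inside $\mcX$ by convexity of the domain, and convexity of $f$ yields the bound $\alpha M+(1-\alpha)f(z)$ on the neighborhood $y+\alpha V$. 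Two small remarks: balancedness of $V$ is never actually used for upper semicontinuity alone (you only ever evaluate $f$ at $x_0+v$, never at $x_0-v$); it becomes essential only for the two-sided estimate proving full continuity, which is what the cited theorem in the reference actually establishes and of which the paper's statement is a sub-equivalence. Also, in the scaling step one should observe that $\lambda V$ (resp.\ $\alpha V$) is a neighborhood of the origin because multiplication by a nonzero scalar is a homeomorphism of the topological vector space; this is implicit in your write-up and is the only point a referee might ask you to spell out.
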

    Note that $\Gamma_i$ is convex in its first argument since KL divergence is a convex function. Let us denote the interior of the set of distributions $\mcM$ by ${\rm int}(\mcM)$. For any $\boldsymbol\eta\in{\rm int}(\mcM)$, there exists a neighborhood $\mcN_r(\boldsymbol\eta)\subset{\rm int}(\mcM)$, where we have defined
    \begin{align}
        \mcN_r(\boldsymbol\eta)\triangleq \Big\{\blambda\in{\rm int}(\mcM) : D_{\sf TV}(\boldsymbol\eta\|\blambda)<r\Big\}\ .
    \end{align}
    Furthermore, for any $\bnu\in\mcN_r(\boldsymbol\eta)$ and $\bw\in\Delta^K$, let us define the distribution
    \begin{align}
        \kappa_{a^\star,i}\triangleq \frac{w_{a^\star}\P_{a^\star} + w_i\P_i}{w_{a^\star} + w_i}\ .
    \end{align}
    Expanding $\Gamma_i$, we obtain
    \begin{align}
        \Gamma_i(\bnu,\bw) & = \inf\limits_{\bar\bnu\in\mcM : m(\bar\P_{a^\star})\leq m(\bar\P_i)}\; \Big\{w_{a^\star}D_{\sf KL}(\P_{a^\star}\|\bar\P_{a^\star}) + w_iD_{\sf KL}(\P_i\| \bar\P_i)\Big\}\\
        &\leq w_{a^\star} D_{\sf KL}(\P_{a^\star}\|\kappa_{a^\star,i}) + w_i D_{\sf KL}(\P_i\|\kappa_{a^\star,i})\\
        &= (w_{a^\star} + w_i)\left( \frac{w_{a^\star}}{w_{a^\star}+w_i} D_{\sf KL}(\P_{a^\star}\|\kappa_{a^\star,i}) + \frac{w_i}{w_{a^\star}+w_i} D_{\sf KL}(\P_{i}\|\kappa_{a^\star,i})\right ) \\
        & = (w_{a^\star}+ w_i){\sf JS}_{\frac{w_{a^\star}}{w_{a^\star}+w_i}} (\P_{a^\star}\|\P_i)\\
        &\leq 1\ ,
        \label{eq:boundedJS1}
    \end{align}
    where~(\ref{eq:boundedJS1}) is a result of Lemma~\ref{lemma:JS}. Thus, leveraging Lemma~\ref{lemma:usc}, we obtain that $\Gamma_i(\bnu,\bw)$ is upper semicontinuous in $\bnu$, which proves that $\Gamma_i(\bnu,\bw)$ is continuous in its first argument in $\mcM$. Furthermore, $\Gamma_i(\bnu,\bw)$ is linear in the second argument, and hence, it is continuous. This shows that $\Gamma_i$ and the correspondence $\mcD$ satisfies the conditions in Lemma~\ref{lemma:Berge}, and we obtain that $\Gamma_i(\bnu,\bw)$ is continuous in $\bnu$ and $\bw$ is upper hemicontinuous in $\bnu$. Finally, following the same line of arguments as~\cite[Proposition 7]{russo2016}, we can show that for a given $\bnu\in\mcM$, $\bw$ is the unique allocation satisfying
    \begin{align}
        \Gamma_i(\bnu,\bw) = \Gamma_j(\bnu,\bw)\ , \quad {\rm for\; all}\quad i,j\neq a^\star\ .
    \end{align}
    This shows that $\bw$ is continuous in $\bnu$.
\end{enumerate}

\section{Proof of Theorems~\ref{theorem: convergence in allocation} and~\ref{theorem: convergence in allocation 2}}
\label{proof: convergence in allocation}
First, we show that the explicit exploration phase ensures that each arm $i\in[K]$ is sampled sufficiently often, such that the sample mean values converge to the true means. Let us define the time instant $N_{\bnu}^\epsilon$ as
\begin{align}
    N_{\bnu}^\epsilon\triangleq \inf\Big\{t\in\N : |\bar\mu_s(i)-\mu(i)|<\epsilon, \forall i\in[K],\;\forall s\geq t\Big\}\ .
\end{align}
The stochastic time $N_{\bnu}^\epsilon$ marks the convergence of the sample means to the respective ground truths for every arm $i\in[K]$. In the following result, we will show that the TCB and ITCB arm selection strategies ensure that $N_{\bnu}^\epsilon$ has a finite average value. This result is instrumental in showing the convergence in allocation for the TCB and ITCB sampling strategies stated in Theorem~\ref{theorem: convergence in allocation} and Theorem~\ref{theorem: convergence in allocation 2}.
\begin{theorem}[Convergence in mean]
\label{theorem: convergence in mean}
    Under the TCB and ITCB sampling strategies, we have $\E_{\bnu}[N_{\bnu}^\epsilon]<+\infty$.
\end{theorem}
\begin{proof}
    We use the notion of $r$-quick convergence, which we define below.
    \begin{definition}[$r$-quick convergence~\cite{Chow1978ProbabilityTI}]
        Consider the sequence of i.i.d. zero-mean random variables $\{Z_t : t\in\N\}$. Let $\bar Z_t \triangleq \frac{1}{t}\sum_{s=1}^t Z_t$ denote the empirical mean. Furthermore, for any $\epsilon\in\R_+$ define
        \begin{align}
            T_{\epsilon}\;\triangleq\; \sup\left\{ t\in\N : |\bar Z_t| > \epsilon\right\}\ .
        \end{align}
        Then, $\{Z_t : t\in\N\}$ converges $r$-quickly for $r>0$, if $\E[T_\epsilon^r]<+\infty$.
    \end{definition}
    \noindent We leverage $r$-quick convergence for $r=2$ to establish the convergence of the sample means of each arm $i\in[K]$ to the corresponding ground truth values. For this, we first state the necessary and sufficient condition for $r$-quick convergence to hold. 

    \begin{lemma}[Corollary $4$,~\cite{Chow1978ProbabilityTI}]
        The i.i.d. sequence $\{Z_t : t\in\N\}$ converges $r$-quickly if and only if $\E[|Z_t|^{r+1}]<+\infty$.
        \label{lemma:quick_conv}
    \end{lemma}
    \noindent For any arm $i\in[K]$, let us denote the realizations of $T_t(i)$ by $\ell_t(i)$. Furthermore, for any $s\in[t]$ such that $A_s=i$, let us set
    \begin{align}
        Z_s(i)\;\triangleq\;X_s(i) - \mu(i)\ , \qquad\text{and}\qquad \bar Z_t(i)\;\triangleq\; \frac{1}{T_t(i)}\sum\limits_{s\in[t]: A_s = i} \Big( X_s(i) - \mu(i)\Big)\ .
    \end{align}
    Let us define
    \begin{align}
        T_{\epsilon}(i)\;\triangleq\;\sup\left\{ \ell_t(i)\in\N : |\bar Z_t(i)| > \epsilon\right\}\ .
    \end{align}
    Using Assumption 4 along with Lemma~\ref{lemma:quick_conv}, we have
    \begin{align}
        \E_{\bnu}\left [ \big( T_\epsilon(i)\Big)^2\right]\;<\;+\infty\ .
        \label{eq:qc1}
    \end{align}
    Furthermore, note that owing to the explicit exploration of the TCB and ITCB algorithms, leveraging~\cite[Lemma 8]{pmlr-v49-garivier16a}, for any arm $i\in[K]$, we have 
    \begin{align}
        T_t(i)\;\geq\;\sqrt{\frac{t}{K}} - 1\ .
    \end{align}
    Accordingly, when arm $i\in[K]$ has been sampled $T_\epsilon(i)$ times, the following inequality holds
    \begin{align}
        T_\epsilon(i)\;\geq\;\sqrt{\frac{N_0}{K}} - 1\ ,
        \label{eq:qc2}
    \end{align}
    where $N_0$ denotes the time instant at which the arm $i\in[K]$ has been sampled $T_\epsilon(i)$ times. Furthermore, assuming that $N_0\geq 4K$, we have
    \begin{align}
        T_{\epsilon}(i)\;\geq\;\sqrt{\frac{N_0}{2K}}\ ,
    \end{align}
    which yields that $N_0\leq 2K(T_\epsilon(i))^2$. Furthermore, from~\eqref{eq:qc1} we have $\E_{\bnu}[(T_{\epsilon}(i))^2]<+\infty$. The proof is completed by setting $N_{\bnu}^\epsilon = N_0$.
\end{proof}
\noindent Next, let us define the set of \emph{over-sampled} arms as:
\begin{align}
    \mcO_t^\epsilon\;\triangleq \; \left\{i\in[K] : \frac{T_t(i)}{t} > w_i(\bnu) + \epsilon\right\}\ .
\end{align}
Furthermore, we define the set of \emph{under-sampled} arms as:
\begin{align}
    \mcP_t^\epsilon \;\triangleq\; \left\{i\in[K] : \frac{T_t(i)}{t} < w_i(\bnu) + \frac{\epsilon}{2}\right\}\ . 
\end{align}
The convergence in allocation for the proposed algorithm is shown in two key steps. First, we prove that if any sampling strategy always samples from the set of under-sampled arms, then the sampling strategy converges to the optimal allocation $\bw(\bnu)$. This step is common in the proof for both Theorem~\ref{theorem: convergence in allocation} and Theorem~\ref{theorem: convergence in allocation 2}. In the next step, we show that the proposed sampling strategies always sample from the set of under-sampled arms. We show the first step through Lemma~\ref{lemma:con_alloc_1} and Lemma~\ref{lemma:con_alloc_2}, which we provide next. Essentially, Lemma~\ref{lemma:con_alloc_1} shows that if the sampling strategy always samples from the set of under-sampled arms, then, after some time, the set of over-sampled arms becomes empty. Lemma~\ref{lemma:con_alloc_2} then shows that when the set of over-sampled arms is empty, eventually, the allocation for each arm converges to the optimal allocation. The key distinction in the proofs of Theorem~\ref{theorem: convergence in allocation} and Theorem~\ref{theorem: convergence in allocation 2} arises in the next step. In Lemma~\ref{lemma:con_alloc_3}, we show that the TCB arm selection rule stated in~(\ref{eq: sampling rule}) always samples from the set of under-sampled arms. In Lemma~\ref{lemma:con_alloc_4}, we show that the ITCB arm selection rule provided in~(\ref{eq: sampling rule 2}) always samples from the set of under-sampled arms. Before stating Lemma~\ref{lemma:con_alloc_1}, let us define the sampling proportion $\bgamma_t\triangleq [\gamma_{t,1},\cdots,\gamma_{t,K}]$ computed at the current MLE $\bmu_t$ as
\begin{align}
    \bgamma_t\;\triangleq\; \argsup\limits_{\bw\in\Delta^K}\inf\limits_{\bar\bnu\in{\sf alt}(a^\star)}\; \sum\limits_{i\in[K]} w_iD_{\sf KL}(\P_{t,i}\|\bar\P_i)\ .
\end{align}

\begin{lemma}
\label{lemma:con_alloc_1}
There exists a stochastic time $N^\epsilon\in\N$ such that for all $t>N^{\epsilon}$, $\mcO_t^\epsilon = \emptyset$, and $\E[N^{\epsilon}]<+\infty$, if the sampling strategy satisfies $\frac{1}{t}T_t(a_{t+1}) < \gamma_{t,a_{t+1}} + \zeta$ for any $\zeta\in[0,\frac{\epsilon}{4}]$ and for any $t>M$, where $M$ is a stochastic time satisfying $\E_{\bnu}[M]<+\infty$.
\end{lemma}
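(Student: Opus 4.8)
The plan is to show that the stated sampling condition forces any over-sampled arm to stop being selected, after which its empirical frequency necessarily decays below $w_i(\bnu)+\epsilon$ and stays there. I would organize the argument in three stages: (i) translate the condition, which is stated in terms of the estimated proportions $\bgamma_t$, into a condition involving the true optimal proportions $\bw(\bnu)$; (ii) prove that an over-sampled arm is never selected; and (iii) conclude that over-sampling is transient and bound the exit time in expectation.

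First I would invoke Theorem~\ref{theorem: convergence in mean} together with the continuity of $\bw$ established in Lemma~\ref{lemma:properties of problem complexity}. Since $\bgamma_t=\bw(\bnu_t)$ is the optimal allocation evaluated at the estimated instance $\bnu_t=[\P_{t,1},\dots,\P_{t,K}]$, and since $\mu_t(i)\to\mu(i)$ with convergence time $N_{\bnu}^{\epsilon'}$ of finite expectation, the continuity of the mean-parametrization and of $\bw$ on $(\mcM,D_{\sf TV})$ yields a stochastic time $N_1$ with $\E_{\bnu}[N_1]<+\infty$ such that $|\gamma_{t,i}-w_i(\bnu)|<\epsilon/4$ for all $i\in[K]$ and $t\geq N_1$. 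Combined with $\zeta\leq\epsilon/4$, the sampling condition then reads: whenever arm $a_{t+1}$ is selected (and $t> N_1\vee M$), one has $\frac{T_t(a_{t+1})}{t+1}<w_{a_{t+1}}(\bnu)+\epsilon/2$.

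Next I would establish the key exclusion step: there is a deterministic threshold $T_\epsilon=\mathcal{O}(1/\epsilon)$ so that for $t\geq T_\epsilon$, if $i\in\mcO_t^\epsilon$ then arm $i$ is not selected at round $t+1$. Indeed, $i\in\mcO_t^\epsilon$ gives $\frac{T_t(i)}{t+1}=\frac{T_t(i)}{t}\cdot\frac{t}{t+1}>(w_i(\bnu)+\epsilon)\frac{t}{t+1}$, which exceeds $w_i(\bnu)+\epsilon/2\geq\gamma_{t,i}+\zeta$ once $\frac{t}{t+1}$ is close enough to $1$ (using $w_i(\bnu)\leq 1$), contradicting the reformulated condition. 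Hence an over-sampled arm has its count frozen, so $\frac{T_t(i)}{t}$ strictly decreases. A symmetric computation shows that immediately after any selection of arm $i$ (for $t\geq T_\epsilon$), $\frac{T_{t+1}(i)}{t+1}<w_i(\bnu)+\epsilon/2+\frac{1}{t+1}<w_i(\bnu)+\epsilon$, so a selection never leaves an arm over-sampled.

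Finally, set $N_3\triangleq N_1\vee M\vee T_\epsilon$. A short induction shows that once an arm leaves $\mcO_t^\epsilon$ after $N_3$ it never re-enters, since non-selection decreases its frequency while selection keeps it strictly below $w_i(\bnu)+\epsilon$. Any arm still over-sampled at $N_3$ has its count frozen at $T_{N_3}(i)\leq N_3$, so its frequency $T_{N_3}(i)/t$ drops below the threshold by the deterministic time $t\geq N_3/\epsilon$. Thus $N^\epsilon\triangleq\lceil N_3/\epsilon\rceil$ satisfies $\mcO_t^\epsilon=\emptyset$ for all $t>N^\epsilon$, and $\E_{\bnu}[N^\epsilon]\leq\frac{1}{\epsilon}\E_{\bnu}[N_3]+1<+\infty$ because $N_1$ and $M$ have finite expectation and $T_\epsilon$ is deterministic. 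The main obstacle I anticipate is stage (i): carefully passing from convergence of the mean estimates to the convergence $\bgamma_t\to\bw(\bnu)$, which requires that mean convergence implies $D_{\sf TV}$-convergence of $\P_{t,i}$ to $\P_i$ (so the continuity of $\bw$ applies) while keeping the associated convergence time of finite expectation.
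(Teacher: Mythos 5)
You have given a correct proof that follows essentially the same route as the paper's: translate the sampling condition from $\bgamma_t$ to $\bw(\bnu)$ via Theorem~\ref{theorem: convergence in mean} and the continuity established in Lemma~\ref{lemma:properties of problem complexity}, show that for $t=\Omega(1/\epsilon)$ an over-sampled arm can never be selected while a selected arm never becomes over-sampled, and conclude by induction together with the decay of frozen counts. If anything, your explicit bound $N^\epsilon\leq\lceil N_3/\epsilon\rceil$ is tighter bookkeeping than the paper's case-2 argument, which introduces the exit times $L_j^\epsilon$ of the initially over-sampled arms but leaves the finiteness of $\E_{\bnu}[L^\epsilon]$ implicit.
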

\begin{proof}
    Let us define the time instant $M_1^\epsilon$ such that for all $t>M_1^\epsilon$ and for all $i\in[K]$, $|\gamma_{t,i} - w_i(\bnu)| < \epsilon/8$. Leveraging the continuity of $\Gamma$ in Lemma~\ref{lemma:properties of problem complexity} and the convergence of the MLE in Theorem~\ref{theorem: convergence in mean}, we obtain that $\E[M_1^\epsilon]<+\infty$. Furthermore, define $M_2^\epsilon \triangleq \lceil (8/\epsilon) - 1\rceil$, and $M^{\epsilon} \triangleq \max\{M,M_1^\epsilon, M_2^\epsilon\}$. We have the following two cases:
    \begin{enumerate}
        \item $\mcO_{M^{\epsilon}}^\epsilon = \emptyset$: In this case, we will use induction on $t$ to show that for all $t>M^{\epsilon}$, $\mcO_t^{\epsilon} = \emptyset$. First, by our assumption, for $t=M^\epsilon$, $\mcO_{M^{\epsilon}}^\epsilon = \emptyset$. Next, assume the inductive hypothesis that for some $t>M^{\epsilon}$, $\mcO_t^\epsilon=\emptyset$. Then,
        \begin{align}
            \frac{T_{t+1}(a_{t+1})}{t+1} &= \frac{T_t(a_{t+1})+1}{t+1}\\
            &<\frac{T_t(a_{t+1})}{t} + \frac{1}{t+1}\\
            \label{eq:con_alloc1}
            &< \gamma_{t,a_{t+1}} + \frac{1}{t+1} + \zeta\\
            &\leq w_{a_{t+1}}(\bnu) + \frac{\epsilon}{8} + \frac{1}{t+1} + \zeta\\
            &\leq w_{a_{t+1}}(\bnu) + \frac{\epsilon}{2}\ ,
            \label{eq:con_alloc2}
        \end{align}
        where~(\ref{eq:con_alloc1}) holds since the sampling strategy satisfies $\frac{1}{t}T_t(a_{t+1}) < \gamma_{t,a_{t+1}} + \zeta$, and~(\ref{eq:con_alloc2}) is obtained using the definition of $M_{\epsilon}$. Hence, $\mcO_{t+1}^\epsilon = \emptyset$, and it concludes the proof.
        \item $|\mcO_{M^{\epsilon}}^\epsilon|\geq 1$: In this case, following the same steps as~(\ref{eq:con_alloc1})-(\ref{eq:con_alloc2}), we can show that for all $t>M^{\epsilon}$, any $i\in\mcP_t^\epsilon$ is not included in $\mcO_t^\epsilon$. Furthermore, for any $t>M^{\epsilon}$ and for any $j\in\mcO_t^\epsilon$, let us define $L_j^\epsilon$ as the time that $j$ leaves $\mcO_t^\epsilon$, i.e., for all $t\in\{M^{\epsilon},\cdots, L_j^\epsilon-1\}$, $j\in\mcO_t^\epsilon$. Next, defining $L^\epsilon\triangleq \max_{j\in[K]} L_j^\epsilon$, for all $t>L^\epsilon$, we obtain that $|\mcO_t^\epsilon| = 0$. Finally, defining $N^\epsilon\triangleq \max\{M^{\epsilon},L^{\epsilon}\}$, we obtain that for all $t>N^{\epsilon}$, $\mcO_{t}^\epsilon = \emptyset$.
    \end{enumerate}
\end{proof}

\begin{lemma}
\label{lemma:con_alloc_2}
For all $t>N^{\frac{\epsilon}{K}}$, the allocation for every arm $i\in[K]$ satisfies
\begin{align}
\label{eq:con_alloc3}
    \left\lvert \frac{T_t(i)}{t} - w_i(\bnu)\right\rvert\;\leq\;\epsilon\ .
\end{align}
\end{lemma}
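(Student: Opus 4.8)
The plan is to obtain the two-sided deviation bound in~(\ref{eq:con_alloc3}) by combining the one-sided (over-sampling) control from Lemma~\ref{lemma:con_alloc_1} with the fact that both the empirical allocation $(T_t(i)/t)_{i\in[K]}$ and the optimal allocation $\bw(\bnu)$ are probability vectors on the simplex $\Delta^K$. The entire content of the lemma is this simplex-based transfer, so I would avoid any new probabilistic estimate.

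First I would instantiate Lemma~\ref{lemma:con_alloc_1} with the tolerance $\epsilon/K$ in place of $\epsilon$. This yields that for all $t>N^{\epsilon/K}$ the over-sampled set $\mcO_t^{\epsilon/K}$ is empty, which by definition means $T_t(i)/t \leq w_i(\bnu) + \epsilon/K$ for every arm $i\in[K]$. This directly delivers the upper half of the claimed bound, since $T_t(i)/t - w_i(\bnu) \leq \epsilon/K \leq \epsilon$.

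The second step is to convert this one-sided control into a matching lower bound. Here I would invoke the two normalization identities $\sum_{i\in[K]} T_t(i)/t = 1$ (since exactly one arm is pulled in each round, so $\sum_i T_t(i)=t$) and $\sum_{i\in[K]} w_i(\bnu) = 1$ (since $\bw(\bnu)\in\Delta^K$), which together force the deviations to sum to zero, $\sum_{i\in[K]}\bigl(T_t(i)/t - w_i(\bnu)\bigr)=0$. Fixing any arm $j$ and isolating its deviation, the remaining $K-1$ deviations are each at most $\epsilon/K$ by the upper bound, so $T_t(j)/t - w_j(\bnu) = -\sum_{i\neq j}\bigl(T_t(i)/t - w_i(\bnu)\bigr) \geq -(K-1)\epsilon/K \geq -\epsilon$. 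Combining the two sides gives $\lvert T_t(j)/t - w_j(\bnu)\rvert \leq \epsilon$ uniformly over $j\in[K]$ and $t>N^{\epsilon/K}$.

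There is essentially no obstacle here beyond correctly book-keeping the factor of $K$: all the hard work — establishing that over-sampling eventually vanishes with a finite expected hitting time — has already been discharged in Lemma~\ref{lemma:con_alloc_1}, while Theorem~\ref{theorem: convergence in mean} and the continuity of $\Gamma$ (Lemma~\ref{lemma:properties of problem complexity}) underwrite its finite-mean hypothesis. The only genuine design choice is scaling the tolerance down by $K$ so that the accumulated slack $(K-1)\epsilon/K$ contributed by the other arms stays below $\epsilon$; this is precisely why the statement is phrased in terms of $N^{\epsilon/K}$ rather than $N^{\epsilon}$, and why $\E_{\bnu}[N^{\epsilon/K}]<+\infty$ follows at once from the finiteness asserted in Lemma~\ref{lemma:con_alloc_1}.
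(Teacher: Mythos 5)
Your proof is correct and follows essentially the same route as the paper: both use Lemma~\ref{lemma:con_alloc_1} at tolerance $\epsilon/K$ for the upper bound and the normalization $\sum_{i}T_t(i)/t=\sum_i w_i(\bnu)=1$ for the lower bound. The only cosmetic difference is that the paper phrases the lower-bound step as a contradiction (assuming some arm satisfies $T_t(j)/t<w_j(\bnu)-\epsilon$ forces $\sum_i T_t(i)/t\leq 1-\epsilon/K<1$), whereas you rearrange the zero-sum of deviations directly.
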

\begin{proof}
    We will prove~(\ref{eq:con_alloc3}) by contradiction. Assume that there exists $j\in[K]$ such that $\frac{1}{t}T_t(i) < w_j(\bnu) - \epsilon$. For all $t>N^{\frac{\epsilon}{K}}$, leveraging Lemma~\ref{lemma:con_alloc_1}, we have
    \begin{align}
        \sum\limits_{i\in[K]} \frac{T_t(i)}{t} &= \sum\limits_{i\neq j} \frac{T_t(i)}{t} + \frac{T_t(j)}{t}\\
        &\leq \sum\limits_{i\neq j} \left( w_i(\bnu) + \frac{\epsilon}{K} \right) + w_j(\bnu) - \epsilon\\
        & = 1 - \frac{\epsilon}{K}\ ,
    \end{align}
    which is a contradiction. Thus,~(\ref{eq:con_alloc3}) holds for all $t>N^{\frac{\epsilon}{K}}$. 
\end{proof}

\noindent Next, we show that our proposed sampling strategy always samples from the set of under-sampled arms, which is specified in Lemma~\ref{lemma:con_alloc_3}. Let us define the minimum sub-optimality gap
\begin{align}
    \Delta_{\min}\;\triangleq\;\min\limits_{i\in[K]\setminus\{a^\star\}}\;\mu(a^\star)-\mu(i)\ .
\end{align}

\begin{lemma}
\label{lemma:con_alloc_3}
For all $t>N_{\bnu}^{\Delta_{\min}/4}$, the TCB sampling rule provided in~(\ref{eq: sampling rule}) satisfies
\begin{align}
    \frac{T_t(a_{t+1})}{t} \leq \gamma_{t,a_{t+1}}\ .
\end{align}
\end{lemma}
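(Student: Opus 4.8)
The plan is to verify the under-sampling inequality branch-by-branch for the selection rule in~(\ref{eq: sampling rule}), after first using the hypothesis $t>N_{\bnu}^{\Delta_{\min}/4}$ to pin down the empirical best arm. For $t>N_{\bnu}^{\Delta_{\min}/4}$ every MLE satisfies $|\mu_t(i)-\mu(i)|<\Delta_{\min}/4$, and since the true gaps are at least $\Delta_{\min}$, the ordering of the means is recovered, so $a_t^{\sf top}=a^\star$. Consequently $\bgamma_t$ is the optimal allocation of the empirical instance $\bnu_t$ with the \emph{correct} top arm, and by Lemma~\ref{lemma:simplified problem complexity} and Lemma~\ref{lemma:properties of problem complexity} applied to $\bnu_t$ it obeys the balance condition $\Gamma_i(\bnu_t,\bgamma_t)=\Gamma_j(\bnu_t,\bgamma_t)$ for all $i,j\neq a^\star$, with common value $\Gamma_t(\bgamma_t)=\max_{\bw}\Gamma_t(\bw)$.

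For the exploration branch ($\mcU_t\neq\emptyset$) the claim is immediate: the selected arm obeys $T_t(a_{t+1})\le\lceil\sqrt{t/K}\rceil$, so $T_t(a_{t+1})/(t+1)=o(1)$, whereas by continuity of the optimal allocation (Lemma~\ref{lemma:properties of problem complexity}) and convergence of the MLEs (Theorem~\ref{theorem: convergence in mean}) one has $\gamma_{t,a_{t+1}}\to w_{a_{t+1}}(\bnu)>0$. Hence for all $t$ beyond a further stochastic time of finite expectation (which I would absorb into $N_{\bnu}^{\Delta_{\min}/4}$ via a maximum, preserving finiteness of the expectation) one gets $T_t(a_{t+1})/(t+1)\le\gamma_{t,a_{t+1}}$.

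The substance is the exploitation branch ($\mcU_t=\emptyset$), where $a_{t+1}\in\{a_t^{\sf top},a_t^{\min}\}$ is chosen to maximize the resulting transport cost $\Gamma_t(\bw_t(\cdot))$, and I would argue in two steps. First, at least one of $a_t^{\sf top}$ and $a_t^{\min}$ is under-sampled relative to $\bgamma_t$: writing the bottleneck cost $\Gamma_{a_t^{\min}}(\bnu_t,\bw)=h(w_{a^\star},w_{a_t^{\min}})$ as a function strictly increasing in both weights (since $d_{\sf U},d_{\sf L}>0$ on the interior of $I_{t,a_t^{\min}}$), the inequality $\Gamma_t(\bar\bw)=h(\bar w_{a^\star},\bar w_{a_t^{\min}})\le\Gamma_t(\bgamma_t)=h(\gamma_{t,a^\star},\gamma_{t,a_t^{\min}})$ forbids both coordinates of the current empirical allocation $\bar w_i\triangleq T_t(i)/t$ from strictly exceeding their $\bgamma_t$-counterparts. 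Second, the greedy comparison selects an under-sampled arm: computing the first-order change of the bottleneck cost along the two simplex moves $\bw_t(i)=\frac{t}{t+1}\bar\bw+\frac{1}{t+1}e_i$ yields the clean identity that sampling $a_t^{\min}$ raises the bottleneck cost more than sampling $a_t^{\sf top}$ exactly when $\partial_{w_{a_t^{\min}}}h>\partial_{w_{a^\star}}h$; I would then tie this marginal comparison to the sampling deficits through the first-order (water-filling) stationarity of $\bgamma_t$, so that the branch taken favors the under-sampled coordinate. Combining the two steps, the selected arm satisfies $\bar w_{a_{t+1}}\le\gamma_{t,a_{t+1}}$, whence $T_t(a_{t+1})/(t+1)\le T_t(a_{t+1})/t=\bar w_{a_{t+1}}\le\gamma_{t,a_{t+1}}$.

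The hard part will be the second step of the exploitation branch: certifying that the transport-cost-maximizing move coincides with sampling the under-sampled arm. The difficulties are that $\Gamma_t$ is only concave (not smooth), so the marginal analysis must be phrased through superdifferentials or one-sided directional derivatives rather than gradients; that the minimizing arm $a_t^{\min}$ may cease to be the binding one after a step, so I must confirm it remains the bottleneck for steps of size $1/(t+1)$; and that the link between the ordering of $\partial_{w_{a_t^{\min}}}h$ and $\partial_{w_{a^\star}}h$ and the sign of the deficits $\gamma_{t,i}-\bar w_i$ rests on the stationarity of $\bgamma_t$, which must be invoked carefully. I would also need the auxiliary positivity facts $d_{\sf U},d_{\sf L}>0$ and $w_i(\bnu)>0$, both of which follow from the strict convexity and the uniqueness of the balanced allocation established in Lemma~\ref{lemma:properties of problem complexity}.
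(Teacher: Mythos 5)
Your setup (using $t>N_{\bnu}^{\Delta_{\min}/4}$ to force $a_t^{\sf top}=a^\star$) and your Step 1 (using monotonicity of the bottleneck cost in $(w_{a^\star},w_{a_t^{\min}})$, the balance property of $\bgamma_t$, and $\Gamma_t(\bT_t/t)\leq\Gamma_t(\bgamma_t)$ to conclude that not both candidates can be strictly over-sampled) are correct and mirror the first sub-step of the paper's argument. Your explicit treatment of the exploration branch $\mcU_t\neq\emptyset$ is a reasonable addition (the paper silently restricts to the two exploitation cases), though it forces you to enlarge the stochastic time beyond $N_{\bnu}^{\Delta_{\min}/4}$; this is harmless only because Lemma~\ref{lemma:con_alloc_1} accepts any stochastic time $M$ with $\E_{\bnu}[M]<+\infty$, not because the lemma as stated is recovered.

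The genuine gap is your Step 2, which is the mathematical heart of the lemma and which you explicitly defer rather than prove: the claim that the greedy comparison $\Gamma_t(\bw_t(a_t^{\min}))$ versus $\Gamma_t(\bw_t(a_t^{\sf top}))$ never selects an over-sampled arm. Your proposed route---compare one-sided directional derivatives of the bottleneck cost and tie their ordering to the deficits $\gamma_{t,i}-T_t(i)/t$ via stationarity of $\bgamma_t$---is precisely where all three difficulties you list live, and none is resolved: the rule compares finite differences at step size $1/(t+1)$, not derivatives; the binding arm in the minimum can change under the step; and the implication ``deficit ordering $\Rightarrow$ derivative ordering'' is itself the statement to be proven, so invoking stationarity does not close the circle. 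The paper closes this step by a different mechanism: assume the selected arm is over-sampled, note that $\Gamma_t$ is a minimum of linear functions (hence concave with $\bgamma_t$ its unique maximizer), introduce auxiliary points $\bz$ on the segment joining $\bgamma_t$ and $\bw_t(a^\star)$ and $\bz^\prime$ with $\bw_t(a_t^{\min})$ a convex combination of $\bz^\prime$ and $\bw_t(a^\star)$, bound $\Gamma_t(\bz^\prime)$ via a subgradient and Cauchy--Schwarz, and choose the combination weight $\lambda_2=O(\epsilon_t^2)$, where $\epsilon_t\triangleq\Gamma_t(\bw_t(a^\star))-\Gamma_t(\bw_t(a_t^{\min}))$ is the margin in the rule's comparison, to reach a contradiction with the branch actually taken. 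Without an argument of this kind (or a completed version of your marginal analysis), your proposal establishes only that \emph{some} candidate is under-sampled, not that the \emph{selected} one is, so the lemma is not proved.
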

\begin{proof}
Note that by Theorem~\ref{theorem: convergence in mean}, for all $t>N_{\bnu}^{\Delta_{\min}/4}$, the proposed sampling rule satisfies that $a_t^{\sf top} = a^\star$. Accordingly, the TCB sampling rule samples between any two arms, either the best arm $a^\star$ or the arm $a_t^{\min}$. Furthermore, note that both the arms $a^\star$ and $a_t^{\min}$ cannot be simultaneously over-sampled. To see why this is true, assume without loss of generality that arm $a^\star$ is over-sampled, i.e., $\frac{1}{t}T_t(a^\star) > \gamma_{t,a^\star}$. $\Gamma_t(\bw)$ is a minimum of linear functions, and hence, it is a concave function~\cite{boyd2004convex}. Furthermore, $\bw$ belongs to a compact space $\Delta^K$, and $\Gamma_t(\bw)$ has a \emph{unique} maxima~\cite{russo2016}. We will prove that $\frac{1}{t}T_t(a_t^{\min})<\gamma_{t,a_t^{\min}}$ by contradiction. Let us assume that $\frac{1}{t}T_t(a_t^{\min})>\gamma_{t,a_t^{\min}}$. We have
\begin{align}
    \label{eq:sr1_1}
    \min\limits_{x\in I_{t,a_t^{\min}}}  & \left\{\frac{T_t(a^\star)}{t} d_{\sf U}(\P_{t,a^\star},x) + \frac{T_t(a_t^{\min})}{t}d_{\sf L}(\P_{t,a_t^{\min}},x)\right\}\nonumber\\
    &\quad\geq \min\limits_{x\in I_{t,a_t^{\min}}}\left\{\gamma_{t,a^\star} d_{\sf U}(\P_{t,a^\star},x) + \frac{T_t(a_t^{\min})}{t}d_{\sf L}(\P_{t,a_t^{\min}},x)\right\}\\
    \label{eq:sr1_11}
    &\quad\geq \min\limits_{x\in I_{t,a_t^{\min}}}\left\{\gamma_{t,a^\star} d_{\sf U}(\P_{t,a^\star},x) + \gamma_{t,a_t^{\min}}d_{\sf L}(\P_{t,a_t^{\min}},x)\right\}\\
    &\quad = \Gamma_t(\bgamma_t)\ ,
    \label{eq:sr1_2}
\end{align}
where~(\ref{eq:sr1_1}) and~(\ref{eq:sr1_11}) hold due to the fact that $\Gamma_t$ is an increasing function in each of its arguments, keeping the other argument fixed~\cite[Lemma 2]{russo2016}. It can be readily verified that~(\ref{eq:sr1_2}) is a contradiction, since we obtain that $\Gamma_t(\frac{1}{t}\bT_t)\geq \Gamma_t(\bgamma_t)$, where $\bgamma_t$ is the \emph{unique} maximizer. Thus, we have $\frac{1}{t}T_{t}(a_t^{\min}) < \gamma_{t,a_t^{\min}}$, if $\frac{1}{t}T_t(a^\star) > \gamma_{t,a^\star}$. Next, for any $t>N_{\bnu}^{\Delta_{\min}/4}$, based on the TCB arm selection strategy, we have the following two cases.
\begin{enumerate}
    \item $a_{t+1}=a^\star$: 
    We will prove that $\frac{1}{t}T_t(a^\star)\leq \bgamma_{t,a^\star}$ by contradiction. We proceed with our assumption that $\frac{1}{t}T_t(a^\star)>\gamma_{t,a^\star}$, and define the points $\bz$ and $\bz^\prime$ such that for any $\lambda_1,\lambda_2\in(0,1)$,
    \begin{align}
        \bz\;&\triangleq\; \lambda_1\bgamma_t + (1-\lambda_1)\bw_t^\prime\ ,\\
        \text{and}\;\; \frac{1}{t}\bT_t\;&= \; \lambda_2\bz^\prime + (1-\lambda_2)\bw_t^\prime\ .    
    \end{align}
    For a geometric representation of the relative position of these points, we refer to Figure~\ref{fig:supp_fig}.
    \begin{figure}[t]
    \centering
    \includegraphics[width=0.5\linewidth]{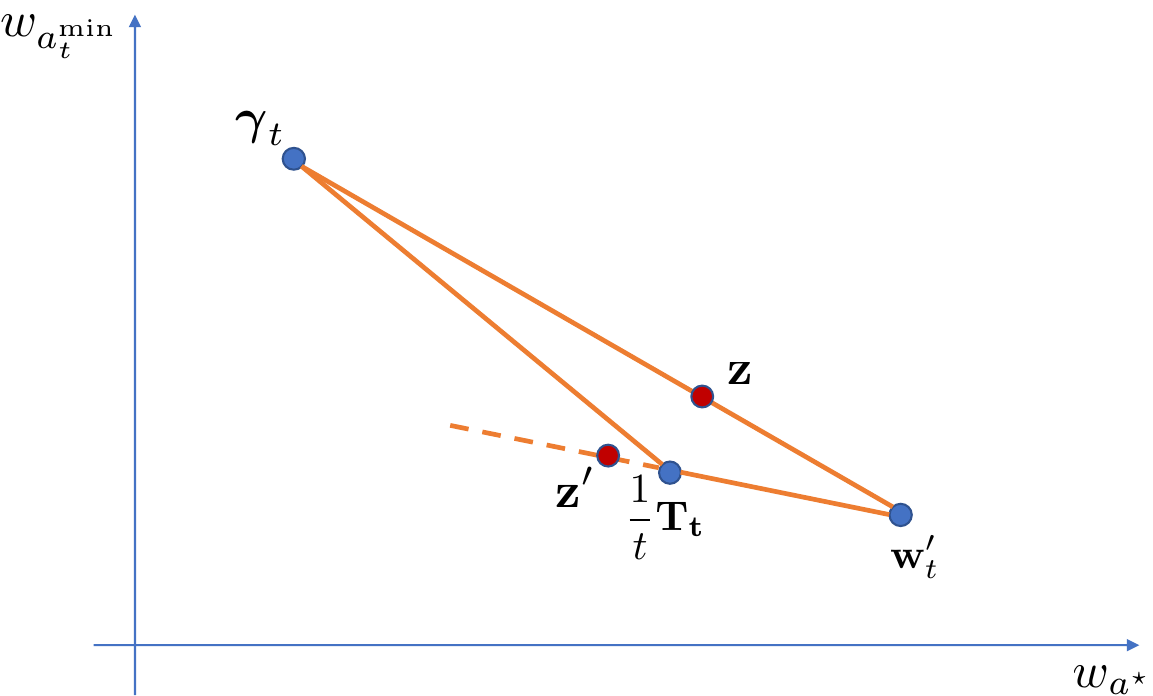}
    \caption{Positions of $\bz$ and $\bz^\prime$ assuming $\frac{1}{t}T_t(a^\star)>\gamma_{t,a^\star}$}
    \label{fig:supp_fig}
    \end{figure}
    Owing to the concavity of $\Gamma_t$, we have
    \begin{align}
        \Gamma_t(\bz)&\geq \lambda_1\Gamma_t(\bgamma_t) + (1-\lambda_1)\Gamma_t(\bw_t^\prime)\\
        &\geq \Gamma_t(\bw_t^\prime)\ .
        \label{eq:new_1}
    \end{align}
    Note that as a result of the TCB sampling rule, we have
    \begin{align}
    \label{eq:sr1_3}
        \Gamma_t(\bw_t^\prime)\;>\;\Gamma_t\left(\frac{1}{t}\bT_t\right)\ .
    \end{align}
    Accordingly, let us define 
    \begin{align}
        \epsilon_t\;\triangleq\; \Gamma_t(\bw_t^\prime) - \Gamma_t\left(\frac{1}{t}\bT_t\right)\ .
        \label{eq:epsilon_t}
    \end{align}
    Furthermore, for any $\bw\in\Delta^K$, let $\nabla\Gamma_t(\bw)$ denote the sub-gradient of the function $\Gamma_t$ at $\bw$. Owing to the concavity of $\Gamma_t$, we have
    \begin{align}
        \Gamma_t(\bz^\prime)\;&\geq\; \Gamma_t(\bz) - \left\langle \nabla\Gamma_t(\bz),\bz^\prime - \bz\right\rangle\\
        \label{eq:new2}
        &\geq\; \Gamma_t(\bz) - \left\lVert \nabla\Gamma_t(\bz)\right\rVert \left\lVert \bz^\prime - \bz\right\rVert\\
        &\stackrel{(\ref{eq:new_1})}{\geq}\; \Gamma_t(\bw_t^\prime) - \left\lVert \nabla\Gamma_t(\bz)\right\rVert \left\lVert \bz^\prime - \bz\right\rVert\\
        &\stackrel{(\ref{eq:epsilon_t})}{=}\; \Gamma_t\left(\frac{1}{t}\bT_t\right) + \epsilon_t - \left\lVert \nabla\Gamma_t(\bz)\right\rVert \left\lVert \bz^\prime - \bz\right\rVert\ ,
        \label{eq:new3}
    \end{align}
    where~(\ref{eq:new2}) is obtained using the Cauchy–Schwarz inequality. Furthermore, leveraging the concavity of $\Gamma_t$, we have
    \begin{align}
        \Gamma_t\left(\frac{1}{t}\bT_t\right)&\;\geq\; \lambda_2\Gamma_t(\bz^\prime) +  (1-\lambda_2)\Gamma_t(\bw_t^\prime)\\
        &\stackrel{(\ref{eq:new3})}{\geq}\; \lambda_2\left ( \Gamma_t\left(\frac{1}{t}\bT_t\right) + \epsilon_t -\left\lVert \nabla\Gamma_t(\bz)\right\rVert \left\lVert \bz^\prime - \bz\right\rVert \right ) + (1-\lambda_2)\Gamma_t(\bw_t^\prime)\ ,
    \end{align}
    which implies that
    \begin{align}
        \Gamma_t\left(\frac{1}{t}\bT_t\right)\;\geq\Gamma_t(\bw_t^\prime) - \frac{\lambda_2}{1-\lambda_2}\left ( \left\lVert \nabla\Gamma_t(\bz)\right\rVert \left\lVert \bz^\prime - \bz\right\rVert - \epsilon_t\right )\ .
        \label{eq:new4}
    \end{align}
    Let us set $\lambda_2 \triangleq O(\epsilon_t^2)$. Hence, (\ref{eq:new4}) can be rewritten as
    \begin{align}
        \Gamma_t\left(\frac{1}{t}\bT_t\right)&\;\geq\; \Gamma_t(\bw_t^\prime) + O(\epsilon_t^2)\\
        &\stackrel{(\ref{eq:epsilon_t})}{=}\; \Gamma_t\left(\frac{1}{t}\bT_t\right) + \epsilon_t + O(\epsilon_t^2)\ ,
    \end{align}
    which is a contradiction. 
    This shows that when $a_{t+1}=a^\star$, we have $\frac{1}{t}T_t(a_{t+1})\leq\gamma_{t,a_{t+1}}$.
    \item $a_{t+1}=a_t^{\min}$: Let us assume that $\frac{1}{t}T_t(a_t^{\min}) >\gamma_{t,a_t^{\min}}$. Furthermore, by our sampling strategy, we have
    \begin{align}
        \Gamma_t(\bw_t^\prime)\;<\;\Gamma_t\left(\frac{1}{t}\bT_t\right)\ .
    \end{align}
    Following similar arguments as the case when $a_{t+1}=a^\star$, leveraging the concavity of $\Gamma_t$, we can arrive at a contradiction. Thus, in this case, when $a_{t+1}=a_t^{\min}$, we have $\frac{1}{t}T_t(a_{t+1})\leq \gamma_{t,a_{t+1}}$.

\end{enumerate}
\end{proof}

\begin{lemma}
\label{lemma:con_alloc_4}
There exists a stochastic time $M_{\sf ITCB}$ such that $\E_{\bnu}[M_{\sf ITCB}]<+\infty$, and for all $t>M_{\sf ITCB}$, the ITCB sampling rule provided in~(\ref{eq: sampling rule 2}) with the sequence $r_t = \frac{\epsilon}{t}$ for any $\epsilon\in\R_+$ satisfies
\begin{align}
    \frac{T_t(a_{t+1})}{t} \leq \gamma_{t,a_{t+1}}\ .
\end{align}
\end{lemma}

\begin{proof}
Let us recall that
\begin{align}
    b_t^{\min}\;\triangleq\;\argmin\limits_{i\in[K]\setminus\{a_t^{\sf top}\}}\; \left\{ \min\limits_{x\in I_{t,i}}\left\{\frac{T_t(a_t^{\sf top})}{t}d_{\sf U}(\P_{t,a_t^{\sf top}},x)+\frac{T_t(i)}{t}d_{\sf L}(\P_{t,i},x)\right\} + \frac{\log(T_t(i))}{t}\right\}\ .
\end{align}
Note that for all $t>N_{\bnu}^{\Delta_{\min}/4}$, we have $a_t^{\sf top}=a^\star$. Furthermore, for all $t>\lceil \frac{1}{(K\epsilon^2)}\rceil$ and for all $i\in[K]$, we almost surely have
\begin{align}
    \label{eq:newi_1}
    \frac{\log T_t(i)}{t}\;&\leq\; \frac{\log (\sqrt{t/K}-1)}{t}\\
    &\leq\; \frac{\log (\sqrt{t/K})+1}{t}\\
    \label{eq:newi_2}
    &\leq\; \frac{\sqrt{t/K}}{t}\\
    \label{eq:newi_3}
    &\leq\;\epsilon\ ,
\end{align}
where~(\ref{eq:newi_1}) is a result of the fact that $T_t(i)\geq \sqrt{t/K}-1$ for all $i\in[K]$~\cite[Lemma 4]{mukherjee2022},~(\ref{eq:newi_2}) holds due to the fact that $\log(1+x)\leq x$, and~(\ref{eq:newi_3}) is obtained from the fact that $t>\lceil 1/(K\epsilon^2)\rceil$. Let us define $M_3^\epsilon\triangleq \max\{N_{\bnu}^{\Delta_{\min}/4}, 1/(K\epsilon^2)\rceil\}$. For a sufficiently small $\epsilon\in\R_+$, for any $t>M_3^\epsilon$, we almost surely have
\begin{align}
    b_t^{\min}\;&=\; \argmin\limits_{i\in[K]\setminus\{a^\star\}}\; \left\{ \min\limits_{x\in I_{t,i}}\left\{\frac{T_t(a^\star)}{t}d_{\sf U}(\P_{t,a^\star},x)+\frac{T_t(i)}{i}d_{\sf L}(\P_{t,i},x)\right\}+\epsilon\right\} \\
    & = \; a_t^{\min}\ .
\end{align}
We have the following two cases.
\begin{enumerate}
    \item $a_{t+1}=a^\star:$ Let us assume that $\frac{1}{t}T_t(a^\star)>\gamma_{t,a^\star}+\zeta$. Due to the ITCB sampling strategy in~(\ref{eq: sampling rule 2}), for all $t>M_3^\epsilon$, we have
    \begin{align}
        \Gamma_t(\bw_t^\prime) + \frac{\log(tw_t^\prime(a_t^{\min}))}{t}\;&\geq\; \Gamma_t\left(\frac{1}{t}\bT_t\right) + \frac{\log(T_t(a_t^{\min}))}{t}\ ,
    \end{align}
    or, equivalently 
    \begin{align}    
        \Gamma_t(\bw_t^\prime) - \Gamma_t\left(\frac{1}{t}\bT_t\right)&\geq \frac{1}{t}\log\left (1+\frac{\frac{tr_t}{K-1}}{T_t(a_t^{\min})-\frac{tr_t}{K-1}}\right )\ .
    \end{align}
    This implies that
    \begin{align}
        \Gamma_t(\bw_t^\prime) - \Gamma_t\left(\frac{1}{t}\bT_t\right)&> 0 \ .
        \label{eq:sr2_1}
    \end{align}
    Following the same argument as Lemma~\ref{lemma:con_alloc_3}, (\ref{eq:sr2_1}) 
    implies that $\frac{1}{t}T_t(a^\star)\leq \gamma_{t,a^\star}+\zeta$ for any $\zeta\geq 0$.

    \item $a_{t+1}=a_t^{\min}$: Let us assume that $\frac{1}{t}T_t(a_t^{\min}) > \gamma_{t,a_t^{\min}}$. We will show that this is a contradiction if the condition in the ITCB sampling strategy in~(\ref{eq: sampling rule 2}) holds. Specifically, according to the ITCB sampling strategy, we have
    \begin{align}
        \Gamma_t\left(\frac{1}{t}\bT_t\right) - \Gamma_t(\bw_t^\prime)\;\geq\; -\frac{1}{t}\cdot\underbrace{\log\left (1+\frac{\frac{tr_t}{K-1}}{T_t(a_t^{\min})-\frac{tr_t}{K-1}}\right )}_{\triangleq\xi_t}\ .
        \label{eq:newi_4}
    \end{align}
    We may have the following two cases.
    \begin{itemize}
        \item $\Gamma_t\left(\frac{1}{t}\bT_t\right)- \Gamma_t(\bw_t^\prime)>0$: In this case, following the same line of arguments as in Lemma~\ref{lemma:con_alloc_3} we obtain that $\frac{1}{t}T_t(a_t^{\min})\leq \gamma_{t,a_t^{\min}}$. 
        \item $\Gamma_t\left(\frac{1}{t}\bT_t\right)\in[\Gamma_t(\bw_t^\prime) - \frac{1}{t}\xi_t, \Gamma_t(\bw_t^\prime)]$: Let us define the vector $\be_{i}\triangleq[e(1),\cdots,e(K)]^\top$, where, for any $j\in[K]$ we have defined
        \begin{align}
                &e(j)\triangleq \left\{
        	\begin{array}{ll}
        	-1, & \mbox{if} \;\; j=a^\star\\
        	1 , & \mbox{if}\;\;j\neq a^\star\\
        	\end{array}\right. \ .
        \end{align}
        Leveraging the concavity of $\Gamma_t$, we have
        \begin{align}
            \Gamma_t(\bw_t^\prime)\;\geq\; \Gamma_t\left(\frac{1}{t}\bT_t\right) - \underbrace{\left\langle \nabla\Gamma_t(\bw_t^\prime), \frac{1}{t}\bT_t - \bw_t^\prime\right\rangle}_{<0}\ ,
        \end{align}
        which implies that
        \begin{align}
            \Gamma_t(\bw_t^\prime) - \Gamma_t\left(\frac{1}{t}\bT_t\right)\;&\geq\; \left\lvert\left\langle   \nabla\Gamma_t(\bw_t^\prime), \frac{1}{t}\bT_t-\bw_t^\prime \right\rangle\right\rvert\\
            &=\; r_t\cdot \left\lvert\left\langle   \nabla\Gamma_t(\bw_t^\prime), \be\right\rangle\right\rvert\ .
            \label{eq:newi_5}
        \end{align}
        Combining~(\ref{eq:newi_4}) and~(\ref{eq:newi_5}), we obtain
        \begin{align}
            \xi_t\;&\geq\;tr_t\left\lvert\left\langle   \nabla\Gamma_t(\bw_t^\prime), \be\right\rangle\right\rvert \ .
            \label{eq:newi_10}
        \end{align}
        Next, let us define the set
        \begin{align}
            \mcM_+\;\triangleq\; \left\{ \bw\in\Delta^K : \left\lvert\left\langle \nabla\Gamma_t(\bw),\be\right\rangle\right\rvert>0\right\}\ .
        \end{align}
        It can be readily verified that $\bw_t^\prime\in\mcM_+$. Thus, from~(\ref{eq:newi_10}), we obtain that 
        \begin{align}
            \frac{1}{tr_t}\xi_t\;&\geq\; \inf\limits_{\bw\in\mcM_+}\; \left\lvert \left\langle \nabla\Gamma_t(\bw),\be\right\rangle\right\rvert\ ,
            \label{eq:newi_11}
        \end{align}
        Next, setting $r_t = \frac{\epsilon}{t}$, we obtain
        \begin{align}
            \frac{1}{tr_t}\xi_t\;&=\; \frac{1}{\epsilon}\log \left (1 - \frac{\epsilon}{K-1}\cdot \frac{1}{T_t(a_t^{\min}) - \frac{\epsilon}{K-1}} \right )\\
            &\leq \frac{1}{\epsilon}\underbrace{\log \left (1 - \frac{\epsilon}{K-1}\cdot \frac{1}{(\sqrt{t/K} - 1) - \frac{\epsilon}{K-1}} \right )}_{\triangleq g(t)}\ ,
            \label{eq:new_ITCB1}
        \end{align}
        where~\eqref{eq:new_ITCB1} follows from the property of forced exploration that $T_t(i)\geq \sqrt{t/K}-1$ for all $i\in[K]$~\cite[Lemma 4]{mukherjee2022}. Next, note that for all $t>K$, the function $g(t)$ is a monotonically decreasing function in $t$. Hence, there exists $M_4^\epsilon\in\N$ such that for all $t>M_4^\epsilon$, we have $g(t)\leq \epsilon^2$. Hence, for all $t>M_4^\epsilon$, we have
        \begin{align}
            \frac{1}{tr_t}\xi_t\;&\leq\;\epsilon\ .
        \end{align}
        Furthermore, setting 
        \begin{align}
            \epsilon\;\triangleq\;\inf\limits_{\bw\in\mcM_+}\; \left\lvert \left\langle \nabla\Gamma_t(\bw),\be\right\rangle\right\rvert\ ,
        \end{align}
        it can be readily verified that~(\ref{eq:newi_10}) is a contradiction. This implies that for all $t>M_4^{\epsilon}$, assuming that $\frac{1}{t}T_t(a_t^{\min})>\gamma_{t,a_t^{\min}}$, the ITCB sampling condition~(\ref{eq:newi_4}) does not hold, and hence $a_{t+1}\neq a_t^{\min}$. Finally, defining $M_{\sf ITCB}\triangleq\max\{M_3^\epsilon,M_4^{\epsilon}\}$, it satisfies $\frac{1}{t}T_t(a_t^{\min})\leq \gamma_{t,a_t^{\min}}$.
    \end{itemize}

\end{enumerate}

\end{proof}

\section{Proof of Theorem~\ref{theorem: SC upper bound}}
\label{proof: SC upper bound}

The upper bound on the average sample complexity is obtained by leveraging the convergence of the empirical problem complexity $\Gamma_t\left(\frac{1}{t}\bT_t\right)$ as a result of our sampling strategy, to the true value $\Gamma(\bnu)$, where we have defined $\bT_t\triangleq [T_t(1),\cdots,T_t(K)]$. This is stated in Lemma~\ref{lemma:scub_1}. 

\begin{lemma}
\label{lemma:scub_1}
    Under TCB and ITCB, for any $\epsilon\in\R_+$, there exists $N_{\epsilon}$ such that for all $t\geq N_\epsilon$, we have
    \begin{align}
        \left\lvert \Gamma(\bnu) - \Gamma_t\left(\frac{1}{t}\bT_t\right)\right\rvert\;\leq\;\epsilon\ ,
    \end{align}
   and $\E_{\bnu}[N_{\epsilon}]<+\infty$.
\end{lemma}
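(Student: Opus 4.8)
The plan is to recognize that $\Gamma_t(\tfrac{1}{t}\bT_t)$ is exactly $\Gamma(\bnu,\bw)$ evaluated at the empirical bandit instance and the empirical allocation, and then to push the three convergences already established through the joint continuity of $\Gamma$. Concretely, let $\bnu_t\triangleq[\P_{t,1},\dots,\P_{t,K}]$ denote the bandit instance induced by the current MLEs. By Theorem~\ref{theorem: convergence in mean} there is a finite-mean time after which the estimation error drops below $\Delta_{\min}/2$, which forces $a_t^{\sf top}=a^\star$; for all such $t$ the intervals $I_{t,i}=[\mu_t(i),\mu_t(a^\star)]$ and the distributions appearing in~(\ref{eq:Gamma_t}) coincide exactly with those in the representation of Lemma~\ref{lemma:simplified problem complexity}. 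Hence for such $t$ we have the exact identity $\Gamma_t(\tfrac{1}{t}\bT_t)=\Gamma(\bnu_t,\tfrac{1}{t}\bT_t)$, so it suffices to control $|\Gamma(\bnu,\bw(\bnu))-\Gamma(\bnu_t,\tfrac{1}{t}\bT_t)|$.

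First I would translate convergence in mean into convergence of the empirical instance in the total-variation metric: since the means $\mu_t(i)$ converge to $\mu(i)$ (Theorem~\ref{theorem: convergence in mean}) and, by Assumption~3, each distribution depends continuously on its mean, we obtain $D_{\sf TV}(\bnu_t,\bnu)\to 0$ on a finite-mean time scale. Simultaneously, Theorem~\ref{theorem: convergence in allocation} (or Theorem~\ref{theorem: convergence in allocation 2} for ITCB) gives $\|\tfrac{1}{t}\bT_t-\bw(\bnu)\|\to 0$, again with finite-mean convergence time. By Lemma~\ref{lemma:properties of problem complexity}, $\Gamma$ is jointly continuous in $(\bnu,\bw)$ on $\mcM\times\Delta^K$; because $\Theta$ is compact (Assumption~1) the set $\mcM$ is compact in $(\mcM,D_{\sf TV})$, so $\Gamma$ is in fact \emph{uniformly} continuous there. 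Uniform continuity is what makes the argument quantitative: for the target $\epsilon$ it yields a \emph{deterministic} $\epsilon'>0$, independent of $t$, such that $D_{\sf TV}(\bnu_t,\bnu)<\epsilon'$ together with $\|\tfrac{1}{t}\bT_t-\bw(\bnu)\|<\epsilon'$ forces $|\Gamma(\bnu)-\Gamma(\bnu_t,\tfrac{1}{t}\bT_t)|\le\epsilon$.

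I would then assemble $N_\epsilon$ as the maximum of three finite-mean stopping times: the time after which $a_t^{\sf top}=a^\star$, controlled by $N_{\bnu}^{\Delta_{\min}/2}$; a time $N_{\bnu}^{\epsilon''}$ after which the mean error is small enough to guarantee $D_{\sf TV}(\bnu_t,\bnu)<\epsilon'$; and the allocation-convergence time $N_{\bw}^{\epsilon'}$ from Theorem~\ref{theorem: convergence in allocation}. Each has finite expectation, so $\E_{\bnu}[N_\epsilon]\le\E_{\bnu}[N_{\bnu}^{\Delta_{\min}/2}]+\E_{\bnu}[N_{\bnu}^{\epsilon''}]+\E_{\bnu}[N_{\bw}^{\epsilon'}]<+\infty$, which proves the claim. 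The only genuinely delicate point is the passage from convergence in mean to convergence in $D_{\sf TV}$ with a \emph{deterministic} modulus: this is where I would lean on compactness of $\Theta$ to upgrade the pointwise continuity of the mean-to-distribution map to uniform continuity, ensuring $\epsilon''$ depends only on $\epsilon'$ and not on the random sample path. Everything else is a routine composition of the already-established finite-mean convergence results.
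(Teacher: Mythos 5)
Your proposal follows the same skeleton as the paper's proof: both combine Theorem~\ref{theorem: convergence in mean}, Theorem~\ref{theorem: convergence in allocation} (or~\ref{theorem: convergence in allocation 2}), and the continuity of the problem complexity from Lemma~\ref{lemma:properties of problem complexity}, and both build $N_\epsilon$ as a maximum of finitely many finite-mean times. The one technical divergence is how the two perturbations are handled: where you push the joint perturbation of $(\bnu_t,\tfrac{1}{t}\bT_t)$ through joint continuity of $\Gamma$, the paper splits them --- it first replaces each weight $T_t(i)/t$ by $w_i(\bnu)\pm\epsilon'$ using the monotonicity of $\Gamma_i(\bnu,\bw)$ in each coordinate of $\bw$ (citing \cite[Lemma 2]{russo2016}), obtaining $\Gamma(\bnu_t,\bw(\bnu))+O(\epsilon')$, and only then invokes continuity of $\Gamma(\cdot,\bw(\bnu))$ in the instance to pass from $\bnu_t$ to $\bnu$. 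The monotonicity step yields a quantitative bound without requiring any modulus of joint continuity.

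However, one step of your argument as written is flawed, though reparably so. You claim that $\mcM$ is compact in $(\mcM,D_{\sf TV})$ because $\Theta$ is compact, and you lean on this to upgrade to uniform continuity with a deterministic modulus. Under the paper's literal definition $\mcP(\Omega)=\{\P\in\mcQ(\Omega): m(\P)\in\Theta\}$, this set is far from TV-compact (it contains \emph{every} distribution with mean in $\Theta$ and is not even tight in general); even under the intended single-parameter-family reading, TV-compactness requires the map $\theta\mapsto\P_\theta$ to be TV-continuous, which itself needs an argument (e.g., Scheff\'e's lemma on top of Assumption 3) that you do not supply. The good news is that the uniform continuity you call ``the only genuinely delicate point'' is not needed at all: the modulus of continuity is only ever evaluated at the single \emph{deterministic} point $(\bnu,\bw(\bnu))$, since that is the limit of the random sequence $(\bnu_t,\tfrac{1}{t}\bT_t)$. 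Pointwise (joint) continuity of $\Gamma$ at that fixed point already furnishes a deterministic $\epsilon'$ depending only on $\epsilon$ and $\bnu$, not on the sample path --- this is exactly the shape of the paper's argument, which never invokes compactness of $\mcM$. Dropping the compactness/uniform-continuity detour makes your proof both correct and essentially identical to the paper's.
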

\begin{proof}
    For any $\epsilon^\prime>0$, let us define the time $N_1^{\epsilon^\prime}\triangleq \max\{N_{\bnu}^{\epsilon^\prime}, N_{\bw}^{\epsilon^\prime}\}$. For all $t>N_1^{\epsilon^\prime}$, we have:
    \begin{enumerate}
        \item $\mu_t(i)\in[\mu(i)-\epsilon^\prime,\mu(i)+\epsilon^\prime]$ for every arm $i\in[K]$.
        \item $\frac{1}{t}T_t(i)\in[w_i(\bnu)-\epsilon^\prime,w_i(\bnu)+\epsilon^\prime]$ for every arm $i\in[K]$.
        \item Let $\bnu_t\triangleq[\P_{t,1},\cdots,\P_{t,K}]$ denote the bandit instance characterized by the mean values $m(\bnu_t) = [\mu_t(1),\cdots,\mu_t(K)]$. As a result of the continuity of $\Gamma(\bnu,\bw)$ in its first argument established in Lemma~\ref{lemma:properties of problem complexity}, for all $t>N_1^{\epsilon^\prime}$, there exists $\epsilon^{\prime\prime}$ such that we have $|\Gamma(\bnu,\bw)-\Gamma(\bnu_t,\bw)|<\epsilon^{\prime\prime}$.
    \end{enumerate}
    Thus, for all $t>N_1^{\epsilon^\prime}$, we have
    \begin{align}
    \label{eq:scub_1}
        \Gamma_t\left(\frac{1}{t}\bT\right)\;&=\; \min\limits_{i\neq a^\star}\min\limits_{x\in I_{t,i}}\;\left\{ \frac{T_t(a^\star)}{t}d_{\sf U}(\P_{t,a^\star},x) + \frac{T_t(i)}{t}d_{\sf L}(\P_{t,i},x)\right\}\\
        \label{eq:scub_2_1}
        &\leq\; \min\limits_{i\neq a^\star}\; \min\limits_{x\in I_{t,i}}\;\left\{ \left(w_{a^\star}(\bnu) + \epsilon^\prime\right) d_{\sf U}(\P_{t,a^\star},x) + \frac{T_t(i)}{t}d_{\sf L}(\P_{t,i},x)\right \}\\
        \label{eq:scub_2_2}
        &\leq\; \min\limits_{i\neq a^\star}\; \min\limits_{x\in I_{t,i}}\;\left\{ \left(w_{a^\star}(\bnu) + \epsilon^\prime d_{\sf U}(\P_{t,a^\star},x)\right) + \left(w_i(\bnu) + \epsilon^\prime\right)d_{\sf L}(\P_{t,i},x)\right \}\\
        &=\;\Gamma(\bnu_t,\bw(\bnu)) + O(\epsilon^\prime)\\
        &\leq\;\Gamma(\bnu) + \underbrace{\epsilon^{\prime\prime} + O(\epsilon^\prime)}_{\triangleq\;\epsilon}\ ,
        \label{eq:scub_3}
    \end{align}
    where~(\ref{eq:scub_1}) follows from the fact that $a_t^{\sf top}=a^\star$ for all $t>N_1^{\epsilon^\prime}$,~(\ref{eq:scub_2_1}) and~(\ref{eq:scub_2_2}) follow from the fact that $\Gamma_i(\bnu,\bw)$ is an increasing function in each coordinate $w_i$, keeping the other coordinates fixed~\cite[Lemma 2]{russo2016}, and~(\ref{eq:scub_3}) follows from the fact that $t>N_1^{\epsilon^\prime}$. Following similar steps as~(\ref{eq:scub_1})-(\ref{eq:scub_3}), we can show that 
    \begin{align}
        \Gamma_t\left(\frac{1}{t}\bT_t\right) \geq \Gamma(\bnu) - \epsilon\ .
    \end{align}
    The proof is completed by setting $N_{\epsilon}\triangleq N_1^{\epsilon^\prime}$.
\end{proof}

\noindent Next, we investigate the relationship between the empirical problem complexity $\Gamma_t\left(\frac{1}{t}\bT_t\right)$ and the normalized test statistic $\frac{1}{t}\Lambda_t(a_t^{\sf top},a_t^{\sf ch})$. For this, we leverage the convergence of the test statistic to the log-likelihood ratio. For any arm $i\in[K]$ and parameters $\theta,\theta^\prime\in\Theta$, let us define
\begin{align}
    {\sf nLLR}_t(i,\theta,\theta^\prime)\;\triangleq\; \frac{1}{T_t(i)}\sum\limits_{s\in[t]: A_s = i} \log\frac{\pi_i(X_s\med\theta)}{\pi_i(X_s\med\theta^\prime)}\ .
\end{align}
Furthermore, for any $\epsilon\in\R_+$ let us define the time instant
\begin{align}
    N_{\sf KL}^\epsilon(i,\theta,\theta^\prime)\;\triangleq\;\sup\left\{ t\in\N : \left\lvert {\sf nLLR}_t(i,\theta,\theta^\prime) - d_i(\theta\|\theta^\prime)\right\rvert>\epsilon\right\}\ .
\end{align}
Together with Assumption 8, and following the same line of arguments as Theorem~\ref{theorem: convergence in mean}, we have
\begin{align}
    \E_{\bnu}\left[N_{\sf KL}^\epsilon(i,\theta,\theta^\prime)\right]\;<\;+\infty\ .
\end{align}
Let us define 
\begin{align}
   M_{\sf KL}^\epsilon & \triangleq\max_{i\in[K]}N_{\sf KL}^\epsilon(i,\mu_t(i),\tilde\mu_t(i))\ ,\\
   \bar M_{\sf KL}^\epsilon & \triangleq\max_{i\in[K]}N_{\sf KL}^\epsilon(i,\bar\mu_t(i),\tilde\mu_t(i))\ , \\
   \bar N_{\sf KL}^\epsilon & \triangleq \max\{M_{\sf KL}^\epsilon,\bar M_{\sf KL}^\epsilon, N_{\bnu}^{\Delta_{\min}/4}\}\ .
\end{align}
For all $t>\bar N_{\sf KL}^\epsilon$, we have
\begin{align}
    \frac{1}{t}\Lambda_t(a_t^{\sf top},a_t^{\sf ch})\;&=\;\min\limits_{i\neq a^\star}\frac{1}{t}\Lambda_t(a^\star,i)\\
    &=\; \min\limits_{i\neq a^\star}\left\{ \frac{T_t(a^\star)}{t} d_{a^\star}(\mu_t(a^\star)\|\tilde\mu_t(i)) + \frac{T_t(i)}{t}d_i(\mu_t(i),\tilde\mu_t(i))\right\}\\
    &\geq\; \min\limits_{i\neq a^\star}\left\{ \frac{T_t(a^\star)}{t}\left( {\sf nLLR}_t(a^\star,\mu_t(a^\star),\tilde\mu_t(i))-\epsilon\right) + \frac{T_t(i)}{t}\left({\sf nLLR}_t(i,\mu_t(i),\tilde\mu_t(i))-\epsilon \right)\right\}\\
    &\geq\; \min\limits_{i\neq a^\star}\left\{ \frac{T_t(a^\star)}{t}{\sf nLLR}_t(a^\star,\mu_t(a^\star),\tilde\mu_t(i)) + \frac{T_t(i)}{t}{\sf nLLR}_t(i,\mu_t(i),\tilde\mu_t(i))\right\} - \epsilon\\
    &\geq\; \min\limits_{i\neq a^\star}\left\{ \frac{T_t(a^\star)}{t}{\sf nLLR}_t(a^\star,\bar\mu_t(a^\star),\tilde\mu_t(i)) + \frac{T_t(i)}{t}{\sf nLLR}_t(i,\bar\mu_t(i),\tilde\mu_t(i))\right\} - \epsilon\\
    &\geq\;\min\limits_{i\neq a^\star}\left\{ \frac{T_t(a^\star)}{t}\left( d_{a^\star}(\bar\mu_t(a^\star)\|\tilde\mu_t(i))-\epsilon\right) + \frac{T_t(i)}{t}\left( d_i(\bar\mu_t(i)\|\tilde\mu_t(i))-\epsilon\right)\right\} - \epsilon\\
    &\geq\;\min\limits_{i\neq a^\star}\left\{ \frac{T_t(a^\star)}{t} d_{a^\star}(\bar\mu_t(a^\star)\|\tilde\mu_t(i)) + \frac{T_t(i)}{t} d_i(\bar\mu_t(i)\|\tilde\mu_t(i))\right\} - 2\epsilon\\
    &\geq\;\min\limits_{i\neq a^\star}\min\limits_{x\in\R}\left\{ \frac{T_t(a^\star)}{t} d_{a^\star}(\bar\mu_t(a^\star)\|x) + \frac{T_t(i)}{t} d_i(\bar\mu_t(i)\|x)\right\} - 2\epsilon\\
    &\geq\;\min\limits_{i\neq a^\star}\min\limits_{x\in I_{t,i}}\left\{ \frac{T_t(a^\star)}{t} d_{\sf U}(\P_{t,a^\star},x) + \frac{T_t(i)}{t} d_{\sf L}(\P_{t,i},x)\right\} - 2\epsilon\\
    &=\;\Gamma_t\left( \frac{1}{t}\bT_t\right) - 2\epsilon\ .
    \label{eq:Gamma_and_lambda}
\end{align}

Next, we proceed with the proof of Theorem~\ref{theorem: SC upper bound}. 
Expanding the time instance just before stopping, we have
\begin{align}
    \tau - 1 \;&=\; (\tau - 1)\mathds{1}_{\{\tau-1\leq N_2^\epsilon\}} + (\tau-1)\mathds{1}_{\{\tau-1>N_2^\epsilon\}}\\
    &\leq\; N_2^\epsilon + (\tau-1)\mathds{1}_{\{\tau-1>N_2^\epsilon\}}\ ,
    \label{eq:th_SCUB_N2e}
\end{align}
where we have defined $N_2^\epsilon\triangleq\max\{N_{\epsilon},\bar N_{\sf KL}^\epsilon, N_2\}$, and $N_2$ will be specified later. Leveraging Lemma~\ref{lemma:scub_1}, along with the fact that at $\tau-1$, $\Lambda_{\tau-1}(a_{\tau-1}^{\sf top},a_{\tau-1}^{\sf ch})\leq \beta_{\tau-1}(\delta)$, if $\tau-1>N_2^\epsilon$ we have
\begin{align}
    \Gamma({\bnu}) - 3\epsilon\;\leq\;\Gamma_{\tau-1}\left(\frac{1}{\tau-1}\bT_{\tau-1}\right) -2\epsilon\;\stackrel{\eqref{eq:Gamma_and_lambda}}{\leq}\; \frac{\Lambda_{\tau-1}(a_{\tau-1}^{\sf top},a_{\tau-1}^{\sf ch})}{\tau-1}\;\leq\; \frac{\beta_{\tau-1}(\delta)}{\tau-1}\ .
\end{align}
Furthermore, leveraging the stopping threshold $\beta_t(\delta)$ defined in Theorem~\ref{theorem: delta-PAC}, if $\tau-1>N_2^\epsilon$ we have
\begin{align}
    (\tau-1)\Big( \Gamma(\bnu) - 3\epsilon\Big)\;&\leq\; \max\limits_{i\in[K]}\log\mcI_i(\mu_{\tau-1}(i)) - 2\cdot\min\limits_{i\in[K]} W_{\tau -1}(i)\nonumber\\
    &\qquad + (\tau-1)\cdot\max\limits_{i\in[K]}\left\{ \max\{d_i(\mu_{\tau-1}(i)\|\mu_{\tau-1}(i)-\varepsilon),d_i(\mu_{\tau-1}(i)\|\mu_{\tau-1}(i)+\varepsilon)\}\right\}\nonumber\\
    &\qquad + 2\log\frac{|\Theta|}{\sqrt{2\pi}} + \log\frac{\tau-1}{2} + \log\frac{K-1}{\delta}\ .
    \label{eq:th_SCUB_1}
\end{align}
Let us define
\begin{align}
 \mcI_{\max}\;\triangleq\;\max\limits_{i\in[K]}\max\limits_{\theta\in\Theta}\mcI_i(\theta)\ .
    \label{eq:th_SCUB_2}
\end{align}
Furthermore, for any $t\in\N$ and for all $i\in[K]$ we have
\begin{align}
    W_t(\varepsilon,i)\;&=\; \displaystyle\bigintsss_{\Omega^{\otimes T_t(i)}} \log\left( 1-2Q\left( \varepsilon\sqrt{\bar V_t(i)}\right)\right)\displaystyle\prod\limits_{s\in[t]:A_s = i} \pi_i(X_s\med\mu_t(i))\diff \mcX_t^i\\
    \label{eq:th1_SCUB_2b}
    \log\left( 1-2Q\left( \varepsilon\sqrt{T_t(i)\mcI_i(\mu_t(i))}\right)\right)\;&\geq\;\log\left( 1-2Q\left( \varepsilon\sigma\sqrt{T_t(i)}\right)\right)\\
    &\geq\;\log\left( 1-2Q\left( \varepsilon\sigma\sqrt{\sqrt{t/K}-1}\right)\right)\ ,
    \label{eq:th1_SCUB_2a}
\end{align}
where~\eqref{eq:th1_SCUB_2b} is obtained using Assumption 7, and~\eqref{eq:th1_SCUB_2a} is follows from the explicit exploration property of the TCB and ITCB algorithms. Hence, defining
\begin{align}
    N_2\;\triangleq\; K\left(\left( \frac{1}{\varepsilon\sigma}Q^{-1}\left(\frac{1}{4}\right)\right)^2 + 1\right)^2\ ,
\end{align}
for all $t>N_2$ and for all $i\in[K]$ we have
\begin{align}
    -\log\left( 1-2Q\left( \varepsilon\sigma\sqrt{T_t(i)}\right)\right)\;&\leq\;\log 2\ .
    \label{eq:th_SCUB_3}
\end{align}
Next, note that as a result of Assumption~5, we have
\begin{align}
    \max\limits_{i\in[K]}\left\{ \max\{d_i(\mu_{\tau-1}(i)\|\mu_{\tau-1}(i)-\varepsilon),d_i(\mu_{\tau-1}(i)\|\mu_{\tau-1}(i)+\varepsilon)\}\right\}\;\leq\; \zeta(\varepsilon)\ ,
    \label{eq:th_SCUB_4}
\end{align}
where $\zeta(\varepsilon)\in\R_+$ can be made arbitrarily small by appropriately choosing $\varepsilon$. Hence, using~\eqref{eq:th_SCUB_2},~\eqref{eq:th_SCUB_3} and~\eqref{eq:th_SCUB_4}, for $\tau_1>N_2^\epsilon$ we can rearrange~\eqref{eq:th_SCUB_1} as follows.
\begin{align}
    (\tau-1)\Big( \Gamma(\bnu) - 3\epsilon - \zeta(\varepsilon)\Big)\;&\leq\; \log\mcI_{\max} + 2\log\sqrt{\frac{2}{\pi}}|\Theta| + \log\frac{\tau-1}{2} + \log\frac{K-1}{\delta}\ .
    \label{eq:th_SCUB_5}
\end{align}
To upper bound~\eqref{eq:th_SCUB_5}, we leverage~\cite[Lemma 18]{pmlr-v49-garivier16a} which gives
\begin{align}
    \tau\;\stackrel{\eqref{eq:th_SCUB_N2e}}{\leq}\; N_2^\epsilon + \frac{1}{\Big(\Gamma(\bnu) - \epsilon - \zeta(\varepsilon)\Big)}\cdot\left( \log\frac{(K-1)\mcI_{\max}|\Theta|^2\e}{\Big(\Gamma(\bnu) - \epsilon - \zeta(\varepsilon)\Big)\delta} + \log\log \frac{(K-1)\mcI_{\max}|\Theta|^2}{\Big(\Gamma(\bnu) - \epsilon - \zeta(\varepsilon)\Big)}\right) + 1\ .
\end{align}
Next, taking expectation on both sides, dividing by $\log(1/\delta)$ and taking the limit of $\delta\rightarrow 0$, we have
\begin{align}
    \lim\limits_{\delta\rightarrow 0}\;\frac{\E_{\bnu}[\tau]}{\log(1/\delta)}\;\leq\; \frac{1}{\Big(\Gamma(\bnu) - 3\epsilon - \zeta(\varepsilon)\Big)}\ .
    \label{eq:scub_6}
\end{align}
Taking infimum with respect to $\epsilon$ in~\eqref{eq:scub_6}, we have
\begin{align}
    \lim\limits_{\delta\rightarrow 0}\;\frac{\E_{\bnu}[\tau]}{\log(1/\delta)}\;&\leq\; \frac{1}{\Big(\Gamma(\bnu) - \zeta(\varepsilon)\Big)}\\
    &=\; \frac{1+\alpha}{\Gamma(\bnu)}\ ,
\end{align}
where we have defined $\alpha\triangleq \zeta(\varepsilon)/(\Gamma(\bnu)-\zeta(\varepsilon))$, and $\alpha$ can be made arbitrarily small by choosing a sufficiently small $\varepsilon$. This completes the proof.

\bibliographystyle{IEEEbib}
\bibliography{BAIRef}

\end{document}